\documentclass[letterpaper]{article} 
\usepackage{aaai2027}  
\nocopyright  
\usepackage[hyphens]{url}  
\usepackage{graphicx} 
\usepackage{natbib}  
\usepackage{caption} 
\usepackage{booktabs}
\usepackage{multirow}
\usepackage{array}
\usepackage{tabularx}
\usepackage{amsmath,amssymb,amsfonts,amsthm}
\usepackage{mathtools}
\usepackage{listings}
\usepackage{tikz}
\usetikzlibrary{arrows.meta,backgrounds,calc,fit,positioning}

\definecolor{CodeKeyword}{RGB}{0,76,153}
\definecolor{CodeString}{RGB}{120,59,0}
\definecolor{CodeComment}{gray}{0.35}
\definecolor{CodeNumber}{gray}{0.35}
\definecolor{CodeRule}{gray}{0.55}
\definecolor{TransportInk}{RGB}{36,45,58}
\definecolor{TransportBlue}{RGB}{32,102,170}
\definecolor{TransportTeal}{RGB}{0,128,145}
\definecolor{TransportGreen}{RGB}{51,125,79}
\definecolor{TransportAmber}{RGB}{181,120,22}
\definecolor{TransportRed}{RGB}{172,63,55}
\definecolor{TransportPanel}{RGB}{246,248,251}

\lstdefinestyle{pythoncompact}{
  language=Python,
  basicstyle=\ttfamily\scriptsize,
  keywordstyle=\color{CodeKeyword},
  stringstyle=\color{CodeString},
  commentstyle=\color{CodeComment},
  numbers=left,
  stepnumber=1,
  numberstyle=\tiny\ttfamily\color{CodeNumber},
  numbersep=6pt,
  numberblanklines=false,
  frame=tb,
  framerule=0.4pt,
  rulecolor=\color{CodeRule},
  columns=fullflexible,
  keepspaces=true,
  showstringspaces=false,
  tabsize=2,
  breaklines=true,
  breakatwhitespace=true,
  xleftmargin=2.1em,
  framexleftmargin=1.9em,
  captionpos=t,
  aboveskip=0pt,
  belowskip=0pt
}

\title{Z-Loss Backward Geometry in Dense Output Heads and Sparse Routers}
\author{Bum Jun Kim\thanks{Corresponding author}}
\affiliations{
    The University of Tokyo\\
    bumjun.kim@weblab.t.u-tokyo.ac.jp
}

\newcommand{\R}{\mathbb{R}}
\newcommand{\E}{\mathbb{E}}
\newcommand{\softmax}{\operatorname{softmax}}

\newcommand{\sg}{\operatorname{sg}}
\newcommand{\vecop}{\operatorname{vec}}

\newcommand{\norm}[1]{\left\lVert #1 \right\rVert}
\newcommand{\inner}[2]{\left\langle #1,#2 \right\rangle}
\newcommand{\Eqref}[1]{Eq.~\ref{#1}}
\newcolumntype{Y}{>{\raggedright\arraybackslash}X}

\theoremstyle{plain}
\newtheorem{proposition}{Proposition}
\newtheorem{corollary}{Corollary}
\theoremstyle{remark}
\newtheorem{remark}{Remark}
\theoremstyle{plain}

\begin{document}
\maketitle

\begin{abstract}
	Z-loss has been widely applied to the logits of language-model output heads and sparse mixture-of-experts (MoE) routers. Z-loss constrains the softmax log-normalizers of these output heads and routers, thereby limiting large-logit excursions, reducing finite-precision roundoff exposure, and avoiding training-loss divergence. These use cases arise in modern Transformer settings where large-vocabulary softmax heads, top-$k$ routing, fused losses, and mixed-precision optimizers interact. Z-loss has typically been understood only as a scalar penalty on the log-normalizer. This paper instead analyzes Z-loss from a backward-pass perspective, focusing on the gradients produced by the Z-loss penalty. The logit-space gradient, which we call the backward source, is injected at the logit boundary of the Z-loss branch of backpropagation; consequently, the backward source's effect depends on the architecture and implementation through which the gradient is transported. We develop a backward-transport view for Z-loss that separates the source's scalar amplitude and softmax shape from the transport factors. These factors include common-shift coordinates, tied-embedding pathways, output-to-hidden gain, fused-loss source consistency, optimizer-facing updates, and top-$k$ router reduction scale. These diagnostics show that nearly identical forward Z-loss values can coexist with distinct logit-space Z-loss gradients and, after architectural and optimizer transport, distinct parameter updates. The transport diagnostics also explain why raw-logit Z-loss can reduce scalar tails without changing output-to-hidden gain and why active-route reductions alter the effective router coefficient. Across evaluations of models in the GPT-2 and Pythia families on WikiText-103 and FineWeb-Edu, architecture-aware variants reduce backward-geometry tails while maintaining comparable validation perplexity in low-coefficient regimes. The empirical suite includes continued pretraining, mixed-precision and fused-kernel audits, Adam-state stress tests, and end-to-end MoE training. In stress regimes, the same variants expose explicit trade-offs between validation quality and both update-tail behavior and mixed-precision headroom. Together, these results yield a transport-aware framework for measuring, reporting, and intervening on the Z-loss logit-gradient source, its transport path, and the resulting update tails.
\end{abstract}

\section{Introduction}

Modern Transformer training often includes auxiliary loss terms intended to constrain final-softmax and router log-normalizers. In dense output heads, the motivation is to limit output-logit or common-shift excursions and preserve finite-precision numerical headroom. In sparse routers, the motivation also includes reducing roundoff-sensitive routing and training-loss divergence. A prominent example is the log-normalizer penalty commonly called Z-loss, which has been widely used in modern Transformer training. Mesh TensorFlow and Pathways Language Model (PaLM) apply Z-loss to final-softmax logits \citep{shazeer2018mesh,chowdhery2023palm}. PaLM reports that keeping the softmax log-normalizer near the target increases training stability \citep{chowdhery2023palm}. Prior work accordingly uses logit-control methods as training-stability measures. Router Z-loss keeps gating logits small, reduces low-precision roundoff error, and prevents training-loss divergence \citep{zoph2022stmoe}, while output-embedding centering removes common-shift degrees of freedom associated with output-logit divergence \citep{oec2026}. Despite the broad use of Z-loss, prior work has typically presented Z-loss only as a scalar penalty on the log-normalizer.

This paper formulates Z-loss as architecture-dependent backward transport and uses this gradient perspective to analyze and diagnose how Z-loss fundamentally affects training behavior. We develop diagnostics for common backward-transport phenomena involving scalar source amplitude, source shape, output-to-hidden gain, pathway coupling, and router effective scale. Using the standard shift invariance of softmax, we derive common-shift sensitivity of Z-loss and output-head centering that preserves cross-entropy (CE). We also derive tied-embedding gradient decomposition and top-$k$ router scale distortion. Our empirical evaluation uses pretrained Transformer language models \citep{vaswani2017attention} and text from WikiText-103 and FineWeb-Edu. The model set includes four models from the Generative Pre-trained Transformer 2 (GPT-2) family: GPT-2, DistilGPT-2, GPT-2 Medium, and GPT-2 Large. The same framework is tested with matched dense continued pretraining, effectively unclipped high-coefficient stress sweeps, static mixed-precision endpoints, low-rank output-to-hidden gain spectral audits, optimizer-facing update audits, and end-to-end top-$k$ mixture-of-experts (MoE) language-model training. The implementation study additionally audits a row-wise fused CE+Z-loss graphics processing unit (GPU) kernel implemented in \mbox{Triton}, a language and a compiler for writing tiled GPU kernels \citep{tillet2019triton}.

\paragraph{Contributions.} Our primary contribution is a backward-transport framework that treats the Z-loss logit gradient as a source whose optimizer-facing effect depends on architecture and implementation, rather than on the scalar penalty alone. We derive diagnostics for common shift, tied pathways, output-to-hidden gain, fused-source consistency, and router reduction scale, together with coordinate-specific probes and interventions: centering, factorization, gain-aware weighting, source-consistent computation, and scale matching. Matched pretrained diagnostics, continued pretraining, fused-loss audits, stress endpoints, and MoE training show that these coordinates can behave differently. Low-coefficient settings can preserve validation perplexity (PPL) while stress settings reveal model- and regime-dependent trade-offs among validation quality, update tails, and mixed-precision headroom. The resulting audit protocol therefore selects coefficients and interventions based jointly on transport diagnostics and validation quality instead of prescribing a universally optimal Z-loss variant. Figure~\ref{fig:transport} summarizes the framework.

This framing also identifies a contrast with the usual recipe-level interpretation. In that interpretation, Z-loss is often treated as a scalar auxiliary term whose role is to make $\log Z$ smaller and therefore improve numerical stability. Nevertheless, lowering the scalar log-normalizer tail is not equivalent to lowering the transported update tail. Raw-logit Z-loss can make the reported Z-loss curve look successful while leaving output-to-hidden gain, tied-path coupling, adaptive moment estimation (Adam) state pressure, or static loss-scale headroom essentially unchanged or worse. Conversely, centering, factorization, gain-aware weighting, and router scale matching are not interchangeable ways to reduce one scalar; these methods are interventions on different coordinates of the same backward source.

\begin{figure}[t!]
	\centering
	\resizebox{\linewidth}{!}{%
		\begin{tikzpicture}[
			font=\sffamily\scriptsize,
			>=Latex,
			node/.style={rounded corners=4pt, line width=0.55pt, align=center,
					inner sep=3.2pt, text=TransportInk},
			scalar/.style={node, draw=TransportBlue!58, fill=TransportBlue!4,
					minimum width=4.82cm, minimum height=0.68cm},
			source/.style={node, draw=TransportRed!62, fill=TransportRed!5,
					minimum width=5.34cm, minimum height=0.88cm},
			path/.style={node, minimum width=2.66cm, minimum height=0.98cm,
					text width=2.46cm},
			update/.style={node, draw=TransportInk!60, fill=TransportPanel,
					minimum width=5.74cm, minimum height=0.82cm, text width=5.50cm},
			arrow/.style={-{Latex[length=1.75mm,width=1.22mm]}, line width=0.52pt,
			shorten >=1.8pt, shorten <=1.4pt},
			biarrow/.style={{Latex[length=1.45mm,width=1.02mm]}-{Latex[length=1.45mm,width=1.02mm]},
			line width=0.52pt, shorten >=1.8pt, shorten <=1.4pt},
			rail/.style={line width=0.42pt, draw=TransportInk!18},
			junction/.style={circle, inner sep=0pt, minimum size=1.7pt},
			tag/.style={font=\sffamily\tiny, text=TransportInk!66,
					fill=TransportPanel, inner xsep=3.2pt, inner ysep=1.1pt},
			group/.style={rounded corners=6pt, fill=TransportPanel, draw=TransportInk!12,
					line width=0.5pt}
			]
			\node[scalar] (scalar) at (0,0) {
				Forward scalar\\
				$\mathcal{L}_Z\coloneqq\lambda(\log Z-c)^2$
			};
			\node[source] (source) at (0,-1.25) {
				Z-loss logit gradient, the backward source\\
				$\delta_z^Z=2\lambda(\log Z-c)p$
			};

			\foreach \x/\h/\col in {0.39/0.10/TransportBlue,
					0.60/0.18/TransportTeal,
					0.81/0.32/TransportRed,
					1.02/0.24/TransportAmber,
					1.23/0.14/TransportGreen}
			\draw[line width=0.96pt, draw=\col!55, rounded corners=0.5pt]
			($(source.south west)+(\x,0.13)$) -- ++(0,\h);

			\node[path, draw=TransportTeal!58, fill=TransportTeal!4] (impl) at (-2.32,-3.16) {
				\mbox{Implementation reuse}\\
				\mbox{log-sum-exp stats}\\
				$\Delta_{\rm src},\Delta_\theta$
			};
			\node[path, draw=TransportAmber!62, fill=TransportAmber!5] (shift) at (2.32,-3.16) {
				Common shift\\
				$z=\widetilde z+\mu\mathbf{1}$\\
				remove $2\lambda\mu p$
			};
			\node[path, draw=TransportBlue!58, fill=TransportBlue!4] (gain) at (-2.32,-4.42) {
				Output and tied paths\\
				$\delta_u^Z\coloneqq W_U^\top\delta_z^Z$\\
				tied $E_{\rm out}+E_{\rm in}$
			};
			\node[path, draw=TransportGreen!58, fill=TransportGreen!4] (router) at (2.32,-4.42) {
				Router scale\\
				top-$k$ reductions\\
				$\lambda_R\alpha$ and $N_{\rm tl}\alpha$
			};
			\node[update] (update) at (0,-5.88) {
				Optimizer-facing updates\\
				log-normalizer and gain tails, Adam state, and half-precision headroom
			};

			\coordinate (hubTop) at (0,-2.10);
			\coordinate (railTop) at (0,-2.62);
			\coordinate (railBottom) at (0,-5.18);
			\coordinate (forkImpl) at (0,-3.09);
			\coordinate (forkShift) at (0,-3.23);
			\coordinate (forkGain) at (0,-4.35);
			\coordinate (forkRouter) at (0,-4.49);
			\node[tag] at (0,-2.34) {transport coordinates and diagnostics};

			\draw[arrow, draw=TransportRed!58] (scalar.south) -- (source.north);
			\draw[arrow, draw=TransportRed!54] (source.south) -- (hubTop);
			\draw[rail] (railTop) -- (railBottom);
			\node[junction, fill=TransportInk!26] at (forkImpl) {};
			\node[junction, fill=TransportInk!26] at (forkShift) {};
			\node[junction, fill=TransportInk!26] at (forkGain) {};
			\node[junction, fill=TransportInk!26] at (forkRouter) {};
			\draw[biarrow, draw=TransportTeal!54] (forkImpl) -- ($(impl.east)+(0,0.07)$);
			\draw[biarrow, draw=TransportAmber!56] (forkShift) -- ($(shift.west)+(0,-0.07)$);
			\draw[biarrow, draw=TransportBlue!54] (forkGain) -- ($(gain.east)+(0,0.07)$);
			\draw[biarrow, draw=TransportGreen!54] (forkRouter) -- ($(router.west)+(0,-0.07)$);
			\draw[arrow, draw=TransportInk!48] (railBottom) -- (update.north);

			\begin{scope}[on background layer]
				\node[group, inner sep=5.0pt,
					fit=(scalar) (source) (impl) (shift) (gain) (router) (update)] {};
				\node[group, fill=TransportPanel, inner xsep=5.5pt, inner ysep=8.6pt,
					fit=(impl) (shift) (gain) (router)] {};
			\end{scope}
		\end{tikzpicture}%
	}
	\caption{Backward-transport view of Z-loss. The scalar penalty creates the logit gradient $\delta_z^Z$, termed the backward source. Architecture and implementation determine whether this vector is preserved, rescaled, amplified, or coupled into shared parameters before the transported source reaches optimizer-facing updates.}
	\label{fig:transport}
\end{figure}

\section{Backward Transport View}

This section develops the backward-transport view by treating the Z-loss logit gradient as the source injected into backpropagation. We first specify the primary language-modeling objective and auxiliary Z-loss, then separate source construction from architectural transport, identify the coordinates and pathways that modify the backward source, and use these results to motivate the diagnostics and experiments that follow. All proofs and more detailed theoretical developments are provided in the Appendix.

\subsection{Language-Modeling Objective and Z-Loss Setup}

Z-loss is added to the standard language-modeling loss. For an autoregressive token sequence $x_{1:T}$, the primary objective is the CE for next-token prediction,
\begin{align*}
	\mathcal L_{\mathrm{CE}}
	=-\frac{1}{T-1}\sum_{t=1}^{T-1}
	\log p_\theta(x_{t+1}\mid x_{\le t}).
\end{align*}
Applied to the deployed-logit log-normalizer, the Z-loss term is added as an auxiliary regularizer: $\mathcal L_{\mathrm{total}}=\mathcal L_{\mathrm{CE}}+\mathcal L_Z$. This setup also delineates the scope of our stability claims: Z-loss directly constrains that log-normalizer rather than output-head precision or representation geometry. Smaller logit excursions may reduce finite-precision roundoff exposure, but any resulting changes in output-embedding geometry, numerical headroom, or optimizer-facing updates are downstream effects that must be evaluated separately.

The penalty studied here has the form
\begin{equation}
	\begin{aligned}
		\mathcal{L}_Z & \coloneqq \lambda(\log Z-c)^2,    &
		Z             & \coloneqq \sum_{i=1}^{V}\exp z_i, &
		\lambda       & \ge 0,
	\end{aligned}
	\label{eq:zloss}
\end{equation}
for a softmax-axis logit vector $z\in\R^V$, log-normalizer target $c$, and coefficient $\lambda$. The target $c$ is the zero-penalty value of $\log Z$, and $V$ denotes the size of the relevant softmax axis. Prior final-softmax and router formulations commonly use $\lambda(\log Z)^2$, corresponding to $c=0$ \citep{shazeer2018mesh,chowdhery2023palm,zoph2022stmoe}. We instead study the target-shifted generalization in \Eqref{eq:zloss} and use $c=\log V$ unless stated otherwise; $c$ is fixed rather than tuned, while coefficient sweeps vary $\lambda$. This choice yields zero violation for the zero-logit uniform vector. For raw logits, setting $c=\log V$ selects a common-shift target, whereas for centered logits, the same setting defines the uniform reference for the remaining relative-logit coordinate. Accordingly, the Standard Z-loss rows in our tables apply \Eqref{eq:zloss} to raw logits with $c=\log V$; conventional formulations commonly use $c=0$.

This scalar definition of Z-loss in \Eqref{eq:zloss} specifies the forward loss value. However, the scalar definition alone cannot determine the resulting gradient-based parameter update. The update also depends on how the Z-loss logit gradient, which we term the backward source in the next subsection, is transported through the architecture and training implementation. A scalar audit can therefore certify the quantity being added to the objective while missing the vector actually injected into training. This distinction matters precisely in the finite-precision and training-divergence settings where Z-loss is used. The reported penalty can remain unchanged across output-head gauges, tied embeddings, fused low-precision losses, adaptive optimizers, and top-$k$ router reductions, even as these factors change where and how strongly the logit gradient reaches parameters. A complete analysis must therefore identify this gradient and the transport path through which the gradient acts, in addition to measuring the Z-loss value.

\subsection{Z-Loss Logit Gradient and Transport}

Let $z\in\R^V$ be the output-logit vector, $p\coloneqq\softmax(z)$, and $Z\coloneqq\sum_i e^{z_i}$. Differentiating \Eqref{eq:zloss} gives
\begin{equation}
	\delta_z^Z
	\coloneqq
	\frac{\partial \mathcal{L}_Z}{\partial z}
	=
	2\lambda(\log Z-c)p .
	\label{eq:source}
\end{equation}
Throughout, source is shorthand for this Z-loss logit gradient, equivalently the logit-space adjoint $\delta_z^Z$. The logit-space adjoint is a covector at the logit boundary. The component $\delta_{z,i}^Z\coloneqq\partial\mathcal L_Z/\partial z_i$ represents the local sensitivity to logit $z_i$. The term source reflects the adjoint's role as the starting backward signal that the Z-loss branch of reverse-mode differentiation transports toward hidden states and parameters. The adjoint is neither a data source nor a parameter gradient. In a dense output head, the source has one entry per vocabulary item for each token; in an MoE router, the source has one entry per expert for each token-layer routing decision. Unless explicitly qualified as a total CE+Z source, source refers only to the auxiliary Z-loss contribution; the CE logit gradient is the separate vector $p-y$.

The source has two separable factors. The first is a signed scalar coefficient $2\lambda(\log Z-c)$, whose absolute value $2\lambda|\log Z-c|$ we call the scalar source amplitude. The second is the softmax-shaped vector $p$, which we call the source shape. Consequently, the full Euclidean source magnitude is $\norm{\delta_z^Z}_2=2\lambda|\log Z-c|\norm{p}_2$. For fixed $\lambda>0$, this identity can also be written as
\begin{align*}
	\norm{\delta_z^Z}_2
	=2\sqrt{\lambda\mathcal L_Z}\norm{p}_2,
\end{align*}
so the exact per-instance source vanishes with the scalar penalty. The scalar alone nevertheless leaves the source sign and softmax shape unspecified and does not determine amplification through the logit Jacobian. When discussing gain, we view source shape through both the concentration $\norm{p}_2$ and the normalized direction $p/\norm{p}_2$. The downstream architecture then determines how this shaped source is transported. For any parameter block $\theta$ with logit Jacobian
\begin{align*}
	J_\theta \coloneqq \frac{\partial z}{\partial \vecop(\theta)},
\end{align*}
the Z-loss gradient is
\begin{equation}
	\vecop(\nabla_\theta\mathcal{L}_Z)
	=
	J_\theta^\top \delta_z^Z .
	\label{eq:transport}
\end{equation}
Thus, the backward terminology has a direct operational meaning. The complete causal pipeline can be written compactly as
\begin{equation}
	\begin{aligned}
		\widehat\delta_z^Z
		 & =2\lambda\alpha(\widehat{\log Z}-c)\widehat p,                         \\
		g_\theta^Z
		 & =J_\theta^\top\widehat\delta_z^Z,                                      \\
		(\theta^+,s^+)
		 & =\operatorname{Optimizer}(\theta,s,g_\theta^{\mathrm{CE}}+g_\theta^Z),
	\end{aligned}
	\label{eq:causal_pipeline}
\end{equation}
where hats denote the statistics actually supplied by a fused or low-precision implementation, $\alpha$ is the per-instance reduction weight, and $s$ is optimizer state. Common shift and $c$ affect source amplitude; implementation precision affects the hatted source statistics; top-$k$ reduction affects $\alpha$; tied pathways and output-to-hidden gain enter through $J_\theta^\top$; and the optimizer transforms the total CE+Z gradient in the final line. Source construction, architectural transport, and optimizer transformation are distinct stages rather than interchangeable transport paths. For a single exact softmax instance, $\alpha=1$, $\widehat{\log Z}=\log Z$, and $\widehat p=p$.

In plain gradient descent with step size $\eta$, the Z-loss contribution to the parameter step is
\begin{align*}
	\Delta_Z\vecop(\theta)
	\coloneqq
	-\eta J_\theta^\top \delta_z^Z .
\end{align*}
This transport distinction has three direct consequences. First, implementations that agree on the forward scalar can transport different backward sources and therefore apply different parameter updates. Second, router reductions that use the same nominal $\lambda_R$ can assign different absolute coefficients to each routing decision and different scales relative to the token-layer mean. Finally, when the output head and input embeddings are tied, an auxiliary loss applied at the output logits is transported through both output and input pathways, turning a seemingly output-side regularizer into a coupled update to shared lexical memory.

For an adaptive optimizer, an isolated additive Z-loss step need not be well-defined because the optimizer transforms the total gradient together with the accumulated optimizer state; this optimizer coupling is why \Eqref{eq:causal_pipeline} places $g_\theta^{\mathrm{CE}}+g_\theta^Z$ at the optimizer boundary. Taking norms in \Eqref{eq:transport} gives
\begin{equation}
	\norm{\nabla_\theta\mathcal{L}_Z}_F
	\le
	2\lambda|\log Z-c|
	\norm{J_\theta}_{\mathrm{op}}
	\norm{p}_2 .
	\label{eq:transport_bound}
\end{equation}
\Eqref{eq:transport_bound} is the central diagnostic lens. The magnitude of the transported Z-loss gradient is not determined by the forward scalar alone; the transported-gradient magnitude also depends on source shape and the relevant Jacobian gain.

In matched low-coefficient regimes, the Z-loss gradient can remain small relative to the CE gradient. For a fixed coefficient and a single exact softmax instance, a vanishing scalar Z-loss also implies a vanishing logit-space source. The salient issue is the relationship between scalar aggregates and backward transport: a scalar Z-loss curve may improve while rare instances and architectural or implementation factors produce large pre-clip gradient tails, change Adam state, reduce mixed-precision headroom, or change router update scale. These effects cannot be diagnosed from validation loss and scalar Z-loss alone, but the effects determine what the optimizer actually receives. We therefore treat Z-loss as a quantity to audit and debug at source, transport, and update levels rather than only as a forward regularizer.

A reported Z-loss value specifies the forward penalty, but the transported vector $J_\theta^\top\delta_z^Z$ is what enters the optimizer. Comparable reports therefore need source-shape, transport-path, and optimizer-facing diagnostics rather than scalar curves alone.

The finite-precision rationale is two-sided: log-normalizer control can preserve forward representability, while the resulting backward source can consume numerical headroom after architectural transport.

\subsection{Common-Shift Sensitivity}

We first isolate how a token-wise common logit shift enters both the scalar penalty and the injected logit source. For a one-hot next-token target $y$, let $\ell_{\mathrm{CE}}$ denote the per-token CE loss, in contrast to the sequence-averaged objective $\mathcal L_{\mathrm{CE}}$ above:
\begin{align*}
	\ell_{\mathrm{CE}}(z,y)
	\coloneqq
	-\sum_{i=1}^{V}y_i\log\softmax(z)_i .
\end{align*}
If $z'=z+a\mathbf 1$, then $\softmax(z')=\softmax(z)=p$, and CE is invariant both in value and along the common-shift direction:
\begin{equation}
	\frac{\partial}{\partial a}
	\ell_{\mathrm{CE}}(z+a\mathbf 1,y)
	=\mathbf 1^\top(p-y)=0.
	\label{eq:ce_shift_direction}
\end{equation}
This contrast leads to the following common-shift property of Z-loss.
\begin{proposition}[Z-loss common-shift sensitivity]
	For any fixed $a\in\R$ independent of $z$,
	\begin{align*}
		\mathcal{L}_Z(z+a\mathbf{1})
		=
		\lambda(\log Z(z)+a-c)^2,
	\end{align*}
	and
	\begin{align*}
		\nabla_z\left[\mathcal{L}_Z(z+a\mathbf{1})\right]
		=
		2\lambda(\log Z(z)+a-c)\softmax(z).
	\end{align*}
	Equivalently, the gradient with respect to the shifted logit variable $x$ at $x\coloneqq z+a\mathbf{1}$ has the same displayed value.
\end{proposition}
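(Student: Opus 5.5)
The plan is a direct computation resting on two elementary facts: the log-normalizer shifts additively under a common shift, and softmax is shift-invariant.

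\emph{Value of the penalty.} For fixed $a\in\R$, I will first compute
\begin{align*}
	Z(z+a\mathbf 1)=\sum_{i=1}^V e^{z_i+a}=e^{a}Z(z),
\end{align*}
so $\log Z(z+a\mathbf 1)=\log Z(z)+a$. Substituting into \Eqref{eq:zloss} gives the displayed value $\lambda(\log Z(z)+a-c)^2$ immediately.

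\emph{Gradient in $z$.} I will view $\mathcal L_Z(z+a\mathbf 1)$ as the composition of \Eqref{eq:zloss} with the map $z\mapsto z+a\mathbf 1$. This map has identity Jacobian because $a$ does not depend on $z$, so by the chain rule the gradient in $z$ is the source \Eqref{eq:source} evaluated at $x=z+a\mathbf 1$. That source is
\begin{align*}
	2\lambda(\log Z(x)-c)\softmax(x).
\end{align*}
The first step gives $\log Z(x)=\log Z(z)+a$. Softmax shift invariance, already used for \Eqref{eq:ce_shift_direction}, gives $\softmax(x)=\softmax(z)$. Combining these yields the displayed gradient.

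\emph{Shifted variable $x$.} The same expression is, by construction, the gradient with respect to $x$ at $x=z+a\mathbf 1$. Because the identity Jacobian links the two, this is the same displayed value, which settles the ``equivalently'' clause.

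\emph{Where care is needed.} Nothing here is a real obstacle. The one point to be precise about is that $a$ is a constant independent of $z$, so no additional term $\partial a/\partial z$ appears. The $\partial\log Z/\partial z=p$ step should be stated explicitly when differentiating \Eqref{eq:zloss}.
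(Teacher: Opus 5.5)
Your proposal is correct and is essentially the paper's proof: both rest on $Z(z+a\mathbf{1})=e^{a}Z(z)$ and $\partial\log Z/\partial z_i=\softmax(z)_i$. The only difference is the order of steps. The paper differentiates the simplified expression $\lambda(\log Z(z)+a-c)^2$ directly in $z$, whereas you evaluate the source at $x=z+a\mathbf{1}$ through the identity-Jacobian chain rule and then simplify with shift invariance, which makes the ``equivalently'' clause about the $x$-gradient immediate rather than implicit.
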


By contrast with \Eqref{eq:ce_shift_direction}, summing the components of the source in Proposition~1 gives
\begin{equation}
	\frac{\partial}{\partial a}
	\mathcal L_Z(z+a\mathbf 1)
	=\mathbf 1^\top\delta_{z+a\mathbf 1}^Z
	=2\lambda(\log Z(z)+a-c).
	\label{eq:z_shift_direction}
\end{equation}
The common-shift coordinate is therefore forward-invisible and backward-inactive for CE, but backward-active once Z-loss is added. Thus, monitoring CE alone cannot reveal common-shift drift, whereas Z-loss changes directly with that drift. Although the prediction is unchanged, the altered source is transported to a dense hidden state as $\delta_u^Z=W_U^\top\delta_z^Z$ and can therefore change parameter gradients and optimizer state. Centering removes this forward-invisible but Z-loss-active coordinate before source construction.

\subsection{Centered Output Heads}

Let $W_U\in\R^{V\times d}$ be a bias-free output projection, also called the output head or unembedding matrix, $u\in\R^d$ the final hidden state, and $\bar w\coloneqq V^{-1}\mathbf{1}^\top W_U$. Define
\begin{equation}
	\begin{aligned}
		\widetilde W_U & \coloneqq W_U - \mathbf{1}\bar w,                      \\
		\widetilde z   & \coloneqq \widetilde W_Uu = W_Uu-(\bar w u)\mathbf{1}.
	\end{aligned}
	\label{eq:centered_head}
\end{equation}

The following proposition formalizes the standard shift invariance of softmax for output-head centering \citep{blanchard2021accurate}.

\begin{proposition}[Centering preserves CE]
	For any fixed hidden state $u$ and a bias-free output head, both the token softmax distribution and token CE remain unchanged in real arithmetic when the deployed logits $z=W_Uu$ are replaced by $\widetilde z=\widetilde W_Uu=z-\alpha\mathbf{1}$, where $\alpha\coloneqq\bar w u$.
\end{proposition}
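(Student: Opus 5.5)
The plan is to reduce the claim to the shift invariance of softmax already used in \Eqref{eq:ce_shift_direction}. First I would verify the identity $\widetilde z=z-\alpha\mathbf 1$ from the definition in \Eqref{eq:centered_head}. Since $\widetilde W_U=W_U-\mathbf 1\bar w$ with $\bar w\in\R^{1\times d}$, we get
\begin{align*}
	\widetilde W_U u=W_Uu-\mathbf 1(\bar w u).
\end{align*}
Here $\bar w u$ is a scalar, so this equals $z-\alpha\mathbf 1$ with $\alpha=\bar w u$. The point to stress is that, for a fixed hidden state $u$, $\alpha$ is a single real number shared by all $V$ coordinates. Thus centering acts on this token's logits as a common shift, exactly the coordinate isolated in Proposition~1.

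Next I would show softmax invariance directly. For each $i$,
\begin{align*}
	\softmax(\widetilde z)_i=\frac{e^{z_i-\alpha}}{\sum_j e^{z_j-\alpha}}=\frac{e^{-\alpha}e^{z_i}}{e^{-\alpha}\sum_j e^{z_j}}=\softmax(z)_i,
\end{align*}
where the cancellation is legitimate because $e^{-\alpha}>0$ is finite in real arithmetic. Token CE is then $\ell_{\mathrm{CE}}(\widetilde z,y)=-\sum_i y_i\log\softmax(\widetilde z)_i$, a function of the softmax vector alone. It therefore coincides with $\ell_{\mathrm{CE}}(z,y)$ for every one-hot $y$. As an alternative route I would integrate \Eqref{eq:ce_shift_direction} along $a\in[-\alpha,0]$ to reach the same conclusion, but the direct cancellation is cleaner.

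The only real subtlety is scope rather than computation. The shift $\alpha=\bar w u$ depends on $u$, so for a fixed $u$ it is constant, but it is not a parameter-independent constant. The statement asserts only value-level equality of the distribution and CE at the given $u$, so this dependence does not matter here. I would explicitly note that gradients with respect to $W_U$ or $u$ pass through $\alpha$. Their CE parts still agree because $\mathbf 1^\top(p-y)=0$ annihilates the shift direction, but I would not claim equality of Z-loss gradients.

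I would also note why the result is restricted to real arithmetic and a bias-free head. With a bias $b$, centering $W_U$ alone yields $z-\alpha\mathbf 1$ only if $b$ is also centered, since $b-\bar b\mathbf 1$ is the analogous shift. In floating point, the computed logits and log-sum-exp may differ by roundoff. Neither caveat affects the proof, which is just the two steps above.
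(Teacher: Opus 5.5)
Your argument is correct and is essentially the paper's proof. You expand $\widetilde W_Uu=W_Uu-\mathbf 1(\bar w u)$ to exhibit a token-wise common shift, cancel $e^{-\alpha}$ in each softmax coordinate, and conclude that CE, being the negative log of the target probability, is unchanged.

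One small slip in your bias aside: if the bias is kept as is, $\widetilde W_Uu+b$ is still exactly $z-\alpha\mathbf 1$, and centering $b$ as well gives $z-(\alpha+\bar b)\mathbf 1$. Either way CE is preserved, so bias-freeness is not needed for CE preservation. It only ensures that $\bar w u$ is the full common shift, so that $\mathbf 1^\top\widetilde z=0$; this does not affect your proof.
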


Centering changes the Z-loss path because centering removes the common-shift channel $\mu\coloneqq\bar w u$ from the deployed-logit source coordinate. More generally, write
\begin{align*}
	z=\widetilde z+\mu\mathbf{1},
	\qquad
	\mathbf{1}^\top \widetilde z=0.
\end{align*}
Then
\begin{equation}
	\log Z = \mu+\log \widetilde Z,
	\qquad
	\widetilde Z \coloneqq \sum_i e^{\widetilde z_i}.
\end{equation}

Output-head centering preserves CE but intentionally changes the Z-loss source coordinate. Under the shared-coefficient, shared-target convention, output-head centering removes the deployed source component $2\lambda\mu p$. The effect of centering should be judged by transport diagnostics such as $P_Z^{99.9}$ and $A_p^{99}$, both defined later in the subsection on diagnostic metrics, because removing the common-shift channel can be beneficial or introduce different trade-offs across model geometries.

\subsection{Tied-Embedding Coupling}

With tied embeddings $W_U=E$ and logits $z=Eu$, the Z-loss gradient with respect to $E$ has two conceptually separate paths.

\begin{proposition}[Tied embedding gradient decomposition]
	Consider untied copies $E_{\mathrm{in}}$ and $E_{\mathrm{out}}$ and impose the tying constraint $E_{\mathrm{in}}=E_{\mathrm{out}}=E$ after differentiation. Then
	\begin{equation}
		\nabla_E\mathcal{L}_Z
		=
		\nabla_{E_{\mathrm{out}}}\mathcal{L}_Z
		+
		\nabla_{E_{\mathrm{in}}}\mathcal{L}_Z .
		\label{eq:tied_decomp}
	\end{equation}
\end{proposition}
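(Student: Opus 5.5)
Introduce the untied surrogate $F(E_{\mathrm{in}},E_{\mathrm{out}})$, in which the tied model is the composition of $F$ with the diagonal embedding $E\mapsto(E,E)$, and apply the multivariable chain rule. Concretely, the final hidden state is $u=h(E_{\mathrm{in}}, \phi)$, where $h$ is the Transformer body acting on the input-token embeddings $E_{\mathrm{in}}[x_{\le t}]$ with non-embedding parameters $\phi$. The logits are $z=E_{\mathrm{out}}u$, and the penalty is $\mathcal L_Z=\lambda(\log Z(z)-c)^2$. Define
\begin{align*}
	F(E_{\mathrm{in}},E_{\mathrm{out}})\coloneqq \mathcal L_Z\bigl(E_{\mathrm{out}}\,h(E_{\mathrm{in}},\phi)\bigr),
\end{align*}
so that the tied objective is $G(E)=F(D(E))$ with the linear map $D(E)\coloneqq(E,E)$. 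Because $F$ is a composition of smooth maps (softmax log-normalizer, linear head, Transformer body), it is differentiable, and the chain rule applies.

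The key step is the chain rule for $G=F\circ D$. For any direction $H$ in the embedding space, linearity of $D$ gives $dD_E[H]=(H,H)$, so
\begin{align*}
	dG_E[H]=\partial_{E_{\mathrm{in}}}F\,[H]+\partial_{E_{\mathrm{out}}}F\,[H]
	=\inner{\nabla_{E_{\mathrm{in}}}F+\nabla_{E_{\mathrm{out}}}F}{H}_F ,
\end{align*}
with all partials evaluated at $(E,E)$. By the Riesz identification under the Frobenius inner product, this yields $\nabla_E\mathcal L_Z=\nabla_{E_{\mathrm{out}}}\mathcal L_Z+\nabla_{E_{\mathrm{in}}}\mathcal L_Z$, which is \Eqref{eq:tied_decomp}. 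Equivalently, in terms of \Eqref{eq:transport}, the Jacobian of the tied logits with respect to $\vecop(E)$ is the sum of the two partial Jacobians, and transposing distributes over that sum.

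To make the two paths concrete and connect them to the transport view, I will then identify each term. The output path is $\nabla_{E_{\mathrm{out}}}\mathcal L_Z=\delta_z^Z u^\top$, with $\delta_z^Z$ as in \Eqref{eq:source}. The input path is obtained by backpropagating $\delta_u^Z=E^\top\delta_z^Z$ through $h$ to the embedding rows actually looked up, i.e.\ a scatter-add of the input-embedding adjoints into the rows indexed by $x_{\le t}$.

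The main obstacle is stating precisely what ``impose the tying constraint after differentiation'' means: the partials must be evaluated at the point $(E,E)$, not before differentiating. A second subtlety is that when the same token index appears several times in the input, those contributions accumulate; this is handled because the lookup is itself a linear map of $E_{\mathrm{in}}$, so the chain rule already sums over the occurrences. Summing over tokens of the sequence-averaged objective preserves the identity by linearity of the gradient.
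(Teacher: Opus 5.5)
Your proof is correct and takes essentially the same route as the paper. The paper also defines the untied objective $\Phi(E_{\mathrm{in}},E_{\mathrm{out}})$ and the tied objective $\phi(E)\coloneqq\Phi(E,E)$, writes $D\phi(E)[H]=D_{E_{\mathrm{in}}}\Phi(E,E)[H]+D_{E_{\mathrm{out}}}\Phi(E,E)[H]$, and identifies differentials with Frobenius inner products; your additional identification of the output path as $\delta_z^Z u^\top$, and of the input path via $E^\top\delta_z^Z$ with scatter-add over repeated tokens, is correct and matches the informal transport description the paper gives next to the proposition.
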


Under this decomposition, the output-side term is the local head update induced by the logit adjoint. The input-side partial derivative follows the transported path. The source $\delta_z^Z$ first induces the hidden-state adjoint $E^\top\delta_z^Z$, which is then backpropagated through the Transformer body to the input embedding occurrences. The input-side term is transported through
\begin{align*}
	\delta_z^Z \rightarrow E^\top\delta_z^Z
	\rightarrow \text{Transformer body}
	\rightarrow \nabla_{E_{\mathrm{in}}}\mathcal{L}_Z .
\end{align*}
Thus, an output-side log-normalizer regularizer becomes a multi-path update to a shared lexical table.

\subsection{Output-to-Hidden Gain}

The hidden-state injection from Z-loss is
\begin{equation}
	\delta_u^Z
	\coloneqq
	W_U^\top \delta_z^Z
	=
	2\lambda(\log Z-c)W_U^\top p,
	\label{eq:hidden_injection}
\end{equation}
and hence
\begin{equation}
	\norm{\delta_u^Z}_2
	\le
	2\lambda|\log Z-c|\norm{W_U}_{\mathrm{op}}\norm{p}_2.
	\label{eq:hidden_bound}
\end{equation}
We measure the token-level shape-only local gain
\begin{equation}
	a_t\coloneqq\frac{\norm{W_U^\top p_t}_2}{\norm{p_t}_2+\varepsilon}.
	\label{eq:ap}
\end{equation}
We use $\bar A_p\coloneqq\E_t a_t$ for the mean gain and $A_p^{99}\coloneqq Q_{0.99,t}(a_t)$ for the 99th-percentile (p99) gain. Throughout the paper, unembedding, output head, and output projection refer to the same output-side matrix $W_U$ unless a tied input-embedding path is explicitly being discussed.

\begin{proposition}[Gain is anisotropy-weighted]
	Let $W_U=U\Sigma V^\top$, $q\coloneqq p/\norm{p}_2$, and $a(p)\coloneqq\norm{W_U^\top p}_2/\norm{p}_2$, the $\varepsilon$-free counterpart of \Eqref{eq:ap}. Then
	\begin{equation}
		a(p)^2
		=
		\sum_j \sigma_j^2 \inner{q}{u_j}^2
		\le
		\sigma_{\max}(W_U)^2 .
		\label{eq:ap_svd}
	\end{equation}
\end{proposition}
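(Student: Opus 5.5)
The plan is a direct SVD computation. Write the singular value decomposition as $W_U=U\Sigma V^\top=\sum_j \sigma_j u_j v_j^\top$, with orthonormal left singular vectors $u_j\in\R^V$, orthonormal right singular vectors $v_j\in\R^d$, and $\sigma_j\ge 0$. Here $V$ inside the SVD denotes the right-singular matrix, not the vocabulary size. Since $p$ is a probability vector, $\norm{p}_2>0$, so $q=p/\norm{p}_2$ is a well-defined unit vector. By homogeneity of $W_U^\top$ we have $a(p)=\norm{W_U^\top q}_2$, and the whole claim reduces to a statement about the unit vector $q$.

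Next I expand $W_U^\top q=V\Sigma U^\top q=\sum_j \sigma_j\inner{q}{u_j}v_j$. By orthonormality of the $v_j$ (Pythagoras), the squared norm is $\sum_j\sigma_j^2\inner{q}{u_j}^2$, which is the stated identity. I will use the thin SVD so that the index $j$ runs over $j\le\min(V,d)$. Any component of $q$ orthogonal to the column space of $U$ contributes nothing, since it lies in $\ker W_U^\top$.

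For the inequality, bound each $\sigma_j^2\le\sigma_{\max}(W_U)^2$. Then use Bessel's inequality for the orthonormal family $\{u_j\}$: $\sum_j\inner{q}{u_j}^2\le\norm{q}_2^2=1$. Equality holds when the $u_j$ span $\R^V$, which is not the case here since $V>d$ typically. This gives $a(p)^2\le\sigma_{\max}(W_U)^2$, consistent with the operator-norm bound \Eqref{eq:hidden_bound}.

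No step is a real obstacle. The only care needed is bookkeeping: the overloaded symbol $V$, and using Bessel's inequality rather than Parseval's identity because the thin SVD's left singular vectors do not span the vocabulary space. I would add a remark interpreting the identity. The gain is a $\sigma_j^2$-weighted average of how much the normalized source shape aligns with each left singular direction. It therefore approaches $\sigma_{\max}$ only when $q$ concentrates on the top singular direction, which motivates $A_p^{99}$ as an anisotropy-aware diagnostic rather than $\norm{W_U}_{\mathrm{op}}$ alone.
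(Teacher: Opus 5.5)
Your proposal is correct and follows essentially the same route as the paper's proof: reduce to the unit vector $q$, expand $\norm{W_U^\top q}_2^2$ via the SVD using orthonormality of the right singular vectors, and finish with Bessel's inequality $\sum_j\inner{q}{u_j}^2\le 1$ together with $\sigma_j\le\sigma_{\max}(W_U)$. Your use of the thin SVD and your disambiguation of the overloaded symbol $V$ are harmless bookkeeping refinements that do not change the argument.
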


The expansion also shows the alignment claim. For fixed $\norm{p}_2$, the contribution from direction $u_j$ is weighted by $\sigma_j^2$. Among feasible source directions, the hidden injection is therefore largest when source mass lies in left singular directions associated with high singular values. As a linear algebra bound over unit vectors, the upper bound is saturated exactly when $q$ lies entirely in the left singular subspace associated with $\sigma_{\max}(W_U)$. For softmax-derived $q$, this equality condition may be infeasible. This alignment property explains the operational sense in which unembedding geometry, not only $\log Z$, predicts transported update size.

Raw-logit Z-loss can reduce $\log Z$ tails while leaving $W_U^\top p$ and rare optimizer-facing update tails exposed. Diagnostics such as $A_p^{99}$ and gradient-event endpoints are needed to distinguish log-normalizer-tail control from transported-update-tail control.

\subsection{Router Effective Scale}

An MoE router produces, for each token-layer decision, a vector of expert scores $r_{t,\ell}\in\R^{N_{\mathrm{exp}}}$ and applies softmax or top-$k$ selection over the expert axis. Router Z-loss is therefore the same log-normalizer penalty as \Eqref{eq:zloss}, instantiated on the expert-score axis rather than the vocabulary axis. For decision $(t,\ell)$, the router source is the expert-score gradient $\delta_{r,t,\ell}^{R}\coloneqq\partial\mathcal L_R/\partial r_{t,\ell}\in\R^{N_{\mathrm{exp}}}$, with one component per expert. The router source has the same softmax-shaped form as the dense-head source, while top-$k$ routing and reduction denominators determine the effective scale transported through the router. Let $N_{\mathrm{tl}}$ be the number of token-layer router decisions and $\alpha_{t,\ell}$ the reduction weight for decision $(t,\ell)$. Relative to the token-layer mean, the effective scale is $N_{\mathrm{tl}}\alpha_{t,\ell}$. Under the conventions used in the experiments, active-route mean has a scale of $1/k$, while active-route sum has a scale of $k$.

The same $\lambda_R$ can imply different per-decision gradients once top-$k$ and reduction denominators change. Router reports should therefore include the reduction convention, the absolute per-decision coefficient, and the scale relative to the token-layer mean.

\section{Experiments}

\subsection{Setup}

This section empirically evaluates the preceding analysis and diagnostic framework in representative modern Transformer settings where Z-loss is used.

We evaluate raw and centered heads, standard and centered Z-loss, factorized and gain-aware objectives, and router-scale variants using text from WikiText-103 and FineWeb-Edu. The comparison spans pretrained-model diagnostics, matched continued pretraining, implementation audits, stress endpoints, and controlled MoE training. Unless otherwise stated, the FineWeb-Edu experiments reported here use streamed blocks from the FineWeb-Edu \texttt{sample-10BT} training split. Dense runs on models in the GPT-2 family use matched replications, stated token budgets, 32-bit floating-point (fp32) master weights with bfloat16 (bf16) autocast, and log-normalizer, gradient-tail, optimizer-state, and mixed-precision endpoints; the implementation and overflow audits also evaluate 16-bit floating-point (fp16) storage. The Appendix provides supporting derivations, diagnostic and method conventions, further experimental details, and additional audits and results.

The experiments follow the diagnostic hierarchy rather than rank the methods as universal alternatives. A large p99 $|\mu|$ motivates a centered-deployment audit; a large residual $P_Z^{99.9}$ after centering motivates centered Z-loss; a large $A_p^{99}$ motivates gain and spectral audits; forward--backward mismatch motivates shared-statistics implementation checks; and router-scale mismatch motivates coefficient matching. The Centered head row is a gauge-only control with $\lambda_{\rm aux}=0$. In exact arithmetic, the Centered head row's CE value and CE gradient equal those of the raw CE baseline, so the row isolates the deployed coordinate rather than adding an auxiliary update. Because the diagnostic Z-loss and $P_Z^{99.9}$ are evaluated in each method's deployed-logit coordinate, differences between raw and centered coordinates are not coordinate-free performance gains; in pretrained audits the differences are deterministic consequences to be judged together with $A_p^{99}$ and PPL. Validation comparisons therefore report exact matched mean PPL differences within each run regime, making quality changes explicit alongside transport effects.

\subsection{Pretrained Output-Head Geometry}

This comparison tests the common-shift diagnosis and the geometry-dependent consequences of centering. Table~\ref{tab:pretrained} shows that row-centered output-head deployment preserves PPL while sharply reducing common-shift and Z-loss transport diagnostics for several models. Raw heads in the GPT-2 family have very large common-shift tails. Centered heads keep CE unchanged while reducing $P_Z^{99.9}$ by factors of $4.9$, $12.8$, and $12.7$ for DistilGPT-2, GPT-2, and GPT-2 Medium, respectively. The larger Pythia-1B diagnostic provides an important contrast. The Pythia-1B raw common-shift tail is already small, and centering increases $A_p^{99}$. This increase is predicted by the coordinate analysis rather than treated as an anomalous exception. In the scalar coordinate, a raw $\mu$ can partially cancel $\log\widetilde Z-c$, so removing the raw common-shift term can increase the centered violation. In the gain coordinate, $\widetilde W_U^\top p=W_U^\top p-\bar w^\top$, and subtracting the fixed output-row mean need not reduce the norm when the mean's alignment with $W_U^\top p$ is unfavorable. The contrast therefore reinforces treating centering as a gauge intervention whose effect is evaluated with transport diagnostics, not as a uniformly beneficial normalization.

Figure~\ref{fig:gradient_geometry}(a) makes the common-shift geometry visible by showing that centering translates every GPT-2 hidden-source shape by the same output-row mean while leaving the softmax distribution unchanged. Figure~\ref{fig:gradient_geometry}(b) shows that the transported gain is concentrated in a low-rank, anisotropic subspace rather than determined by scalar source amplitude alone. Figure~\ref{fig:gradient_geometry}(c) directly realizes the identity $\norm{\delta_u^Z}_2=A_p\norm{\delta_z^Z}_2$ token by token. The spread across fixed-gain rays is the empirical geometry that a scalar Z-loss curve omits.

\begin{figure*}[t!]
	\centering
	\includegraphics[width=\textwidth]{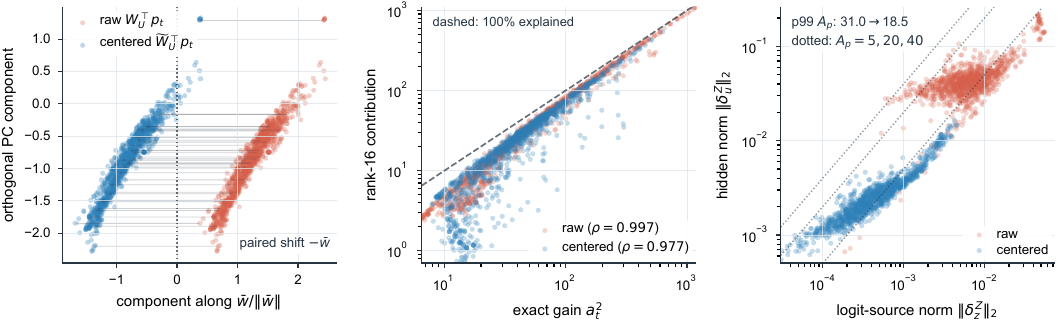}
	\caption{Empirical Z-loss gradient geometry on fp32 GPT-2 predictions from WikiText-103. The left view shows that centering translates every hidden-source shape by the fixed vector $-\bar w^\top$ while preserving $p_t$. The middle view shows that singular contributions closely track exact $a_t^2$. The raw and centered correlations are $0.997$ and $0.977$, respectively. The right view shows that $\norm{\delta_u^Z}_2=A_p\norm{\delta_z^Z}_2$ exposes token-dependent amplification, and centering lowers p99 $A_p$ from 31.0 to 18.5. Here, $\lambda_{\rm diag}=10^{-4}$ and $c=\log 50257$.}
	\label{fig:gradient_geometry}
\end{figure*}

\begin{table}[t!]
	\centering
	\scriptsize
	\resizebox{\columnwidth}{!}{%
		\begin{tabular}{@{}lcrrrrr@{}}
			\toprule
			Model                         & Head     & PPL   & Diag. Z  & $P_Z^{99.9}$ & $A_p^{99}$ & p99 $|\mu|$         \\
			\midrule
			\multirow{2}{*}{DistilGPT-2}  & raw      & 65.01 & 0.394    & 110.11       & 32.02      & 99.94               \\
			                              & centered & 65.01 & 0.00517  & 22.48        & 19.52      & $6.0\times 10^{-6}$ \\
			\addlinespace
			\multirow{2}{*}{GPT-2}        & raw      & 42.58 & 1.211    & 263.45       & 33.51      & 263.91              \\
			                              & centered & 42.58 & 0.00523  & 20.61        & 19.16      & $8.0\times 10^{-6}$ \\
			\addlinespace
			\multirow{2}{*}{GPT-2 Medium} & raw      & 29.56 & 1.040    & 271.56       & 30.60      & 273.91              \\
			                              & centered & 29.56 & 0.00580  & 21.45        & 14.27      & $3.5\times 10^{-6}$ \\
			\addlinespace
			\multirow{2}{*}{GPT-2 Large}  & raw      & 25.43 & 0.000692 & 12.82        & 8.42       & 8.47                \\
			                              & centered & 25.43 & 0.00573  & 18.27        & 8.90       & $7.8\times 10^{-7}$ \\
			\midrule
			\multirow{2}{*}{Pythia-160M}  & raw      & 41.68 & 59.33    & 785.73       & 61.35      & 756.65              \\
			                              & centered & 41.68 & 0.0733   & 39.67        & 6.42       & $4.6\times 10^{-3}$ \\
			\addlinespace
			\multirow{2}{*}{Pythia-410M}  & raw      & 24.81 & 0.0145   & 19.85        & 3.09       & 6.80                \\
			                              & centered & 24.81 & 0.00609  & 17.46        & 4.61       & $2.0\times 10^{-5}$ \\
			\addlinespace
			\multirow{2}{*}{Pythia-1B}    & raw      & 17.42 & 0.00584  & 17.32        & 2.73       & 2.74                \\
			                              & centered & 17.42 & 0.00646  & 17.35        & 4.39       & $6.6\times 10^{-7}$ \\
			\bottomrule
		\end{tabular}
	}%
	\caption{Pretrained-model diagnostics on WikiText-103 validation. Rows labeled centered use the row-centered deployed-logit convention defined above. Tail metrics are reported on the deployed logits for each row; centering preserves CE but fixes the common-shift gauge, so centering changes the Z-loss transport coordinate. Lower PPL indicates better validation quality, while lower diagnostic Z-loss, abbreviated as Diag. Z, and lower $P_Z^{99.9}$, $A_p^{99}$, and $|\mu|$ indicate smaller penalty or transport tails. Gradient ratios and cosines characterize auxiliary-gradient strength and CE alignment; the gradient ratios and cosines have no universally preferred direction unless stated otherwise.}
	\label{tab:pretrained}
\end{table}

To test whether the WikiText comparison of raw and centered output heads is corpus-specific, Appendix Table~\ref{tab:fineweb_pretrained} repeats the pretrained evaluation on streamed FineWeb-Edu blocks. GPT-2 and GPT-2 Medium reproduce the large common-shift pattern, whereas GPT-2 Large and Pythia-410M remain in a geometry-dependent regime.

Additional tied-embedding pathway measurements, including the untied Pythia-160M contrast, confirm that tying changes the transport path, while output-head common shift remains a separate source factor whose effect depends on model geometry.

Appendix Table~\ref{tab:spectral_audit} reports a low-rank unembedding spectral audit: dominant singular directions explain much of the token-level gain in highly anisotropic heads, while the centered-head results show that reducing $\sigma_1$ does not by itself guarantee a smaller $A_p^{99}$.

\subsection{Source-Consistency and Implementation Audits}

These audits test whether forward scalar agreement predicts backward-source and optimizer-facing agreement. Appendix Tables~\ref{tab:source_audit}, \ref{tab:triton_fused}, and~\ref{tab:update_audit} report source-space, \mbox{Triton}-based fused-kernel, and optimizer-facing update audits that support the implementation claim used throughout the paper.\footnote{\mbox{Triton} is the implementation framework for a controlled CE+Z-loss GPU kernel.} Together, these audits test whether fused forward and backward computations share the softmax statistics that define the source. Matching forward Z-loss scalar values is not sufficient, because the backward source and the transported optimizer-facing update can differ on corpus-derived model logits even when the forward scalar error is small. The source-space, fused-kernel, and optimizer-facing update audits distinguish two failure modes. Reusing the forward log-sum-exp statistics removes inconsistency caused by backward reconstruction, but a source-consistent formula cannot recover precision already lost through quantized logit storage.

\subsection{Dense Continued-Pretraining Interventions}

This experiment contrasts scalar-tail control with output-to-hidden gain control. Table~\ref{tab:finetune} reports the matched-run GPT-2 continued pretraining experiment on WikiText-103. The CE-only setting is a strong baseline. After 4800 optimizer steps, the CE-only setting reaches a PPL of 21.046$\pm$0.012. The matched mean PPL differences relative to CE are $-0.011$ for Centered head, $+0.026$ for Standard Z-loss, and $-0.007$ for Centered Z-loss. We report these differences directly rather than infer formal equivalence from three-run standard deviations. The larger and more seed-consistent effect is the geometry change. Centered head deployment lowers $A_p^{99}$ from 26.4 to 17.0 and removes the common-shift coordinate; because centered head deployment has no auxiliary loss, the Centered head row isolates the coordinate change. Standard Z-loss reduces $P_Z^{99.9}$ from 246.8 to 21.8 but leaves the $A_p^{99}$ gain essentially unchanged. Centered Z-loss combines residual log-normalizer control with the centered coordinate, reducing $P_Z^{99.9}$ to 20.3 and $A_p^{99}$ to 16.8 at the PPL difference.

\begin{table}[t!]
	\centering
	\scriptsize
	\resizebox{\columnwidth}{!}{%
		\begin{tabular}{@{}lrrrrrr@{}}
			\toprule
			Method          & $\lambda_{\rm aux}$ & PPL              & $\Delta$PPL & Diag. Z & $P_Z^{99.9}$ & $A_p^{99}$ \\
			\midrule
			CE baseline     & 0                   & $21.046\pm0.012$ & 0           & 0.1174  & 246.8        & 26.37      \\
			Centered head   & 0                   & $21.035\pm0.013$ & $-0.011$    & 0.00879 & 32.8         & 16.98      \\
			\midrule
			Standard Z-loss & $1\times 10^{-4}$   & $21.072\pm0.013$ & $+0.026$    & 0.00245 & 21.8         & 26.30      \\
			Centered Z-loss & $1\times 10^{-4}$   & $21.039\pm0.013$ & $-0.007$    & 0.00609 & 20.3         & 16.84      \\
			\bottomrule
		\end{tabular}
	}%
	\caption{GPT-2 continued pretraining on WikiText-103. All methods start from the same pretrained checkpoint. $\lambda_{\rm aux}$ is the applied auxiliary coefficient, Diag. Z is evaluated on the deployed logits for each method, and $\Delta$PPL is the matched mean difference from CE.}
	\label{tab:finetune}
\end{table}

Appendix Table~\ref{tab:fineweb_ft} reports a disjoint-block GPT-2 replication on FineWeb-Edu, and Appendix Table~\ref{tab:fineweb_medium_ft} extends the comparison to GPT-2 Medium with fresh tokens. Both reproduce the same separation. Centered variants reduce $A_p^{99}$, whereas standard raw-logit Z-loss does not.

Appendix Table~\ref{tab:gpt2_diag} reports a separate 1200-step GPT-2 diagnostic with post-training gradient instrumentation. The diagnostic confirms that the tied-table auxiliary gradient is nonzero and path-coupled and that the alignment of the gradient with CE should be measured rather than assumed.

\subsection{WikiText-103 High-Coefficient Stress Tests}

With the coefficient increased tenfold to $\lambda_{\rm aux}=1\times10^{-3}$, Table~\ref{tab:stress_diag} tests optimizer-facing consequences. Standard and gain-aware Z-loss reduce log-normalizer tails but leave $A_p^{99}$ near CE. Their p99 gradient norms are 47.81 and 30.25, respectively, compared with 5.58 for CE. Centered Z-loss lowers both $P_Z^{99.9}$ and $A_p^{99}$, keeps the p99 norm at 5.69, and improves mean PPL. The roughly $7\%$ clip rates of the raw-logit variants reflect mostly sub-threshold steps punctuated by rare large events rather than safer tails.

\begin{table}[t!]
	\centering
	\scriptsize
	\resizebox{\columnwidth}{!}{%
		\begin{tabular}{@{}lrrrrrr@{}}
			\toprule
			Method            & PPL               & $P_Z^{99.9}$ & $A_p^{99}$ & \shortstack{p99 pre-clip                  \\$\|g\|_2$} & Clip rate & $R_{\rm aux}$ \\
			\midrule
			CE baseline       & $21.899\pm0.013$  & 257.9        & 26.49      & 5.58                     & 1.00  & 0      \\
			Centered head     & $22.048\pm0.022$  & 34.7         & 16.59      & 5.73                     & 1.00  & 0      \\
			\midrule
			Standard Z-loss   & $21.908\pm0.011$  & 16.9         & 26.16      & 47.81                    & 0.071 & 0.0145 \\
			Centered Z-loss   & $21.840\pm0.015$  & 16.8         & 16.47      & 5.69                     & 1.00  & 0.0202 \\
			Gain-aware Z-loss & $21.890\pm0.0066$ & 19.0         & 26.15      & 30.25                    & 0.073 & 0.0116 \\
			\bottomrule
		\end{tabular}
	}%
	\caption{High-coefficient GPT-2 gradient-tail stress sweep on WikiText-103 with $\lambda_{\rm aux}=1\times 10^{-3}$ for auxiliary-method rows. CE and centered head rows have no applied auxiliary loss, and Diag. Z uses the same coefficient scale. Clip rate is the fraction of optimizer steps whose pre-clip full-model gradient norm exceeds the clipping threshold of 1.0.}
	\label{tab:stress_diag}
\end{table}

\subsection{Router Reduction and MoE Training}

These experiments test the predicted reduction scale and whether coefficient matching restores token-layer-mean behavior. The router-projection audit in Appendix Table~\ref{tab:router} compares token-layer mean, active-route mean, and active-route sum reductions under a fixed nominal router coefficient.

End-to-end MoE results in Appendix Table~\ref{tab:moe} provide an implementation-level equivalence check of the same scale accounting. Matched active-route variants recover token-layer-mean behavior. Because coefficient matching makes the effective objectives equivalent under matched initialization, this recovery confirms that the predicted reduction-scale accounting carries through to end-to-end MoE training.

\section{Conclusion}

Z-loss should be understood not only as a scalar log-normalizer penalty but as a logit-gradient source transported by architecture and implementation. Our backward-transport formulation explains why forward-equivalent losses can produce different updates and why reducing raw-logit tails need not reduce optimizer-facing tails. Across dense and MoE training, implementation audits, and optimizer stress tests, we observed that the common-shift gauge, unembedding anisotropy, tied pathways, and router reductions affect distinct transport coordinates. Intervention choice should therefore follow the relevant diagnostic, with validation quality and transport metrics reported together.

\bibliography{aaai2027}

\clearpage
\appendix

\section{Appendix}

\subsection{Related Work}
\label{app:related_work}

The closest uses of the log-normalizer penalty are final-softmax Z-loss in Mesh TensorFlow and PaLM and router Z-loss in Stable and Transferable Mixture-of-Experts (ST-MoE) \citep{shazeer2018mesh,chowdhery2023palm,zoph2022stmoe}. To avoid a naming ambiguity, the squared log-normalizer penalty in \Eqref{eq:zloss} is mathematically distinct from an earlier loss with the same name. \citet{debrebisson2016zloss} proposed a shift- and scale-invariant spherical surrogate based on the standardized target-class score as an alternative to log-softmax, rather than an auxiliary penalty on $\log Z$. Relative to output-centering and logit-geometry work, our focus is the architecture- and implementation-dependent transport of the resulting adjoint. We develop the broader context on routers, output embeddings, common shifts, logit geometry, systems, and the positioning of this study below.

\paragraph{Z-loss, router-logit control, and MoE reporting.} Conditional computation and sparse experts date back to adaptive mixtures of local experts \citep{jacobs1991adaptive} and the sparsely-gated MoE layer \citep{shazeer2017moe}. GShard, Switch Transformer, and ST-MoE then established large sparse Transformer routing conventions and training-loss-divergence issues \citep{lepikhin2020gshard,fedus2022switch,zoph2022stmoe}. Subsequent sparse language models, routing methods, and systems have continued to emphasize the same design variables. Examples include BASE Layers, Expert Choice routing, V-MoE, dropless MoE systems, and routed scaling-law studies \citep{lewis2021base,zhou2022expertchoice,gale2023megablocks,riquelme2021scaling,clark2022unified}. This line of work shows that top-$k$ routing, balancing, capacity, dynamic expert workload, and compute-normalized scaling remain central design variables. The log-normalizer penalty we study follows the Mesh TensorFlow and PaLM final-softmax convention and the ST-MoE router adaptation \citep{shazeer2018mesh,chowdhery2023palm,zoph2022stmoe}. Our MoE contribution extends this reporting line. Nominal $\lambda_R$ is not sufficient to compare experiments unless the token-layer denominator, top-$k$, active-route convention, capacity policy, absolute per-decision coefficient, and scale relative to the token-layer mean are reported. Recent auxiliary-loss-free load balancing further shows that router-logit control and load balancing can be implemented without directly adding auxiliary gradients to the language-model objective \citep{wang2024auxiliary}. \citet{olmo2025} document modern dense large language model (LLM) training recipes that include logit-growth-control terms and transparent recipe reporting. Our reporting recommendations turn that practice into Z-loss-specific backward-source and transport diagnostics.

\paragraph{Numerical losses and fused implementations.} Implementation audits in this work build on several lines of research in numerical analysis and systems. The relevant numerical and systems literature includes mixed-precision training, floating-point rounding behavior, large-scale model-parallel training stacks, accurate log-sum-exp and softmax computation, online softmax normalization, input- and output-aware fused kernels, programmable GPU-kernel languages and compilers such as \mbox{Triton}, adaptive large-vocabulary softmax, and memory-efficient large-vocabulary CE \citep{micikevicius2018mixed,goldberg1991floating,shoeybi2019megatron,blanchard2021accurate,milakov2018online,dao2022flashattention,tillet2019triton,grave2017efficient,wijmans2024cut}. The PyTorch CE audit is close to the fp32 reference in our tests, consistent with the stable softmax and CE formulations used by mature frameworks \citep{paszke2019pytorch}. Adding Z-loss creates an additional implementation obligation. When the auxiliary path is fused, quantized, or reconstructed separately, the backward source of the auxiliary path must remain consistent with the forward objective. Forward scalar agreement alone does not establish this consistency. Because the Z-loss source is later transported through tied heads, unembedding geometry, and adaptive optimizer normalization, a fused CE+Z-loss path should be validated with source consistency, selected-parameter gradients, and optimizer directions on target-model logits from the relevant corpus distribution.

\paragraph{Output embeddings, weight tying, and common-shift gauges.} Weight tying and output embeddings affect language-model parameterization and generalization \citep{press2017using,inan2017tying}. Work on embedding degeneration and representation geometry further shows that lexical vectors can concentrate in narrow cones or dominant mean and top directions. This pattern appears in maximum-likelihood training, contextual Transformer states, and static embedding spaces \citep{gao2019representation,ethayarajh2019contextual,mu2018allbutthetop}. That literature characterizes representational sharing, anisotropy, and predictive quality. Our tied-embedding diagnostic instead traces how an output-side auxiliary source returns through both the output head and the input lexical path. \citet{oec2026} directly target common-shift instability in LLM pretraining through output embedding centering. We use centering both as a mitigation and as an intervention that exposes the gauge freedom of the output head. The reason is that CE is invariant to a common shift while Z-loss is not. The removed-source proposition identifies the exact softmax-shaped component removed by centering and explains why centering can help or introduce different trade-offs across models and training regimes.

\paragraph{Logit geometry and normalized heads.} A broad literature reduces or analyzes logit geometry through temperature scaling, label smoothing, confidence penalties, normalized heads, output-centering methods, final-logit soft-capping, activation normalizers, and studies of softmax output-layer rank \citep{guo2017calibration,szegedy2016rethinking,muller2019when,pereyra2017confidence,nguyen2019transformers,wei2022logitnorm,gemma2024,ba2016layer,zhang2019rmsnorm,yang2018softmax}. The shared objects are logit scale, shift, confidence, and expressivity. Relative to this literature, the distinction we draw is the coordinate of intervention. A method can reduce $\log Z$ or the centered relative log-normalizer while leaving $W_U^\top p$ and optimizer-facing update tails nearly unchanged. This coordinate distinction motivates the experimental diagnostics. We include $A_p^{99}$, gradient-event rates, Adam-state endpoints, and randomized low-rank unembedding audits \citep{halko2011randomized} rather than comparing scalar Z-loss values alone.

The common-shift coordinate isolated here is related to temperature calibration and logit-norm regularization \citep{guo2017calibration,wei2022logitnorm}. The common-shift coordinate is also related to embedding and activation normalization choices in Transformer training \citep{nguyen2019transformers,xiong2020layer,shleifer2021normformer}. The coordinate is nevertheless distinct from layer normalization and root-mean-square layer normalization. These methods normalize or recenter hidden states rather than removing the output-head common-shift gauge \citep{ba2016layer,zhang2019rmsnorm}.

\paragraph{Positioning of this study.} Taken together, the surrounding literature has treated Z-loss as a recipe-level mitigation for training-loss divergence, a router auxiliary term, a logit-geometry regularizer, or a numerical-loss implementation detail. This paper occupies the intersection of those views but focuses on the logit-space adjoint created after evaluation of the scalar log-normalizer penalty and on the transport of the logit-space adjoint through architecture, precision, routing reductions, and optimizer state. The contribution is therefore not a new sparse architecture, a new normalization layer, or a compute-scaling claim. Instead, the contribution is a transport-aware measurement, reporting, and intervention framework for existing dense and sparse Transformer recipes. The source identities are softmax-axis identities, but the evidence and recommendations intentionally target the Transformer regimes where Z-loss is deployed to constrain output-head or router log-normalizers and where the relevant transport paths arise together. By deriving source, common-shift, tied-path, gain, and router-scale identities and pairing those identities with transport diagnostics and validation PPL measured under matched run configurations, we make otherwise similar Z-loss reports comparable in the coordinates that affect updates. This coordinate-level comparison marks the point at which the results differ from the common scalar-control narrative. A method can improve the reported Z-loss value while failing to reduce the update coordinates that matter for mixed-precision headroom and adaptive-optimizer state; the relevant update coordinates can even increase. Accordingly, the proposed interventions have distinct roles. Centering, factorization, gain-aware weighting, and scale-matched router reductions act on particular transport coordinates and are not interchangeable ways to lower a scalar Z-loss curve.

\subsection{Raw-Coordinate Source and Adjoint}

Having fixed the CE-invariant gauge, we now compare the Z-loss sources produced by raw and centered deployed coordinates.

\begin{proposition}[Raw and centered Z-loss sources]
	Let $z=\widetilde z+\mu\mathbf{1}$ and let the raw and centered Z-losses use coefficients and targets $(\lambda_{\mathrm{raw}},c_{\mathrm{raw}})$ and $(\lambda_{\mathrm{ctr}},c_{\mathrm{ctr}})$. Since $\softmax(z)=\softmax(\widetilde z)=p$, the raw source and the centered source differ in the respective deployed-logit coordinates by
	\begin{equation}
		\begin{aligned}
			\delta_{z,\mathrm{raw}}^Z-\delta_{\widetilde z,\mathrm{centered}}^Z
			 & =
			2\lambda_{\mathrm{raw}}(\mu+\log\widetilde Z-c_{\mathrm{raw}})p   \\
			 & -2\lambda_{\mathrm{ctr}}(\log\widetilde Z-c_{\mathrm{ctr}})p .
		\end{aligned}
		\label{eq:removed_source_general}
	\end{equation}
\end{proposition}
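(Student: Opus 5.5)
The plan is to compute each source separately from the source formula \Eqref{eq:source} and then subtract. The two sources live in different deployed coordinates, $z$ and $\widetilde z$, but both are vectors in $\R^V$ indexed by vocabulary item, so their difference is well defined componentwise.

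\textbf{Step 1: the raw source.} Apply \Eqref{eq:source} to the raw logits $z$ with coefficient $\lambda_{\mathrm{raw}}$ and target $c_{\mathrm{raw}}$. This gives
\[
\delta_{z,\mathrm{raw}}^Z=2\lambda_{\mathrm{raw}}(\log Z-c_{\mathrm{raw}})\softmax(z).
\]
Since $\mathbf 1^\top\widetilde z=0$ plays no role here, only the decomposition $z=\widetilde z+\mu\mathbf 1$ matters. It yields $Z=\sum_i e^{\widetilde z_i+\mu}=e^{\mu}\widetilde Z$, so $\log Z=\mu+\log\widetilde Z$. This is exactly the identity already displayed in the centered-head subsection, and also the scalar part of Proposition~1 with $a=\mu$ applied at base point $\widetilde z$.

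\textbf{Step 2: the centered source.} Apply \Eqref{eq:source} to the centered deployed logits $\widetilde z$, regarded as the independent variable of the centered Z-loss, with coefficient $\lambda_{\mathrm{ctr}}$ and target $c_{\mathrm{ctr}}$. This gives
\[
\delta_{\widetilde z,\mathrm{centered}}^Z=2\lambda_{\mathrm{ctr}}(\log\widetilde Z-c_{\mathrm{ctr}})\softmax(\widetilde z).
\]

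\textbf{Step 3: the common shape $p$.} Use softmax shift invariance, as in Proposition~2 and \Eqref{eq:ce_shift_direction}: $e^{\widetilde z_i+\mu}/\sum_j e^{\widetilde z_j+\mu}=e^{\widetilde z_i}/\widetilde Z$, hence $\softmax(z)=\softmax(\widetilde z)=p$. Substituting this into both expressions and subtracting gives \Eqref{eq:removed_source_general} immediately.

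\textbf{Main obstacle.} The only delicate point is conceptual rather than computational: being clear about which variable each gradient is taken with respect to. If $\mu$ depends on the parameters, as with $\mu=\bar w u$, one might worry that the centered gradient picks up extra terms. It does not, because each source is the gradient with respect to its own deployed-logit vector, held as the independent variable. How $\mu$ depends on $u$ or $W_U$ enters only later, through the Jacobian $J_\theta^\top$ in \Eqref{eq:transport}, and not in the source itself.

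I will state this explicitly, mirroring the final sentence of Proposition~1. As a sanity check, with shared coefficient and target the difference reduces to $2\lambda\mu p$, which matches the removed component quoted in the main text.
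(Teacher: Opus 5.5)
Your proposal is correct and follows essentially the same argument as the paper: apply \Eqref{eq:source} in each deployed coordinate, use $Z=e^{\mu}\widetilde Z$ and softmax shift invariance to get a common shape $p$, and subtract. Your remark that each source is taken with respect to its own deployed-logit vector is a helpful clarification the paper leaves implicit; it does not change the argument.
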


\begin{corollary}[Removed deployed source component]
	When both losses use the same coefficient $\lambda$ and the same target $c$,
	\begin{equation}
		\delta_{z,\mathrm{raw}}^Z-\delta_{\widetilde z,\mathrm{centered}}^Z
		=
		2\lambda\mu p .
		\label{eq:removed_source}
	\end{equation}
\end{corollary}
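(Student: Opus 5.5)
The corollary follows by specializing the general identity \Eqref{eq:removed_source_general} of the preceding proposition, so the plan is to take that proposition as given and substitute $\lambda_{\mathrm{raw}}=\lambda_{\mathrm{ctr}}=\lambda$ and $c_{\mathrm{raw}}=c_{\mathrm{ctr}}=c$. Both terms on the right-hand side then carry the common factor $2\lambda p$. Factoring it out leaves the scalar bracket
\begin{align*}
	(\mu+\log\widetilde Z-c)-(\log\widetilde Z-c)=\mu ,
\end{align*}
which yields $2\lambda\mu p$, as displayed in \Eqref{eq:removed_source}.

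For a self-contained derivation, I would first apply the source formula \Eqref{eq:source} in each deployed coordinate. In the raw coordinate the source is $2\lambda(\log Z-c)\softmax(z)$. In the centered coordinate it is $2\lambda(\log\widetilde Z-c)\softmax(\widetilde z)$. The decomposition $\log Z=\mu+\log\widetilde Z$ from the centered-output-head subsection rewrites the raw amplitude. This is the same computation as Proposition~1 with $a=\mu$, applied at the centered point $\widetilde z$.

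Softmax shift invariance, i.e., Proposition~2 with $\alpha=\mu$, then identifies $\softmax(z)=\softmax(\widetilde z)=p$. With the shape vectors equal, the two sources differ only in their scalar amplitudes, and subtracting gives the result.

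The one point needing care is interpretation rather than algebra. The two sources are covectors in different deployed coordinates, so the difference must be read componentwise on the common index set $\{1,\dots,V\}$, exactly as in the preceding proposition. It also matters that $\mu$ is evaluated at the current $u$ rather than being differentiated through. This is legitimate because each source is defined as a gradient with respect to its own deployed logit vector, so no Jacobian of $\mu$ with respect to $z$ enters.
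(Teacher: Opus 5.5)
Your proposal is correct and matches the paper's proof, which likewise sets $\lambda_{\mathrm{raw}}=\lambda_{\mathrm{ctr}}=\lambda$ and $c_{\mathrm{raw}}=c_{\mathrm{ctr}}=c$ in \Eqref{eq:removed_source_general}, leaving $2\lambda\mu p$. Your optional self-contained rederivation via $\log Z=\mu+\log\widetilde Z$ and softmax shift invariance simply repeats the paper's proof of the preceding proposition.
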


\begin{proposition}[Projected raw-logit adjoint]
	Under the same single-coefficient, single-target convention as \Eqref{eq:removed_source}, if centered logits are implemented as $\widetilde z=Pz$ with $P\coloneqq I-(1/V)\mathbf{1}\mathbf{1}^\top$, $p=\softmax(Pz)=\softmax(z)$, and $\widetilde Z\coloneqq\sum_i\exp((Pz)_i)$, then the adjoint transported back to the raw-logit coordinate is
	\begin{equation}
		\nabla_z \mathcal{L}_Z(Pz)
		=
		2\lambda(\log\widetilde Z-c)
		\left(p-\frac{1}{V}\mathbf{1}\right).
		\label{eq:centered_raw_adjoint}
	\end{equation}
\end{proposition}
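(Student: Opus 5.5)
The computation is a single application of the chain rule. We compose the centered-coordinate source with the Jacobian of the linear centering map $z\mapsto Pz$. By \Eqref{eq:source}, evaluated at the centered logit vector $\widetilde z=Pz$ with the shared coefficient $\lambda$ and target $c$, the gradient with respect to $\widetilde z$ is $\delta_{\widetilde z,\mathrm{centered}}^Z=2\lambda(\log\widetilde Z-c)\softmax(\widetilde z)$. Since $Pz=z-\bar z\mathbf 1$ with $\bar z\coloneqq V^{-1}\mathbf 1^\top z$, this is a common shift of $z$. The shift invariance of softmax, used in Proposition~2 and in the raw/centered source proposition above, then gives $\softmax(\widetilde z)=p$. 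So the centered source is $2\lambda(\log\widetilde Z-c)p$.

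Next, we pull this source back through $P$. Because $\widetilde z=Pz$ is linear, its Jacobian is $P$ itself. Hence $\nabla_z\mathcal L_Z(Pz)=P^\top\delta_{\widetilde z,\mathrm{centered}}^Z$, which is the dense-head instance of \Eqref{eq:transport} with $J=P$. The matrix $P$ is symmetric, so $P^\top=P$. It remains to evaluate $Pp=p-\tfrac1V\mathbf 1(\mathbf 1^\top p)$. Using $\mathbf 1^\top p=1$ gives $Pp=p-\tfrac1V\mathbf 1$, and multiplying by the scalar $2\lambda(\log\widetilde Z-c)$ yields \Eqref{eq:centered_raw_adjoint}.

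There is no substantive obstacle here. The only point needing care is to state that the chain rule is applied to the composite map $z\mapsto Pz\mapsto\mathcal L_Z$, rather than treating $\widetilde Z$ as depending on $z$ directly, and to note that $\log\widetilde Z$ is evaluated at $Pz$. As a consistency check, the resulting adjoint satisfies $\mathbf 1^\top\nabla_z\mathcal L_Z(Pz)=0$. This agrees with the observation after Proposition~1 that centering removes the common-shift direction: the transported source is backward-inactive along $\mathbf 1$, exactly like the CE gradient in \Eqref{eq:ce_shift_direction}.
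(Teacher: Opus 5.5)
Your proposal is correct and follows the paper's proof: the chain rule through the linear map $z\mapsto Pz$ gives $P^\top\delta_{\widetilde z,\mathrm{centered}}^Z$, and then the symmetry of $P$ together with $Pp=p-\frac{1}{V}\mathbf{1}$ yields the displayed identity. Your explicit use of shift invariance to justify $\softmax(Pz)=p$ and your zero-sum consistency check add detail that the paper leaves implicit.
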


\begin{remark}[Coordinate interpretation]
	Thus, centering removes the common-shift contribution in the deployed-logit source coordinate. When mapped back through the centering projection, the optimizer-facing raw-logit adjoint is additionally projected to the zero-sum subspace. For CE, the deployed-logit source is $p-y$ and $\mathbf{1}^\top(p-y)=0$ for a one-hot target, so
	\begin{align*}
		\nabla_z\ell_{\mathrm{CE}}(Pz,y)=P(p-y)=p-y .
	\end{align*}
	Thus, the CE raw-logit adjoint is unchanged by the projection, whereas the Z-loss raw-logit adjoint changes because $Pp=p-(1/V)\mathbf{1}$. Hence, the removed common-shift component is specific to the Z-loss deployed source.
\end{remark}

Output-head centering preserves CE but intentionally changes the Z-loss source coordinate. The effect of centering should be judged by transport diagnostics such as $P_Z^{99.9}$ and $A_p^{99}$, with the complete diagnostic suite specified later, because removing the common-shift channel can be beneficial or introduce different trade-offs across model geometries.

\subsection{Factorized Objective}
\label{app:factorized_objective}

Writing $\nu\coloneqq\log\widetilde Z$ and choosing targets with $c=c_\mu+c_{\mathrm{rel}}$, the original Z-loss expands as
\begin{equation}
	\begin{aligned}
		\lambda(\mu+\nu-c)^2
		 & =
		\lambda(\mu-c_\mu)^2
		+2\lambda(\mu-c_\mu)(\nu-c_{\mathrm{rel}}) \\
		 & +\lambda(\nu-c_{\mathrm{rel}})^2 .
	\end{aligned}
	\label{eq:z_expand}
\end{equation}
\Eqref{eq:z_expand} motivates objective variants that target different transport coordinates. The factorized objective is a deliberately factorized inductive bias that drops the cross term and allows separate coefficients and targets for shift and relative-logit normalization.
\begin{equation}
	\mathcal{L}_{\mathrm{fact}}
	\coloneqq
	\lambda_\mu(\mu-c_\mu)^2
	+
	\lambda_{\mathrm{rel}}
	(\log\widetilde Z-c_{\mathrm{rel}})^2 .
	\label{eq:factorized}
\end{equation}

\subsection{Architecture-Aware Objectives}

For the gain-aware Z-loss runs, we use a detached output-to-hidden-gain weight with $\beta\ge 0$,
\begin{equation}
	\begin{aligned}
		A_p(z)
		 & \coloneqq
		\frac{\norm{W_U^\top p}_2}{\norm{p}_2+\varepsilon},
		\\
		\mathcal{L}_{\mathrm{gain}}
		 & \coloneqq
		\lambda
		\frac{(\log Z-c)^2}
		{1+\beta\sg(A_p(z)^2)} .
	\end{aligned}
	\label{eq:gain_aware}
\end{equation}
The stop-gradient is part of the objective used in the experiments; without the stop-gradient, differentiating $A_p(z)$ would add extra source terms. With the detached gain, the logit-coordinate source is
\begin{equation}
	\nabla_z\mathcal{L}_{\mathrm{gain}}
	=
	\frac{2\lambda(\log Z-c)}
	{1+\beta\sg(A_p(z)^2)}p .
	\label{eq:gain_aware_source}
\end{equation}

\subsection{Router Effective-Scale Derivations}
\label{app:router_scale_derivations}

This subsection gives the details of the router objective and reduction-scale statements summarized in the main text. Define
\begin{equation}
	\begin{aligned}
		Z_{R,t,\ell}
		 & \coloneqq
		\sum_{e=1}^{N_{\mathrm{exp}}}\exp r_{t,\ell,e}, \\
		\mathcal{L}_{R}
		 & \coloneqq
		\lambda_R
		\sum_{t,\ell}
		\alpha_{t,\ell}
		\left(
		\log Z_{R,t,\ell}-c_R
		\right)^2 .
	\end{aligned}
	\label{eq:router_loss}
\end{equation}

\begin{proposition}[Router coefficient and relative scale]
	Let $N_{\mathrm{tl}}$ be the number of token-layer router decisions, and suppose token-layer-mean router Z-loss uses $\alpha=1/N_{\mathrm{tl}}$. If an implementation uses a uniform per-decision coefficient $\lambda_R\alpha$, the effective gradient scale relative to the token-layer mean is $N_{\mathrm{tl}}\alpha$. More generally, decision $(t,\ell)$ has a relative scale of $N_{\mathrm{tl}}\alpha_{t,\ell}$.
	\label{prop:router_effective_coefficient}
\end{proposition}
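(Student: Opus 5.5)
The plan is to differentiate the router objective in \Eqref{eq:router_loss} one decision at a time and compare the resulting per-decision source with the one produced by the token-layer-mean reference. Because $\mathcal{L}_R$ is a sum over decisions, and the summand for $(t,\ell)$ depends on the router scores only through $r_{t,\ell}$, the gradient with respect to $r_{t,\ell}$ comes entirely from that single summand. Applying the source identity \Eqref{eq:source} along the expert axis (with $N_{\mathrm{exp}}$ in place of $V$ and $c_R$ in place of $c$) gives
\begin{equation*}
	\delta_{r,t,\ell}^{R}
	=\frac{\partial\mathcal L_R}{\partial r_{t,\ell}}
	=2\lambda_R\alpha_{t,\ell}\left(\log Z_{R,t,\ell}-c_R\right)p_{R,t,\ell},
\end{equation*}
where $p_{R,t,\ell}\coloneqq\softmax(r_{t,\ell})$. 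This is exactly the first line of \Eqref{eq:causal_pipeline}, with exact statistics and reduction weight $\alpha_{t,\ell}$.

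Next I would write down the reference. Token-layer-mean router Z-loss sets $\alpha_{t,\ell}=1/N_{\mathrm{tl}}$ for every decision. Its source at the same scores is therefore
\begin{equation*}
	\delta_{r,t,\ell}^{R,\mathrm{mean}}
	=\frac{2\lambda_R}{N_{\mathrm{tl}}}\left(\log Z_{R,t,\ell}-c_R\right)p_{R,t,\ell}.
\end{equation*}
The two sources share the signed amplitude factor $\log Z_{R,t,\ell}-c_R$ and the shape $p_{R,t,\ell}$. Their ratio is therefore the scalar $\alpha_{t,\ell}/(1/N_{\mathrm{tl}})=N_{\mathrm{tl}}\alpha_{t,\ell}$. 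The degenerate case where $\log Z_{R,t,\ell}=c_R$ makes both sources zero, and I would read it as "the implementation source equals $N_{\mathrm{tl}}\alpha_{t,\ell}$ times the reference source," which still holds.

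The ratio then has to be carried to parameters. By \Eqref{eq:transport}, any router parameter block receives $J_\theta^\top\delta^R$. Since $J_\theta$ depends only on the forward pass and not on the reduction weights, multiplying a decision's source by $N_{\mathrm{tl}}\alpha_{t,\ell}$ multiplies that decision's contribution to every parameter gradient by the same factor. The uniform case $\alpha_{t,\ell}=\alpha$ then follows immediately. The per-decision coefficient is $\lambda_R\alpha$, every decision is rescaled by $N_{\mathrm{tl}}\alpha$, and by linearity of the sum over decisions the total Z-loss parameter gradient is rescaled by $N_{\mathrm{tl}}\alpha$ as well.

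I expect the main obstacle to be interpretive rather than computational. The statement needs to specify which forward pass the comparison is made at, namely the same scores and the same top-$k$ selection. It also needs to address whether active-route conventions make $\alpha$ depend on the scores. If $\alpha_{t,\ell}$ depended on the scores through, say, the number of active routes, differentiating it would add extra source terms. I would handle this by stating that the reduction weights are treated as fixed per step, being piecewise constant in the discrete top-$k$ selection. This is the same way the stop-gradient is handled in \Eqref{eq:gain_aware}, and it makes the factorization exact. As a sanity check, I would then instantiate the experimental conventions to recover the main-text claim that active-route mean gives relative scale $1/k$ and active-route sum gives relative scale $k$.
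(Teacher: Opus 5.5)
Your proposal is correct and follows the paper's argument. The paper differentiates the router objective for a fixed decision, notes that the gradient is linear in the absolute per-decision coefficient $\lambda_R\alpha_{t,\ell}$, and takes the ratio to the token-layer-mean coefficient $\lambda_R/N_{\mathrm{tl}}$. Your explicit source formula, the push-through of the ratio via $J_\theta^\top$, and your remark on treating $\alpha_{t,\ell}$ as score-independent are faithful elaborations of steps the paper leaves implicit.
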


For the top-$k$ convention statements below, write
\begin{align*}
	\phi_{t,\ell}
	\coloneqq
	\left(\log Z_{R,t,\ell}-c_R\right)^2 .
\end{align*}

\begin{corollary}[Top-$k$ active-route mean scale]
	\label{cor:router_topk_active_mean}
	Under the same token-layer-mean reference, assuming exactly $k$ active routes per token-layer decision, if one computes a single violation per token-layer decision, does not replicate the violation in the numerator, and nevertheless normalizes by the active-route count $N_{\mathrm{tl}}k$,
	\begin{align*}
		\mathcal{L}_R^{\mathrm{activeMean}}
		\coloneqq
		\lambda_R\frac{1}{N_{\mathrm{tl}}k}\sum_{t,\ell}\phi_{t,\ell},
	\end{align*}
	then the effective scale is $1/k$.
\end{corollary}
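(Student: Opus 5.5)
The plan is to reduce the corollary to Proposition~\ref{prop:router_effective_coefficient} by writing $\mathcal{L}_R^{\mathrm{activeMean}}$ in the general form of \Eqref{eq:router_loss} and then reading off the per-decision reduction weight. With $\phi_{t,\ell}=(\log Z_{R,t,\ell}-c_R)^2$, the active-route mean objective is exactly $\lambda_R\sum_{t,\ell}\alpha_{t,\ell}\phi_{t,\ell}$ with the uniform weight $\alpha_{t,\ell}=\alpha\coloneqq 1/(N_{\mathrm{tl}}k)$. The hypotheses are what make this identification valid. There are exactly $k$ active routes per decision, so the denominator is the constant $N_{\mathrm{tl}}k$. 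The numerator contains one copy of $\phi_{t,\ell}$ per decision rather than $k$ replicated copies, so no factor of $k$ cancels.

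Next I apply Proposition~\ref{prop:router_effective_coefficient}. The token-layer-mean reference uses $\alpha=1/N_{\mathrm{tl}}$, and an implementation with uniform per-decision coefficient $\lambda_R\alpha$ has relative scale $N_{\mathrm{tl}}\alpha$. Substituting gives $N_{\mathrm{tl}}\cdot 1/(N_{\mathrm{tl}}k)=1/k$.

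To make the \emph{gradient} scale explicit, I would also differentiate per decision. By \Eqref{eq:source} applied on the expert axis, $\partial\mathcal{L}_R^{\mathrm{activeMean}}/\partial r_{t,\ell}=2\lambda_R(N_{\mathrm{tl}}k)^{-1}(\log Z_{R,t,\ell}-c_R)\,\softmax(r_{t,\ell})$. The token-layer-mean gradient is the same expression with $N_{\mathrm{tl}}^{-1}$, so the ratio is $1/k$ for every decision, independent of the logits. The same factor then carries through \Eqref{eq:transport} to every parameter gradient, because $J_\theta^\top$ is linear.

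There is no real obstacle; the only delicate point is bookkeeping. The $1/k$ arises precisely because the count normalization and the numerator disagree. If $\phi_{t,\ell}$ were replicated once per active route, the factor $k$ would cancel and the scale would be $1$. I would state this explicitly so that the role of the non-replication assumption is clear. I would also note that the top-$k$ selection does not enter $\phi_{t,\ell}$, since $Z_{R,t,\ell}$ sums over all experts, so no differentiation through the selection is needed.
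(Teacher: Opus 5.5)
Your proposal is correct and matches the paper's proof: identify the uniform reduction weight $\alpha = 1/(N_{\mathrm{tl}}k)$ and apply Proposition~\ref{prop:router_effective_coefficient} to obtain $N_{\mathrm{tl}}\alpha = 1/k$. Your per-decision gradient computation and your remark on the role of the non-replication assumption are accurate, but the proof does not need them.
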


\begin{corollary}[Top-$k$ active-route sum scale]
	\label{cor:router_topk_active_sum}
	Under the same token-layer-mean reference, assuming exactly $k$ active routes per token-layer decision, if an implementation sums $k$ active-route replicas of the violation before applying the token-layer mean,
	\begin{align*}
		\mathcal{L}_R^{\mathrm{activeSum}}
		\coloneqq
		\lambda_R\frac{1}{N_{\mathrm{tl}}}\sum_{t,\ell}\sum_{j=1}^{k}\phi_{t,\ell}
		=
		\lambda_R\frac{k}{N_{\mathrm{tl}}}\sum_{t,\ell}\phi_{t,\ell},
	\end{align*}
	then the effective scale is $k$.
\end{corollary}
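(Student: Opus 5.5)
The plan is to reduce Corollary~\ref{cor:router_topk_active_sum} to Proposition~\ref{prop:router_effective_coefficient} by rewriting $\mathcal{L}_R^{\mathrm{activeSum}}$ in the canonical form of \Eqref{eq:router_loss} and reading off the per-decision reduction weight $\alpha_{t,\ell}$. First I will verify the displayed identity. The summand $\phi_{t,\ell}$ does not depend on the inner replica index $j$, because each of the $k$ active-route replicas carries the same violation. Under the assumption of exactly $k$ active routes per decision, the inner sum therefore equals $k\phi_{t,\ell}$. Pulling the constant $k$ outside the outer sum gives $\lambda_R(k/N_{\mathrm{tl}})\sum_{t,\ell}\phi_{t,\ell}$.

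Next I will match this expression to \Eqref{eq:router_loss} with $\alpha_{t,\ell}=k/N_{\mathrm{tl}}$ for every decision. This weight is uniform, so the uniform case of Proposition~\ref{prop:router_effective_coefficient} applies directly. The relative scale is then $N_{\mathrm{tl}}\alpha=N_{\mathrm{tl}}\cdot k/N_{\mathrm{tl}}=k$.

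To make the gradient-level meaning of ``effective scale'' explicit, rather than resting only on the loss value, I will also differentiate with respect to $r_{t,\ell}$. The router analogue of \Eqref{eq:source} gives the per-decision source $2\lambda_R\alpha_{t,\ell}(\log Z_{R,t,\ell}-c_R)\softmax(r_{t,\ell})$. Its ratio to the token-layer-mean source, which uses $\alpha=1/N_{\mathrm{tl}}$, is $k$ for every decision and every expert component. By \Eqref{eq:transport}, the same ratio then carries through $J_\theta^\top$ to any parameter block.

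I expect the only delicate point to be interpretive rather than computational. One has to be clear that the effective scale is measured against the token-layer-mean reference with the same nominal $\lambda_R$ and the same target $c_R$. One also has to rely on the exactly-$k$ assumption, since a variable active count would make $\alpha_{t,\ell}$ non-uniform, and only the general per-decision clause of the proposition would then apply.
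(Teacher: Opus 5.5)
Your proposal is correct and matches the paper's proof: you identify the uniform per-decision weight $\alpha_{t,\ell}=k/N_{\mathrm{tl}}$ and apply Proposition~\ref{prop:router_effective_coefficient} to obtain $N_{\mathrm{tl}}\alpha=k$. Your explicit check of the displayed identity and your gradient-level ratio argument add nothing the proof needs, but they are consistent with the paper's one-line argument and spell out the linearity that the proposition's proof relies on.
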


\paragraph{Top-$k$ conventions.} In the experiments and tables, active-route mean denotes the single-violation convention that normalizes by the active-route count. Active-route sum denotes the convention that sums active-route replicas before taking the token-layer mean in Corollary~\ref{cor:router_topk_active_sum}. Thus, the reported effective scale is $1/k$ for active-route mean by Corollary~\ref{cor:router_topk_active_mean}. The reported effective scale is $k$ for active-route sum by Corollary~\ref{cor:router_topk_active_sum}.

\subsection{Formal Details and Reference Implementation}

\subsubsection{Notation}

\begin{center}
	\captionof{table}{Notation summary}
	\label{tab:list_notation}
	\scriptsize
	\begin{tabularx}{\columnwidth}{@{}lY@{}}
		\toprule
		Symbol                                                          & Meaning                                                           \\
		\midrule
		$z,z_i$                                                         & Output-logit vector and a logit coordinate                        \\
		$V,\mathbf{1},y$                                                & Vocabulary size, ones vector, target                              \\
		$Z,\widetilde Z,\log Z$                                         & Raw and centered normalizers and the raw log-normalizer           \\
		$\mathcal{L}_Z$                                                 & Log-normalizer Z-loss                                             \\
		$\mathcal{L}_{\mathrm{fact}}$                                   & Factorized Z-loss                                                 \\
		$\mathcal{L}_{\mathrm{gain}}$                                   & Gain-aware Z-loss                                                 \\
		$c$                                                             & Z-loss target                                                     \\
		$\lambda,\lambda_{\rm aux},\lambda_{\rm diag}$                  & Dense Z-loss coefficients                                         \\
		\midrule
		$p,\widehat p$                                                  & Exact and approximate softmax probabilities                       \\
		$\ell_{\mathrm{CE}}$                                            & Token CE loss                                                     \\
		$\delta_z^Z,\widehat{\delta}_z^Z$                               & Exact and approximate Z-loss logit gradients (sources)            \\
		$\epsilon_Z,\epsilon_p,\varepsilon$                             & Source errors, stabilizer                                         \\
		$\theta,J_\theta$                                               & Parameters and logit Jacobian                                     \\
		$g_Z,g_{\mathrm{CE}}$                                           & Z-loss and CE gradients                                           \\
		$R_Z,C_Z$                                                       & Gradient ratio and cosine                                         \\
		$P_Z^{99.9}$                                                    & 99.9th-percentile (p99.9) log-normalizer tail                     \\
		$\Delta_{\mathrm{src}},\Delta_\theta$                           & Source and gradient mismatch                                      \\
		\midrule
		$W_U,E,u$                                                       & Output head, embedding, hidden state                              \\
		$\bar w,\bar b,\mu,\mu_{\mathrm{head}}$                         & Output-head common shift terms                                    \\
		$\widetilde W_U,\widetilde z,P$                                 & Centered head, logits, projection                                 \\
		$c_\mu,c_{\mathrm{rel}}$                                        & Shift and relative targets                                        \\
		$\lambda_\mu,\lambda_{\mathrm{rel}}$                            & Factorized coefficients                                           \\
		$A_p,a_t,\bar A_p,A_p^{99}$                                     & Output-to-hidden gain metrics                                     \\
		$\beta,\sg(\cdot)$                                              & Gain weight and stop-gradient                                     \\
		$\sigma_j,u_j,q$                                                & Singular value, left singular vector, normalized source direction \\
		\midrule
		$r_{t,\ell,e}$                                                  & Router logit                                                      \\
		$N_{\mathrm{exp}},N_{\mathrm{tl}},k$                            & Experts, decisions, top-$k$                                       \\
		$\lambda_R,\alpha_{t,\ell},\phi_{t,\ell}$                       & Router weight terms                                               \\
		$\lambda_{R,t,\ell}^{\mathrm{abs}},s_{R,t,\ell}^{\mathrm{rel}}$ & Router effective scales                                           \\
		$\delta_u^Z,\delta_h^Z$                                         & Hidden-state Z corrections                                        \\
		\midrule
		$G_{99},H_{99},\rho_{99}$                                       & Gradient-tail endpoints                                           \\
		$\Phi_\tau,\Phi_{\mathrm{nf}},\tau$                             & Gradient-event rates, threshold                                   \\
		$m_s,v_s,d_s^{\mathrm{proxy}}$                                  & Adam state and proxy direction                                    \\
		$M_{16},\kappa_{\max,b},\Psi_\kappa$                            & fp16 overflow-margin metrics                                      \\
		\midrule
		$m,S_{\mathrm{lse}}$                                            & Log-sum-exp statistics                                            \\
		$Q_\alpha,\E_t$                                                 & Quantile and token average                                        \\
		\bottomrule
	\end{tabularx}
\end{center}

\subsubsection{Proofs}

\paragraph{Z-loss common-shift sensitivity.}
\begin{proof}
	A direct computation gives
	\begin{align*}
		Z(z+a\mathbf{1}) & =e^a Z(z),
		\qquad
		\log Z(z+a\mathbf{1})=\log Z(z)+a,
	\end{align*}
	and therefore
	\begin{align*}
		\mathcal{L}_Z(z+a\mathbf{1})
		=\lambda(\log Z(z)+a-c)^2 .
	\end{align*}
	Differentiating this shifted objective with respect to $z_i$ while holding $a$ fixed yields
	\begin{align*}
		\frac{\partial}{\partial z_i}\mathcal{L}_Z(z+a\mathbf{1})
		=
		2\lambda(\log Z(z)+a-c)
		\frac{e^{z_i}}{\sum_j e^{z_j}},
	\end{align*}
	because $\partial \log Z(z)/\partial z_i=\softmax(z)_i$. Collecting coordinates gives the displayed gradient.
\end{proof}

\paragraph{Centering preserves CE.}
\begin{proof}
	By the definition of $\bar w$,
	\begin{align*}
		\widetilde z
		 & =\widetilde W_Uu           \\
		 & =W_Uu-\mathbf{1}(\bar w u) \\
		 & =z-\alpha\mathbf{1},
		\qquad
		\alpha\coloneqq\bar w u\in\R .
	\end{align*}
	Thus, centering subtracts the same scalar from every logit coordinate of the token. For each vocabulary index $i$,
	\begin{align*}
		\softmax(\widetilde z)_i
		=
		\frac{e^{z_i-\alpha}}{\sum_j e^{z_j-\alpha}}
		=
		\frac{e^{z_i}}{\sum_j e^{z_j}}
		=
		\softmax(z)_i .
	\end{align*}
	Since token CE with a one-hot target is the negative logarithm of the target softmax probability, the CE loss is identical for $z$ and $\widetilde z$. The argument relies only on exact algebra, hence the stated real-arithmetic qualification.
\end{proof}

\paragraph{Raw and centered Z-loss sources.}
\begin{proof}
	For $z=\widetilde z+\mu\mathbf{1}$, the two deployed-logit coordinates have the same softmax $p$, and $Z(z)=e^\mu\widetilde Z$, so $\log Z(z)=\mu+\log\widetilde Z$. Using \Eqref{eq:source}, the raw and centered Z-loss sources are therefore
	\begin{align*}
		\delta_{z,\mathrm{raw}}^Z
		 & =
		2\lambda_{\mathrm{raw}}(\mu+\log\widetilde Z-c_{\mathrm{raw}})p, \\
		\delta_{\widetilde z,\mathrm{centered}}^Z
		 & =
		2\lambda_{\mathrm{ctr}}(\log\widetilde Z-c_{\mathrm{ctr}})p .
	\end{align*}
	Subtracting the two expressions gives \Eqref{eq:removed_source_general}.
\end{proof}

\paragraph{Removed deployed source component.}
\begin{proof}
	Set $\lambda_{\mathrm{raw}}=\lambda_{\mathrm{ctr}}=\lambda$ and $c_{\mathrm{raw}}=c_{\mathrm{ctr}}=c$ in \Eqref{eq:removed_source_general}.
\end{proof}

\paragraph{Projected raw-logit adjoint.}
\begin{proof}
	If the centered loss is viewed as a function of the raw logits through $\widetilde z=Pz$, then the chain rule gives
	\begin{align*}
		\nabla_z \mathcal{L}_Z(Pz)
		=
		P^\top\delta_{\widetilde z,\mathrm{centered}}^Z .
	\end{align*}
	Since $P$ is symmetric and $Pp=p-(1/V)\mathbf{1}$, \Eqref{eq:centered_raw_adjoint} follows.
\end{proof}

\paragraph{Tied embedding gradient decomposition.}
\begin{proof}
	Let $\Phi(E_{\mathrm{in}},E_{\mathrm{out}})$ denote the scalar Z-loss of the untied network and define the tied objective by $\phi(E)\coloneqq\Phi(E,E)$. For any perturbation $H$ of the tied embedding table,
	\begin{align*}
		D\phi(E)[H]
		=
		D_{E_{\mathrm{in}}}\Phi(E,E)[H]
		+
		D_{E_{\mathrm{out}}}\Phi(E,E)[H],
	\end{align*}
	which gives the displayed decomposition after identifying differentials with Frobenius inner products.
\end{proof}

\paragraph{Gain is anisotropy-weighted.}
\begin{proof}
	Let $W_U=U\Sigma V^\top$ be a singular value decomposition, with left singular vectors $u_j$ and singular values $\sigma_j\ge 0$. Since $q=p/\norm{p}_2$ and $\varepsilon$ is ignored,
	\begin{align*}
		a(p)^2
		=
		\frac{\norm{W_U^\top p}_2^2}{\norm{p}_2^2}
		=
		\norm{W_U^\top q}_2^2 .
	\end{align*}
	Using $W_U^\top=V\Sigma^\top U^\top$ and the orthonormality of the columns of $V$,
	\begin{align*}
		\norm{W_U^\top q}_2^2
		=
		\norm{\Sigma^\top U^\top q}_2^2
		=
		\sum_j \sigma_j^2 \inner{q}{u_j}^2 .
	\end{align*}
	Because the $u_j$ form an orthonormal set, $\sum_j\inner{q}{u_j}^2\le \norm{q}_2^2=1$. Hence,
	\begin{align*}
		a(p)^2
		\le
		\sigma_{\max}(W_U)^2\sum_j\inner{q}{u_j}^2
		\le
		\sigma_{\max}(W_U)^2 .
	\end{align*}
\end{proof}

\paragraph{Router coefficient and relative scale.}
\begin{proof}
	For a fixed token-layer router decision, define
	\begin{align*}
		\phi_{t,\ell}
		\coloneqq
		\left(\log Z_{R,t,\ell}-c_R\right)^2 .
	\end{align*}
	In the token-layer-mean convention,
	\begin{align*}
		\mathcal{L}_R^{\mathrm{tok}}
		\coloneqq
		\lambda_R\frac{1}{N_{\mathrm{tl}}}\sum_{t,\ell}\phi_{t,\ell},
	\end{align*}
	so the coefficient multiplying each decision is $\lambda_R/N_{\mathrm{tl}}$. Differentiating \Eqref{eq:router_loss} for a fixed decision shows that the router Z-loss gradient is linear in the absolute per-decision coefficient $\lambda_R\alpha_{t,\ell}$. Thus, the scale relative to the token-layer mean is
	\begin{align*}
		\frac{\lambda_R\alpha}{\lambda_R/N_{\mathrm{tl}}}=N_{\mathrm{tl}}\alpha .
	\end{align*}
\end{proof}

\paragraph{Top-$k$ active-route mean scale.}
\begin{proof}
	For the convention that divides a single violation by the active-route count, $\alpha=1/(N_{\mathrm{tl}}k)$, so Proposition~\ref{prop:router_effective_coefficient} gives $N_{\mathrm{tl}}\alpha=1/k$.
\end{proof}

\paragraph{Top-$k$ active-route sum scale.}
\begin{proof}
	For active-route sum before the token-layer mean, the per-decision coefficient is $k\lambda_R/N_{\mathrm{tl}}$, equivalently $\alpha=k/N_{\mathrm{tl}}$, so Proposition~\ref{prop:router_effective_coefficient} gives $N_{\mathrm{tl}}\alpha=k$.
\end{proof}

\subsubsection{Reference Python Implementation}

Listing~\ref{lst:centered_source} gives an example implementation of the dense output-head operations used in the experiments. The first function deploys centered logits by subtracting only the fp32 output-head common shift from already computed raw logits, matching \Eqref{eq:centered_raw}. The second function returns the Z-loss logit adjoint from \Eqref{eq:source}, including the mean-reduction scaling used by the training and audit code.

\begin{lstlisting}[float=t!,style=pythoncompact,caption={Python implementation example of centered deployed logits and Z-loss logit-source construction.},label={lst:centered_source}]
import torch
import torch.nn.functional as F


def centered_logits_from_raw(raw_logits, hidden, weight, bias=None):
    # Subtract the output-head common shift.
    mean_w = weight.float().mean(dim=0, keepdim=True)
    mean_b = None if bias is None else bias.float().mean().view(1)
    common_shift = F.linear(hidden.float(), mean_w, mean_b)
    return raw_logits.float() - common_shift


def z_loss_source(logits, coef, target, reduction="mean"):
    # Gradient of coef * (logsumexp(logits) - target)^2.
    if reduction not in {"mean", "sum", "none"}:
        raise ValueError(f"unknown reduction: {reduction}")
    logits_f = logits.float()
    log_z = torch.logsumexp(logits_f, dim=-1, keepdim=True)
    probs = torch.softmax(logits_f, dim=-1)
    scale = 2.0 * coef * (log_z - target)
    if reduction == "mean":
        scale = scale / logits_f[..., 0].numel()
    return scale * probs
\end{lstlisting}

\subsection{Diagnostics, Interventions, and Coefficient Regimes}
\label{app:diagnostic_conventions}

\subsubsection{Diagnostic Decision Framework}

We use diagnostics as a decision hierarchy rather than as a leaderboard. A diagnostic first identifies an active source or transport coordinate; the corresponding intervention targets that coordinate; and optimizer-facing endpoints, together with validation PPL, determine whether the intervention is acceptable. Appendix Table~\ref{tab:diagnostic_intervention_map} gives the complete mapping, including the effects that each intervention does and does not guarantee.

At the source-coordinate level, p99 $|\mu|$ diagnoses common-shift excursions. Row-centered deployment sets this coordinate to zero while preserving the softmax distribution and CE, but row-centered deployment does not guarantee reductions in $P_Z^{99.9}$ or $A_p^{99}$; those remain geometry-dependent outcomes. Centered Z-loss additionally controls the log-normalizer in the remaining relative-logit coordinate. The factorized objective in \Eqref{eq:factorized} is used as a controlled decomposition of the shift and relative-logit coordinates rather than as a universally preferred replacement for raw or centered Z-loss.

At the architectural-transport level, $A_p^{99}$ diagnoses output-to-hidden amplification. Gain-aware weighting in \Eqref{eq:gain_aware} attenuates the source conditional on that gain and is evaluated by hidden-source and optimizer-facing tails; gain-aware weighting is not expected to reduce the $A_p^{99}$ metric. For tied embeddings, output- and input-path gradient norms and cosines provide a pathway audit rather than a separate intervention. At the implementation level, $\Delta_{\mathrm{src}}$ and $\Delta_\theta$ diagnose forward--backward source inconsistency. Reusing forward log-sum-exp statistics removes reconstruction inconsistency, although reuse cannot recover precision already lost through quantized logit storage. For MoE routers, $s_R^{\mathrm{rel}}=N_{\mathrm{tl}}\alpha$ is a deterministic scale audit, and coefficients are matched whenever $s_R^{\mathrm{rel}}\ne 1$.

We distinguish these root-cause diagnostics from downstream endpoints. Here, Diag. Z is computed on the deployed logits, and $P_Z^{99.9}\coloneqq Q_{0.999,t}(|\log Z_t-c|)$ measures the corresponding log-normalizer tail; neither is a coordinate-free score across raw and centered deployments. The gain tail $A_p^{99}$ is defined in \Eqref{eq:ap}. $R_Z$ and $R_{\rm aux}$ describe auxiliary-gradient strength relative to CE, while $C_Z$ and $C_{\rm aux}$ describe CE alignment; the gradient ratios and cosines are descriptive rather than quantities every intervention should minimize. Pre-clip gradient events, Adam-state tails, and static fp16 overflow margins measure optimizer-facing consequences. No tail reduction alone is sufficient. An intervention is accepted only when the diagnosed coordinate improves and validation quality remains acceptable.

The low-coefficient continued-pretraining regime uses $\lambda_{\rm aux}=1\times10^{-4}$ to study geometry with limited over-regularization. High-coefficient stress settings use $\lambda_{\rm aux}\in\{1\times10^{-3},3\times10^{-3}\}$ together with stronger optimization pressure.

The same intervention can be quality-neutral at low auxiliary coefficients and reveal trade-offs between validation quality and both update-tail behavior and mixed-precision headroom under stress. The experiments therefore pair validation quality with log-normalizer, gain, gradient-event, optimizer-state, and overflow-headroom endpoints.

\subsubsection{Diagnostic Metrics}

We report a common diagnostic suite. Here and below, $Q_{\alpha,t}$ denotes the empirical $\alpha$-quantile over tokens; the reported tail summaries use p99 and p99.9.
\begin{align}
	R_Z
	 & \coloneqq
	\frac{\norm{g_Z}_F}{\norm{g_{\mathrm{CE}}}_F+\varepsilon},\notag \\
	P_Z^{99.9}
	 & \coloneqq
	Q_{0.999,t}\left(|\log Z_t-c|\right),                            \\
	C_Z
	 & \coloneqq
	\frac{\inner{g_Z}{g_{\mathrm{CE}}}}
	{\norm{g_Z}_F\norm{g_{\mathrm{CE}}}_F+\varepsilon},              \\
	\bar A_p
	 & \coloneqq
	\E_t a_t,
	\qquad
	A_p^{99}\coloneqq Q_{0.99,t}(a_t),                               \\
	\Delta_{\mathrm{src}}
	 & \coloneqq
	\frac{\norm{\widehat{\delta}_z^Z-\delta_z^Z}_2}
	{\norm{\delta_z^Z}_2+\varepsilon},                               \\
	\Delta_{\theta}
	 & \coloneqq
	\frac{\norm{g_{Z,\theta}^{\mathrm{approx}}-g_{Z,\theta}^{\mathrm{ref}}}_F}
	{\norm{g_{Z,\theta}^{\mathrm{ref}}}_F+\varepsilon},              \\
	\lambda_{R,t,\ell}^{\mathrm{abs}}
	 & \coloneqq
	\lambda_R\alpha_{t,\ell},
	\qquad
	s_{R,t,\ell}^{\mathrm{rel}}\coloneqq N_{\mathrm{tl}}\alpha_{t,\ell}.
\end{align}

\begin{table*}[t!]
	\centering
	\scriptsize
	\begin{tabularx}{\textwidth}{@{}YYYYYYY@{}}
		\toprule
		Stage & Risk                                                                                                        & Primary trigger or audit & Action & Expected signature & Not guaranteed & Evidence \\
		\midrule
		Source coordinate
		      & Common-shift excursion
		      & p99 $|\mu|$
		      & Row-centered deployment
		      & $\mu=0$ with unchanged softmax and CE
		      & Lower $P_Z^{99.9}$, $A_p^{99}$, or PPL
		      & Tables~\ref{tab:pretrained}, \ref{tab:finetune}, and Appendix Table~\ref{tab:fineweb_pretrained}                                                                                                 \\
		\addlinespace
		Objective decomposition
		      & Shift and relative-logit coordinates require separate control
		      & Component-wise $\mu$ and $\log\widetilde Z$ violations
		      & Factorized Z-loss
		      & Separate coefficients and targets with the cross term removed
		      & Lower update tails or better PPL; the factorized objective is a controlled probe
		      & \Eqref{eq:factorized} and Appendix Table~\ref{tab:fineweb_update_outlier_sweep}                                                                                                                  \\
		\addlinespace
		Architectural transport
		      & Large output-to-hidden gain
		      & $A_p^{99}$ and hidden-source tails
		      & Detached gain-aware weighting
		      & Attenuated source and update tails conditional on $A_p$
		      & A lower unweighted $A_p^{99}$ value
		      & Table~\ref{tab:stress_diag} and Appendix Tables~\ref{tab:spectral_audit} and~\ref{tab:adam_state_stress}                                                                                         \\
		\addlinespace
		Shared pathway
		      & Tied input and output coupling
		      & Input- and output-path norms and cosines
		      & Pathway audit and coefficient review
		      & Coupled update made visible at the shared table
		      & Automatic gradient reduction; no separate intervention is claimed
		      & Tied-path diagnostics in the Appendix                                                                                                                                                            \\
		\addlinespace
		Implementation
		      & Forward--backward source mismatch
		      & $\Delta_{\mathrm{src}}$ and $\Delta_\theta$
		      & Reuse forward log-sum-exp statistics
		      & Backward source consistent with the computed forward loss
		      & fp32 fidelity after quantized logit storage
		      & Appendix Tables~\ref{tab:source_audit}, \ref{tab:triton_fused}, and~\ref{tab:update_audit}                                                                                                       \\
		\addlinespace
		Router reduction
		      & Effective coefficient differs across top-$k$ conventions
		      & $\lambda_R\alpha$ and $s_R^{\mathrm{rel}}=N_{\mathrm{tl}}\alpha$
		      & Match the coefficient to $s_R^{\mathrm{rel}}=1$
		      & Token-layer-mean correction scale recovered
		      & Better validation quality or load balance
		      & Appendix Tables~\ref{tab:router} and~\ref{tab:moe}                                                                                                                                               \\
		\addlinespace
		Optimizer consequence
		      & Rare transported update pressure
		      & Pre-clip tails, $\Phi_\tau$, Adam state, and fp16 margin
		      & Apply the diagnosed root-cause action, then re-audit
		      & Relevant update or numerical tail reduced
		      & Acceptable validation quality
		      & Appendix Tables~\ref{tab:fineweb_update_outlier}, \ref{tab:adam_state_stress}, and~\ref{tab:overflow_audit}                                                                                      \\
		\addlinespace
		Quality gate
		      & Geometry improves at a quality cost
		      & Validation CE and PPL
		      & Reject, retune, or change the intervention
		      & Quality remains within an explicitly stated tolerance
		      & Simultaneous improvement of every diagnostic
		      & Matched continued-pretraining and coefficient sweeps                                                                                                                                             \\
		\bottomrule
	\end{tabularx}
	\caption{Diagnostic decision map. A trigger identifies a source, transport, implementation, or reduction coordinate; the action targets that coordinate; and consequence endpoints plus validation quality determine whether the action is acceptable. The Not guaranteed column records quantities that should not be used as automatic success criteria for that action.}
	\label{tab:diagnostic_intervention_map}
\end{table*}

\subsubsection{Transport and Numerical Endpoints}

For transport and numerical endpoints, we report optimizer-facing gradient tails, thresholded gradient-event rates, Adam-state endpoints, and static fp16 loss-scale headroom. We do not rely only on final validation quality. Here, pre-clip means measured before any gradient clipping is applied. We use $\Phi_\tau$, with table columns such as $\Phi_{10}$, $\Phi_{50}$, and $\Phi_{100}$, to denote the fraction of optimizer steps whose pre-clip full-model gradient norm exceeds threshold $\tau$. These endpoints, including the Adam-state and static fp16 overflow proxies, are formally defined in the Transport and Numerical Endpoint Definitions subsection.

Gradient clipping is complementary to these endpoint measurements. Gradient clipping can cap the post-backward update norm and is an appropriate training guardrail, but gradient clipping does not change the logit-space Z-loss source, the architecture-dependent transport factor in \Eqref{eq:transport_bound}, or source-consistency errors in fused and low-precision paths. We therefore report pre-clip tails together with clipping or threshold-event frequencies. A method may follow a different effective optimization trajectory when the apparent gradient-tail control of the method depends on frequent clipping. Adam-state endpoints should be read under the stated clipping convention. Clip-before-Adam stacks can reduce moment-state pollution, whereas effectively unclipped stress settings expose the unmitigated transported source.

\subsubsection{Interventions and Coefficient Regimes}

The corresponding practical interventions follow directly. Implementations compute the forward log-sum-exp sufficient statistics $m\coloneqq\max_i z_i$ and $S_{\mathrm{lse}}\coloneqq\sum_i\exp(z_i-m)$ once and reuse the statistics during the backward pass, mirroring the sufficient-statistic discipline behind online softmax kernels \citep{milakov2018online}. This shared-statistics discipline extends standard numerical softmax stabilization to the Z-loss branch and ensures that fused or low-precision paths transport the backward source implied by the forward loss. Row-centered output-head deployment fixes the CE-invariant output-head common-shift gauge and removes that coordinate from Z-loss transport. The row-centered deployment is a model-coordinate intervention, not the internal row-maximum subtraction used to evaluate softmax stably. In experimental tables, Centered head denotes row-centered deployed logits obtained by subtracting the output-head common shift from raw logits, with no applied auxiliary loss. The Centered head configuration is a gauge-only control whose exact-arithmetic CE gradient equals that of the raw CE baseline; the configuration's Z-loss metrics characterize the deployed coordinate rather than an applied auxiliary update. Centered Z-loss denotes the same deployed-logit convention with an applied centered Z-loss. Factorized Z-loss denotes \Eqref{eq:factorized} applied to raw logits with $\lambda_\mu=\lambda_{\mathrm{rel}}=\lambda_{\rm aux}$, $c_\mu=0$, and $c_{\mathrm{rel}}=\log V$ unless stated otherwise. Gain-aware Z-loss denotes \Eqref{eq:gain_aware} applied to raw logits with the detached gain weight. Stress and overflow rows use $\beta=1\times 10^{-2}$ unless stated otherwise. Comparable router reports include both the absolute coefficient $\lambda_R\alpha$ and the scale $N_{\mathrm{tl}}\alpha$ relative to the token-layer mean rather than only nominal $\lambda_R$.

The dense output-head experiments use two coefficient regimes to separate geometry from stress behavior. The low-coefficient continued pretraining regime uses an applied auxiliary coefficient of $\lambda_{\rm aux}=1\times 10^{-4}$. Several matched settings have small paired PPL differences from CE, allowing differences in $P_Z^{99.9}$ and $A_p^{99}$ to track geometry with limited over-regularization. We report the paired differences rather than treat overlap of three-run standard deviations as a formal equivalence test. The high-coefficient stress regime uses $\lambda_{\rm aux}\in\{1\times 10^{-3},3\times 10^{-3}\}$ and higher learning rates. This regime isolates scalar-tail reduction from transported update tails under stronger auxiliary pressure. These settings are interpreted through transport and numerical endpoints together with paired PPL differences. In tables, Standard Z-loss denotes \Eqref{eq:zloss} with $c=\log V$ applied to raw logits; conventional formulations commonly use $c=0$. Diag. Z uses the reporting coefficient $\lambda_{\rm diag}$ specified for that table or run family. Unless otherwise specified in a caption, dense Diag. Z tables use $\lambda_{\rm diag}=1\times 10^{-4}$. For CE-only or Centered head rows, $\lambda_{\rm aux}=0$ and Diag. Z is not an applied auxiliary loss. In table columns, $R_{\rm aux}$ denotes the auxiliary-gradient norm divided by the CE-gradient norm. $C_{\rm aux}$ denotes the auxiliary-to-CE gradient cosine.

The Z-loss target is $c=\log V$ unless stated otherwise. The zero logit vector, equivalently a zero-mean uniform logit vector, therefore has zero violation. For centered relative-logit diagnostics, the relative target uses the same $\log V$ reference after removing the token-wise mean. We report high-percentile statistics, especially p99 and the 99.9th percentile (p99.9), because the hypothesized failure mode is rare-token or rare-step numerical pressure. Mean Z-loss can improve while the tail of $J^\top\delta_z^Z$ worsens. We therefore treat $P_Z^{99.9}$, $A_p^{99}$, thresholded $\Phi_\tau$, maximum-coordinate tails, Adam-state tails, and static loss-scale endpoints as primary transport and numerical measurements. PPL is used to reject interventions that buy tail reduction by degrading the language model.

\subsection{Experimental Protocol}

\subsubsection{Experimental Setup Details}
\label{app:setup_details}

\paragraph{Pretrained diagnostics.} Main diagnostics use corpus text and pretrained language models. The evaluated models are DistilGPT-2, GPT-2, GPT-2 Medium, and GPT-2 Large \citep{radford2019language,wolf2020transformers}, together with Pythia-160M and Pythia-410M \citep{biderman2023pythia}. For reproducibility, the latter are the \texttt{EleutherAI/pythia-160m-deduped} and \texttt{EleutherAI/pythia-410m-deduped} checkpoints, both trained on the deduplicated Pile. The primary evaluation uses non-overlapping WikiText-103 validation blocks of length 256 \citep{merity2016pointer}. DistilGPT-2 and GPT-2 use 512 validation sequences, yielding 131,072 next-token predictions. GPT-2 Medium and Pythia-160M use 256 validation sequences, yielding 65,536 predictions. GPT-2 Large and Pythia-410M use 128 validation sequences, yielding 32,768 predictions. These smaller evaluations target scale-sensitive cases where centering can move tail metrics in either direction. The diagnostics also include a 64-block WikiText evaluation of \texttt{EleutherAI/pythia-1b-deduped}. For compact table labels, Pythia-160M, Pythia-410M, and Pythia-1B denote these checkpoints throughout. To test whether the diagnostic depends on WikiText, we also evaluate GPT-2 and GPT-2 Medium on FineWeb-Edu. This cross-corpus check uses 128 streamed blocks from the FineWeb-Edu \texttt{sample-10BT} training sample \citep{penedo2024fineweb}. The cross-corpus check is paired with 64-block FineWeb-Edu evaluations for GPT-2 Large and Pythia-410M.

\paragraph{Dense training protocols.} Dense training interventions continue pretraining GPT-2 from the same initialization on WikiText-103 training blocks. The main run uses 4800 optimizer steps, a micro-batch size of 16, a gradient-accumulation factor of 2, a sequence length of 256, and bf16 autocast. Optimization uses Adam with decoupled weight decay (AdamW), $\beta_1=0.9$, $\beta_2=0.95$, a weight decay of 0.1, a peak learning rate of $5\times 10^{-5}$, 240 warmup steps, cosine decay, and gradient clipping at 1.0 \citep{pascanu2013difficulty}. Here and below, the micro-batch size denotes the reported number of sequences in each forward and backward accumulation pass before gradient accumulation. For example, a micro-batch size of 16 with a gradient-accumulation factor of 2 gives an optimizer-step batch size of 32 sequences. Each 4800-step run consumes 39.3M training tokens and is evaluated on 131,072 validation tokens. Run counts and matched-run rules appear in the subsection on seed and matched-run details. A 1200-step GPT-2 diagnostic comparison uses the same three-run design, 9.83M tokens per run, and 65,536 validation tokens. This diagnostic comparison covers CE, standard Z-loss, and centered Z-loss. For this comparison, we measure p99 pre-clip full-model gradient norms and the tied-embedding auxiliary-to-CE gradient ratio, and we decompose the tied-embedding auxiliary gradient into output and input paths for the auxiliary gradient. A high-coefficient gradient-tail stress experiment uses $\lambda_{\rm aux}=1\times 10^{-3}$ for auxiliary-method rows. The stress experiment uses 1200 steps, 9.83M tokens per run, and the same matched-run design. In the stress experiment, we compare CE, centered head, standard Z-loss, centered Z-loss, and gain-aware Z-loss under the same optimizer. For CE and centered head rows, $\lambda_{\rm aux}=0$. The diagnostic coefficient $\lambda_{\rm diag}$ for the CE and centered head rows is used only for Diag. Z reporting.

\paragraph{Stress and precision endpoints.} For all continued pretraining runs, model parameters are kept as fp32 master weights. bf16 is used only through autocast. This precision convention prevents optimizer failures caused by fp16 storage from being conflated with the Z-loss geometry under study. The FineWeb-Edu GPT-2 Medium evaluation includes a 900-step effectively unclipped web-corpus update-tail setting. The 900-step setting uses the same train and validation offsets, a learning rate of $2\times 10^{-4}$, $\lambda_{\rm aux}=1\times 10^{-3}$, 3.69M consumed tokens per run, and a clipping threshold of $1\times 10^{8}$. This threshold effectively disables clipping. The setting probes update-tail behavior under aggressive optimization. An update-outlier setting uses the same model, dataset, and offsets. The update-outlier setting uses a learning rate of $8\times 10^{-4}$, $\lambda_{\rm aux}=3\times 10^{-3}$, 600 optimizer steps, 2.46M consumed tokens per run, and the same clipping threshold. During this experiment, we separately log the fractions of optimizer steps for which the pre-clip full-model gradient norm exceeds each of the thresholds 10, 50, and 100, together with the nonfinite-microbatch and nonfinite-gradient fractions. A centered auxiliary-coefficient sweep uses $\lambda_{\rm aux}\in\{1\times 10^{-3},3\times 10^{-4},1\times 10^{-4}\}$ under the same matched-run setup, data offsets, optimizer, and clipping threshold. An Adam-state endpoint in the same high-coefficient FineWeb-Edu setting samples AdamW state every 50 optimizer steps. This endpoint tests whether raw-logit transported spikes enter the second-moment memory of Adam even when no run reaches a nonfinite-gradient event.

\paragraph{Implementation audits.} For fused-kernel sensitivity, we implement a row-wise GPU kernel in \mbox{Triton} \citep{tillet2019triton} that fuses CE and Z-loss forward and backward computations. The fused kernel reports both total CE+Z logit-gradient consistency and Z-loss source consistency with a PyTorch fp32 reference \citep{paszke2019pytorch}. Large-vocabulary CE kernels increasingly fuse, approximate, or avoid materializing the full logit matrix. This audit therefore targets the same log-sum-exp source statistics that such implementations must preserve \citep{grave2017efficient,wijmans2024cut}. We run the fused-kernel audit on eight non-overlapping GPT-2 blocks from WikiText-103 and eight streamed GPT-2 blocks from FineWeb-Edu. The audit uses logit scales 1 and 4 with fp32, bf16, and fp16 logit storage. To test whether source mismatch reaches actual optimizer inputs, an update audit backpropagates total CE+Z logit sources computed from fp32 reference logits and storage-quantized logits through GPT-2 and GPT-2 Medium model graphs on corpus batches. In the update audit, we compare output-head gradients, whole-model gradients, and first-step Adam directions. The main output-head audit uses WikiText-103 and streamed FineWeb-Edu blocks, logit scales 1 and 4, and the same fp32 model parameters as the training runs. The whole-model audit materializes all transported gradients on smaller batches with the same corpus-derived logits. This whole-model audit confirms that the mismatch reaches beyond the output-head block.

\paragraph{Overflow-margin audit.} The overflow audit is a targeted mixed-precision overflow-margin endpoint. The overflow audit separates loss-scale headroom from optimizer-state dynamics by computing full-model gradients on held-out corpus text under bf16 autocast. In the overflow audit, we measure the maximum-coordinate gradient, the maximum safe static fp16 loss scale $\kappa_{\max,b}$ from \Eqref{eq:overflow_proxy}, and overflow batch fractions for static scales $\{256,1024,4096,16384,65536\}$. We run this audit on 16 WikiText-103 GPT-2 validation blocks for CE, standard Z-loss, and centered Z-loss. We also run the overflow audit on 16 streamed FineWeb-Edu GPT-2 Medium blocks for CE, centered head, standard Z-loss, factorized Z-loss, gain-aware Z-loss, and centered Z-loss. The audit uses $\lambda_{\rm aux}=3\times 10^{-3}$ for auxiliary-method rows to match the update-outlier setting.

\paragraph{Run accounting.} We separate validation-quality comparisons from paired diagnostic comparisons. The experiment descriptions distinguish available from consumed training tokens and report token reuse when it is relevant to the comparison. This distinction keeps fixed-subset evaluations separate from fresh-corpus continued pretraining and compute- and data-scale comparisons \citep{kaplan2020scaling,hoffmann2022training}. Here, the consumed-token count means training-token instances processed by optimizer steps, including repeated exposures to the same available tokens. For each matched run pair, we report the target-minus-baseline difference. A negative paired difference indicates improvement in PPL, CE, Diag. Z, tail, gain, and gradient-norm metrics.

\paragraph{Run-level instability analysis.} Because the failure modes under study are rare and optimizer-facing, we characterize training behavior using more than the mean loss. The reported endpoints include training-loss tails, the step-to-step loss-spike rate, thresholded pre-clip gradient-event rates, pre- and post-clip maximum-coordinate and gradient-to-parameter tails, finite-only gradient-tail summaries, and nonfinite-microbatch and nonfinite-gradient fractions. Paired comparisons hold the seed, precision mode, optimizer hyperparameters, gradient-clipping threshold, effective batch size, data offsets, and token accounting fixed. Related deterministic audits use analogous experiment-specific controls. This design helps separate differences in data exposure, precision, or clipping from the transport, optimizer-state, and numerical effects of an intervention.

\paragraph{FineWeb-Edu runs.} FineWeb-Edu continued pretraining experiments extend the dense training evaluation beyond WikiText. The GPT-2 experiment uses the same optimizer hyperparameters, 2400 steps, an optimizer-step batch size of 32 sequences, a sequence length of 256, 120 warmup steps, and bf16 autocast. Each run consumes 19.7M training tokens from streamed \texttt{sample-10BT} blocks. Each run is evaluated on 131,072 disjoint next-token predictions from the same split with a 70,000-sequence offset. No validation block overlaps the training block set. The GPT-2 Medium experiments include a 2400-step disjoint-block run with a micro-batch size of 8, a gradient-accumulation factor of 2, and 120 warmup steps. Each run in that experiment uses 9.83M training tokens and 65,536 validation next-token predictions. A fresh-token experiment uses 9600 steps, a micro-batch size of 8, a gradient-accumulation factor of 2, and 480 warmup steps. The fresh-token experiment uses 160,000 training sequences, 41.0M available tokens, 39.3M consumed tokens per run, and 1024 validation sequences offset by 220,000 sequences. This configuration yields 262,144 next-token predictions. The token-reuse factor is 0.96, so this experiment remains fresh-token within the available training blocks.

\paragraph{Transfer runs.} To test model-scale transfer, we continue pretraining GPT-2 Medium for 2400 optimizer steps. The run uses a micro-batch size of 8, a gradient-accumulation factor of 2, 120 warmup steps, and the same optimizer recipe. Each GPT-2 Medium run consumes 9.83M training tokens and is evaluated on 65,536 validation tokens. A 4800-step GPT-2 Medium fixed-subset evaluation uses the same micro-batch size, gradient-accumulation factor, and optimizer with 240 warmup steps. Each fixed-subset run consumes 19.7M tokens while repeatedly revisiting a fixed 8192-sequence training subset. This fixed-subset setting measures matched-mechanism behavior under controlled token reuse.

A Pythia-160M WikiText-103 continued pretraining grid tests whether the conclusions for the GPT-2 family transfer to a different model family with a padded output vocabulary and an untied output projection. We use 1200 optimizer steps, a micro-batch size of 8, a gradient-accumulation factor of 2, and 120 warmup steps. Each run consumes 4.9M tokens from 4.2M available training tokens and is evaluated on 65,536 validation tokens. For this grid, the Z-loss target is computed from the actual output-head vocabulary size rather than the tokenizer length.

\paragraph{Gradient-geometry visualization.} Figure~\ref{fig:gradient_geometry} is generated by a deterministic fp32 audit on the first 32 non-overlapping WikiText-103 validation blocks, using pretrained GPT-2, a block length of 256, an evaluation batch size of 2, a random seed of 67, $\lambda_{\rm diag}=10^{-4}$, and $c=\log V$. The audit computes the exact token-level logit-source and hidden-injection norms, the fixed translation induced by the output-row mean, and randomized rank-16 decompositions of both the raw and row-centered unembedding matrices with four power iterations. The paired plane uses the normalized output-row mean as the first axis and the leading principal direction of the centered hidden-source cloud after projecting out that mean direction as the second axis.

\paragraph{Spectral and router audits.} We evaluate the output-to-hidden gain identity with a spectral diagnostic. The spectral diagnostic computes randomized rank-16 singular value decompositions \citep{halko2011randomized} of the raw and row-centered unembedding matrices. The spectral diagnostic evaluates corpus-derived text logits and decomposes token-level $a_t^2=\norm{W_U^\top p_t}_2^2/\norm{p_t}_2^2$ into the contribution of the top singular directions. For this analysis, we compute the top-16 unembedding energy fraction, the mean token-level fraction of $a_t^2$ explained by the top singular directions, and the Pearson correlation between the low-rank contribution and exact $a_t^2$. We run the spectral diagnostic on GPT-2, GPT-2 Medium, Pythia-160M, and Pythia-1B WikiText-103 validation blocks. We also run the spectral diagnostic on GPT-2 Medium, GPT-2 Large, and Pythia-410M FineWeb-Edu blocks. These measurements estimate dominant-direction transport without modifying training.

For sparse routing, we run two experiments. We apply a randomly initialized 16-expert router to GPT-2 hidden states obtained from corpus blocks. This router projection isolates the reduction convention without changing the hidden-state distribution. We also train controlled end-to-end top-$2$ and top-$4$ MoE language models on WikiText-103. The top-$2$ model runs for 800 steps, and the top-$4$ model runs for 1200 steps. Both use three matched random initializations. The MoE has four Transformer blocks, a width of 256, four attention heads, eight feed-forward experts, tied input and output embeddings, and a standard load-balancing loss with a coefficient of $1\times 10^{-2}$. Router Z-loss uses $\lambda_R=1\times 10^{-3}$.

\subsubsection{Transport and Numerical Endpoint Definitions}

In addition to final validation quality, we report optimizer-facing gradient tails for the transport and numerical endpoints, as follows.
\begin{align}
	G_{99}
	 & \coloneqq
	Q_{0.99}\left(\norm{g_{\mathrm{train},s}}_2\right),\notag \\
	H_{99}
	 & \coloneqq
	Q_{0.99}\left(\norm{g_{\mathrm{train},s}}_\infty\right),  \\
	\rho_{99}
	 & \coloneqq
	Q_{0.99}\left(
	\frac{\norm{g_{\mathrm{train},s}}_2}{\norm{\theta_s}_2+\varepsilon}
	\right),                                                  \\
	\Phi_\tau
	 & \coloneqq
	\frac{1}{T_{\mathrm{step}}}\sum_{s=1}^{T_{\mathrm{step}}}
	\mathbf{1}\{\norm{g_{\mathrm{train},s}}_2>\tau\},         \\
	\Phi_{\mathrm{nf}}
	 & \coloneqq
	\frac{1}{T_{\mathrm{step}}}\sum_{s=1}^{T_{\mathrm{step}}}
	\mathbf{1}\{\mathrm{nonfinite}(g_{\mathrm{train},s})\}.
\end{align}
Here, $s$ indexes optimizer steps. $T_{\mathrm{step}}$ is the number of audited optimizer steps. In our stress runs, $\tau\in\{10,50,100\}$. When nonfinite gradients occur, we report both nonfinite-gradient fractions and finite-only gradient-tail summaries. This dual reporting prevents a single overflow event from obscuring the finite update-pressure distribution. For AdamW stress runs \citep{kingma2015adam,loshchilov2019decoupled}, we additionally sample the optimizer memory. Let $m_s$ and $v_s$ be the first and second moment tensors of Adam and let
\begin{align*}
	d_s^{\mathrm{proxy}}
	\coloneqq
	m_s / (\sqrt{v_s}+\epsilon_{\rm Adam})
\end{align*}
denote the uncorrected Adam preconditioner proxy before learning-rate and weight-decay scaling. The corresponding bias-corrected Adam preconditioner direction is
\begin{align*}
	d_s^{\mathrm{Adam}}
	\coloneqq
	\frac{m_s/(1-\beta_1^{t_s})}
	{\sqrt{v_s/(1-\beta_2^{t_s})}+\epsilon_{\rm Adam}},
\end{align*}
where $t_s$ is the internal Adam step count. We report the proxy because the uncorrected preconditioner tracks the same second-moment memory and coordinate-wise normalization while omitting bias-correction factors. The Adam $\epsilon_{\rm Adam}$ term is not bias-corrected in the same way as $v_s$. The proxy is therefore a diagnostic normalization rather than a constant multiple of the exact Adam direction. Specifically, we report $\max_s\norm{v_s}_\infty$, $Q_{0.99,s}\norm{v_s}_\infty$, $Q_{0.99,s}\norm{d_s^{\mathrm{proxy}}}_2/(\norm{\theta_s}_2+\varepsilon)$, and $Q_{0.99,s}\norm{d_s^{\mathrm{proxy}}}_\infty$. These endpoints distinguish smooth scalar-tail reduction from rare update events and optimizer-state memory that can dominate mixed-precision training.

Following standard mixed-precision concerns about fp16 dynamic range and loss scaling \citep{micikevicius2018mixed}, we also audit static loss-scale headroom. The audited quantity is the largest static fp16 loss scale that would keep already-computed optimizer-facing gradients representable if a stack stored scaled gradients in fp16. For an audited batch gradient vector $g_b$ and fp16 maximum $M_{16}\coloneqq65504$, define
\begin{align}
	\kappa_{\max,b}
	 & \coloneqq
	\frac{M_{16}}{\norm{g_b}_\infty+\varepsilon}, \\
	\Psi_\kappa
	 & \coloneqq
	\frac{1}{N_{\mathrm{batch}}}
	\sum_{b=1}^{N_{\mathrm{batch}}}
	\mathbf{1}\{\kappa\norm{g_b}_\infty>M_{16}\}.
	\label{eq:overflow_proxy}
\end{align}
This overflow endpoint targets static loss-scale headroom for stacks that store scaled gradients in fp16. As a stress endpoint for maximum-coordinate transport, lower maximum safe scales $\kappa_{\max,b}$ and higher $\Psi_\kappa$ indicate that rare coordinates leave less numerical headroom.

\subsubsection{Seed and Matched-Run Details}
\label{app:seed_details}

The matched comparisons use explicit run-count, seed, and token-accounting conventions. This matching design ensures that validation quality, transport diagnostics, and numerical endpoints are compared across the same randomized configurations. Unless otherwise stated, we use the same seed set for every method in grouped comparisons reported as mean$\pm$standard deviation over three matched runs. For tables that summarize replicated training endpoints without displaying standard deviations, we report the mean over the same matched seed set. Each paired difference is computed as target minus baseline after rows have been matched by seed and the rest of the run configuration. The matching fields include model, corpus, optimizer settings, data offsets, precision, clipping threshold, step count, and coefficient. Because lower values are preferred for PPL, CE, Diag. Z, tail, gain, and gradient-norm metrics, negative paired differences indicate improvement.

For dense WikiText-103 continued training, the GPT-2 comparison in Table~\ref{tab:finetune} uses three matched seeds. Rows report mean$\pm$standard deviation over the three matched runs. Each run starts from the same pretrained checkpoint, uses 4800 optimizer steps, consumes 39.3M training tokens, and is evaluated on 131,072 validation tokens. The GPT-2 diagnostic run in Appendix Table~\ref{tab:gpt2_diag} uses a separate three-seed matched set. The diagnostic run reports the three-run mean after 1200 steps, 9.83M training tokens per run, and 65,536 validation tokens. Table~\ref{tab:stress_diag} reports a high-coefficient GPT-2 gradient-tail stress sweep with three matched seeds. This sweep design gives 15 total runs across the five reported methods. Each run uses 1200 steps and 9.83M training tokens. Appendix Table~\ref{tab:medium_ft} reports the GPT-2 Medium scaling experiment with three matched seeds and 9.83M consumed tokens per run. Appendix Table~\ref{tab:medium_long} reports the GPT-2 Medium fixed-subset evaluation with three matched seeds and 19.7M consumed tokens per run. Appendix Table~\ref{tab:pythia_ft} reports the Pythia-160M continued pretraining experiment with three matched seeds and 4.9M consumed tokens per run. These three experiments report mean$\pm$standard deviation.

For FineWeb-Edu continued training, the GPT-2 experiment in Appendix Table~\ref{tab:fineweb_ft} uses three matched seeds. The GPT-2 experiment reports mean$\pm$standard deviation over three matched runs with 19.7M training tokens per run. Validation uses 131,072 disjoint next-token predictions from streamed blocks offset by 70,000 sequences from the training blocks. The GPT-2 Medium disjoint-block experiment uses a separate three-seed matched set. The fresh-token GPT-2 Medium experiment in Appendix Table~\ref{tab:fineweb_medium_ft} uses another three-seed matched set. The fresh-token experiment reports mean$\pm$standard deviation over three matched runs with 39.3M consumed tokens per run. The fresh-token training block set contains 41.0M available tokens, giving a token-reuse factor of 0.96. Validation uses a disjoint 220,000-sequence offset yielding 262,144 next-token predictions. Appendix Table~\ref{tab:fineweb_noclip_stress} reports the effectively unclipped FineWeb-Edu setting with three matched seeds. The effectively unclipped setting reports mean$\pm$standard deviation over three seeds. Each run uses 900 steps, a learning rate of $2\times 10^{-4}$, $\lambda_{\rm aux}=1\times 10^{-3}$ for auxiliary-method rows, a clipping threshold of $1\times 10^{8}$, and 3.69M consumed tokens from the 41.0M-token training block set. Each run uses the same disjoint 262,144-token validation blocks as Appendix Table~\ref{tab:fineweb_medium_ft}. The update-outlier experiment in Appendix Table~\ref{tab:fineweb_update_outlier} uses three matched seeds. The update-outlier experiment reports the three-seed mean under 600 steps, a learning rate of $8\times 10^{-4}$, $\lambda_{\rm aux}=3\times 10^{-3}$ for auxiliary-method rows, and a clipping threshold of $1\times 10^{8}$. Appendix Table~\ref{tab:fineweb_update_outlier_sweep} reports both the centered auxiliary-coefficient sweep and the matched raw-logit coefficient sweep under the same three-seed matched set, offsets, optimizer, and effectively unclipped setup. The Adam-state endpoint in Appendix Table~\ref{tab:adam_state_stress} uses a separate three-seed matched set. The Adam-state endpoint reports the three-seed mean over 300 steps, a learning rate of $8\times 10^{-4}$, $\lambda_{\rm aux}=3\times 10^{-3}$ for auxiliary-method rows, and AdamW state samples every 50 optimizer steps.

For sparse routing, the random router-projection audit uses one fixed seed. Router weights for each top-$k$ setting are generated by a deterministic initialization convention. Appendix Table~\ref{tab:moe} includes end-to-end top-$2$ and top-$4$ MoE experiments, each with three matched seeds. The top-$4$ rows report mean$\pm$standard deviation over three matched runs. Scale-matched active-route variants reuse the same seed and initialization as the corresponding token-layer-mean variants. This seed reuse is why the matched rows can recover the token-layer-mean result to numerical precision.

For deterministic audit rows, fixed seeds select evaluation blocks or randomized audit components before deterministic computation. We use one fixed seed per deterministic audit type, covering pretrained evaluation, source-space, \mbox{Triton}-based fused CE+Z-loss, optimizer-facing update, spectral, and static fp16 overflow audits.

\subsubsection{Additional Configuration Details}

The following details clarify the settings used for several Appendix tables and stress endpoints. Thresholded gradient-event rates are computed over optimizer steps, and paired comparisons are formed within matched experimental settings before target-minus-baseline differences are averaged across seeds. This matching avoids combining auxiliary configurations that share a model or method label but differ in an experiment-defining setting.

The spectral audit in Appendix Table~\ref{tab:spectral_audit} uses GPT-2 Medium on WikiText-103 with 128 sequences, an audit batch size of 2, and a rank of 16. The Pythia-1B diagnostic rows use the same audit settings with 64 sequences and an audit batch size of 2. We run the fresh-token GPT-2 Medium FineWeb-Edu experiment in Appendix Table~\ref{tab:fineweb_medium_ft} on four NVIDIA H100 accelerators using the FineWeb-Edu \texttt{sample-10BT} training split. The fresh-token experiment uses streamed text, the CE baseline, centered head, standard Z-loss, and centered Z-loss. The fresh-token experiment uses three matched seeds, 9600 steps, a micro-batch size of 8, an evaluation batch size of 8, a block size of 256, 160,000 training sequences, 1024 validation sequences, a training offset of 0, and a validation offset of 220,000. Optimization uses a learning rate of $5\times 10^{-5}$, a weight decay of 0.1, 480 warmup steps, gradient clipping at 1.0, a gradient-accumulation factor of 2, bf16 precision, an applied auxiliary coefficient of $1\times 10^{-4}$, and a gain coefficient of $1\times 10^{-2}$.

The stress experiment in Appendix Table~\ref{tab:fineweb_noclip_stress} uses the same training and validation offsets as Appendix Table~\ref{tab:fineweb_medium_ft}. The stress experiment evaluates GPT-2 Medium on the FineWeb-Edu \texttt{sample-10BT} training split with streamed text. In this experiment, we compare the CE baseline, centered head, standard Z-loss, factorized Z-loss, gain-aware Z-loss, and centered Z-loss. The experiment uses three matched seeds, 900 steps, a micro-batch size of 8, an evaluation batch size of 8, a block size of 256, 160,000 training sequences, and 1024 validation sequences. Optimization uses a learning rate of $2\times 10^{-4}$, a weight decay of 0.1, 90 warmup steps, a clipping threshold of $1\times 10^{8}$, a gradient-accumulation factor of 2, bf16 precision, an applied auxiliary coefficient of $1\times 10^{-3}$, and a gain coefficient of $1\times 10^{-2}$.

The update-outlier experiment in Appendix Table~\ref{tab:fineweb_update_outlier} uses the same model, dataset, streaming setup, offsets, and method family. The update-outlier experiment uses three matched seeds, 600 steps, a micro-batch size of 8, an evaluation batch size of 8, a block size of 256, 160,000 training sequences, and 512 validation sequences. Optimization uses a learning rate of $8\times 10^{-4}$, a weight decay of 0.1, 60 warmup steps, a clipping threshold of $1\times 10^{8}$, gradient-alert thresholds of 10, 50, and 100, a gradient-accumulation factor of 2, bf16 precision, an applied auxiliary coefficient of $3\times 10^{-3}$, and a gain coefficient of $1\times 10^{-2}$. The Adam-state endpoint in Appendix Table~\ref{tab:adam_state_stress} uses the same data offsets, optimizer, methods, and coefficient. The Adam-state endpoint uses a separate three-seed matched set, 300 steps, and optimizer state sampled every 50 optimizer steps.

For the static fp16 overflow audit in Appendix Table~\ref{tab:overflow_audit}, we use GPT-2 Medium on the FineWeb-Edu \texttt{sample-10BT} training split with streamed text. The audit uses 16 sequences, an audit batch size of 1, all-parameter gradients, and bf16 precision. The static fp16 audit compares the CE baseline, centered head, standard Z-loss, factorized Z-loss, gain-aware Z-loss, and centered Z-loss. Auxiliary-method rows use an applied coefficient of $3\times 10^{-3}$. The WikiText GPT-2 rows use the same audit design on WikiText-103 validation blocks with CE, standard Z-loss, and centered Z-loss.

For the centered auxiliary-coefficient sweep in Appendix Table~\ref{tab:fineweb_update_outlier_sweep}, we use the same three-seed matched set, offsets, optimizer, and stress setting while applying centered Z-loss with coefficients 0.001, 0.0003, and 0.0001. The matched raw-logit coefficient sweep uses the same stress setting for CE baseline, standard Z-loss, factorized Z-loss, and gain-aware Z-loss with coefficients 0.001, 0.0003, and 0.0001. CE baseline rows are grouped by coefficient for matched comparisons. The coefficient is not applied to CE-only training.

Interpretation of the reported tables therefore follows the experiment-specific settings above: paired results use matched seeds and consistent batch construction, precision, optimizer and clipping conventions, data offsets, and token budgets. The stress tables additionally summarize loss and gradient tails, while the Adam-state endpoint reports the corresponding sampled optimizer-state quantities.

\subsection{Additional Output-Head Geometry}
\label{app:additional_results}

\subsubsection{Tied-Embedding Pathway Diagnostics}
\label{app:tied_path_diagnostics}

The tied-embedding decomposition also shows pathway coupling. On GPT-2, the norm of the Z-loss gradient with respect to the tied embedding table is $3.5\%$ of the corresponding CE gradient norm for the diagnostic batch. The auxiliary-to-CE cosine is 0.022. Thus, the auxiliary update is nearly orthogonal to the CE update rather than simply reinforcing the likelihood gradient. The full Z-loss gradient decomposes into an output-path norm of 0.493 and an input-path norm of 0.563. The cosine between the input and output paths is $-0.0098$. On GPT-2 Medium, the same diagnostic gives $R_Z=3.5\%$, an auxiliary-to-CE cosine of 0.014, an output-path norm of 0.698, and an input-path norm of 0.375. Thus, the output auxiliary loss is not a single local regularizer on a tied head. The auxiliary loss creates a coupled multi-path update.

The scale diagnostics sharpen the claim. Pythia-160M provides a useful contrast because the Pythia-160M output projection is untied in this model family. The Pythia-160M diagnostic batch has a much larger Z-loss gradient budget, $R_Z=17.5\%$. The Pythia-160M auxiliary-to-CE cosine is also negative at $-0.149$. There is nevertheless no tied-table input-path contribution to the output-projection parameter. Z-loss can still propagate to the untied input embeddings and body through $W_U^\top\delta_z^Z$. The large raw common-shift tail in Table~\ref{tab:pretrained} persists beyond tied embeddings. Tying changes the transport path, while output-head common shift can be a separate and very large source factor. The GPT-2 Large rows in Table~\ref{tab:pretrained} show how the effect of centering depends on output-head geometry. The raw common shift already places $\log Z$ closer to the default target. Removing the raw common shift therefore preserves PPL but can increase Diag. Z and $A_p^{99}$. Pythia-410M and Pythia-1B provide the same cautionary contrast. These rows support the main position of this paper. Common-shift removal is a gauge intervention whose effect is evaluated with transport diagnostics.

\subsubsection{FineWeb-Edu Pretrained Diagnostics}

To test whether the WikiText results are corpus-specific, Appendix Table~\ref{tab:fineweb_pretrained} repeats the pretrained comparison of raw and centered output heads on streamed FineWeb-Edu blocks without continued training. Centering preserves PPL and removes the common-shift tail in all four model comparisons. Whether Diag. Z and the gain metrics increase or decrease still depends on the model geometry. GPT-2 and GPT-2 Medium repeat the large common-shift pattern from WikiText. GPT-2 Large and Pythia-410M show a geometry-dependent regime. Centering removes $\mu$ but increases $A_p^{99}$. For GPT-2 Large, centering also increases the $P_Z^{99.9}$ tail. This cross-corpus diagnostic replicates the common-shift pattern beyond WikiText. The cross-corpus diagnostic also shows that centering is best evaluated with transport diagnostics.

\begin{table}[t!]
	\centering
	\scriptsize
	\begin{tabular}{@{}lcrrrr@{}}
		\toprule
		Model                         & Head     & PPL   & Diag. Z  & $P_Z^{99.9}$ & $A_p^{99}$ \\
		\midrule
		\multirow{2}{*}{GPT-2}        & raw      & 31.65 & 1.260    & 261.9        & 29.35      \\
		                              & centered & 31.65 & 0.00680  & 22.5         & 16.92      \\
		\addlinespace
		\multirow{2}{*}{GPT-2 Medium} & raw      & 23.87 & 1.224    & 270.2        & 27.93      \\
		                              & centered & 23.87 & 0.00688  & 22.8         & 12.97      \\
		\addlinespace
		\multirow{2}{*}{GPT-2 Large}  & raw      & 20.08 & 0.000940 & 13.61        & 7.75       \\
		                              & centered & 20.08 & 0.00663  & 19.18        & 8.15       \\
		\midrule
		\multirow{2}{*}{Pythia-410M}  & raw      & 20.45 & 0.0162   & 20.54        & 2.74       \\
		                              & centered & 20.45 & 0.00719  & 18.40        & 4.04       \\
		\bottomrule
	\end{tabular}
	\caption{Pretrained diagnostics on FineWeb-Edu \texttt{sample-10BT}. GPT-2 and GPT-2 Medium mirror the WikiText common-shift pattern, while GPT-2 Large and Pythia-410M show a geometry-dependent regime where row-centered deployed logits increase gain-tail metrics.}
	\label{tab:fineweb_pretrained}
\end{table}
\FloatBarrier

\subsubsection{Unembedding Spectral Audit}

The gain identity and alignment remark yield a specific prediction. Large token-level gains $a_t$ should arise when the normalized softmax source aligns with unembedding directions associated with high singular values. The measurements in Appendix Table~\ref{tab:spectral_audit} numerically confirm the gain identity on corpus-derived logits. For GPT-2 and GPT-2 Medium, the top 16 singular directions contain between 33\% and 36\% of the raw unembedding Frobenius energy. The top 16 singular directions explain between 62\% and 64\% of token-level $a_t^2$ on average. The correlation between the rank-16 contribution and exact $a_t^2$ is at least 0.997. Centering removes a large mean-direction component. For GPT-2 Medium, the spectral norm drops from 462.9 to 87.9. $A_p^{99}$ drops from 30.5 to 14.5 on WikiText and from 28.0 to 13.1 on FineWeb-Edu. Pythia-160M is an extreme anisotropy case. The first raw singular direction alone contains 91.5\% of the unembedding energy. The top 16 directions explain 97.8\% of token-level $a_t^2$. After centering, the spectral norm drops from 776.4 to 106.1 and $A_p^{99}$ drops from 61.0 to 6.39. The complementary case appears in the FineWeb-Edu rows for GPT-2 Large and Pythia-410M and the WikiText row for Pythia-1B. Centering reduces the spectral norm but can increase $A_p^{99}$ when the softmax source aligns more closely with the remaining centered directions. This experiment gives a direct empirical bridge between \Eqref{eq:ap_svd} and the observed gradient-tail behavior.

\begin{table}[t!]
	\centering
	\scriptsize
	\resizebox{\columnwidth}{!}{%
		\begin{tabular}{@{}llcrrrrr@{}}
			\toprule
			Model                         & Corpus                       & Head     & $\sigma_1$ & $E_{16}$ & $A_p^{99}$ & $F_{16}$ & $\rho_{16}$ \\
			\midrule
			\multirow{2}{*}{GPT-2}        & \multirow{2}{*}{WikiText}    & raw      & 465.6      & 0.335    & 33.88      & 0.624    & 0.997       \\
			                              &                              & centered & 103.0      & 0.097    & 19.33      & 0.544    & 0.974       \\
			\addlinespace
			\multirow{2}{*}{GPT-2 Medium} & \multirow{2}{*}{WikiText}    & raw      & 462.9      & 0.362    & 30.54      & 0.644    & 0.998       \\
			                              &                              & centered & 87.9       & 0.085    & 14.49      & 0.448    & 0.957       \\
			\addlinespace
			\multirow{2}{*}{GPT-2 Medium} & \multirow{2}{*}{FineWeb-Edu} & raw      & 462.9      & 0.362    & 28.00      & 0.644    & 0.999       \\
			                              &                              & centered & 87.9       & 0.085    & 13.10      & 0.461    & 0.974       \\
			\addlinespace
			\multirow{2}{*}{GPT-2 Large}  & \multirow{2}{*}{FineWeb-Edu} & raw      & 147.8      & 0.178    & 7.75       & 0.254    & 0.963       \\
			                              &                              & centered & 67.1       & 0.103    & 8.15       & 0.506    & 0.977       \\
			\midrule
			\multirow{2}{*}{Pythia-160M}  & \multirow{2}{*}{WikiText}    & raw      & 776.4      & 0.931    & 60.96      & 0.978    & 1.000       \\
			                              &                              & centered & 106.1      & 0.317    & 6.39       & 0.492    & 0.987       \\
			\addlinespace
			\multirow{2}{*}{Pythia-410M}  & \multirow{2}{*}{FineWeb-Edu} & raw      & 55.8       & 0.137    & 2.74       & 0.428    & 0.922       \\
			                              &                              & centered & 29.6       & 0.069    & 4.04       & 0.599    & 0.988       \\
			\addlinespace
			\multirow{2}{*}{Pythia-1B}    & \multirow{2}{*}{WikiText}    & raw      & 64.4       & 0.106    & 2.73       & 0.281    & 0.675       \\
			                              &                              & centered & 31.0       & 0.045    & 4.39       & 0.507    & 0.973       \\
			\bottomrule
		\end{tabular}
	}%
	\caption{Low-rank unembedding spectral audit on corpus-derived text logits. $E_{16}$ is the fraction of unembedding Frobenius energy in the top 16 singular directions. $F_{16}$ is the mean token-level fraction of exact $a_t^2$ explained by those directions. $\rho_{16}$ is the Pearson correlation between exact $a_t^2$ and the rank-16 contribution.}
	\label{tab:spectral_audit}
\end{table}

\subsection{Why Precision Enters}
\label{sec:precision_link}

The connection between Z-loss and finite precision is two-sided. First, production, storage, or communication may round a raw logit before softmax evaluation with a numerically stable implementation. For a floating-point value in the normal range under round-to-nearest, the storage error obeys the standard model
\begin{equation}
	\begin{aligned}
		\left|\operatorname{fl}(z_i)-z_i\right|
		           & \le u_{\mathrm{mach}}|z_i|,       \\
		\max_i z_i & \le \log Z \le \max_i z_i+\log V,
	\end{aligned}
	\label{eq:precision_link}
\end{equation}
where $u_{\mathrm{mach}}$ is the unit roundoff \citep{goldberg1991floating}. A large common offset therefore coarsens the absolute spacing of representable logits and can erase small logit differences before the softmax kernel operates. Subsequent row-maximum subtraction prevents exponential overflow but cannot reconstruct differences already lost at an earlier low-precision boundary. The second inequality in \Eqref{eq:precision_link} shows why log-normalizer control is relevant. If $|\log Z-c|\le r$, then $c-r-\log V\le\max_i z_i\le c+r$. Keeping $\log Z$ near a finite target thus constrains the row maximum and common-shift excursions. Such log-normalizer control can preserve dense-head probabilities, router gate weights, and threshold-sensitive routing decisions when low-precision rounding precedes row-maximum subtraction \citep{zoph2022stmoe,blanchard2021accurate}. The benefit is implementation-dependent. Z-loss does not bound every logit or the full logit range, and Z-loss does not replace a stable log-sum-exp or softmax implementation.

Second, the Z-loss branch creates a separate backward precision obligation. A fused or low-precision path can match the scalar CE+Z value while reconstructing different $\log Z$ and $p$ values and therefore transporting a different $\delta_z^Z$. Once transported, a large source amplitude or architectural gain can create maximum-coordinate gradients that consume loss-scale headroom in stacks that store scaled gradients in fp16 or enter adaptive-optimizer moment state. Z-loss can therefore improve forward logit representability while worsening backward numerical headroom. Our audits distinguish these mechanisms. Source consistency, selected-gradient consistency, and optimizer-direction consistency are compared with an fp32 reference, while gradient tails and static fp16 margins measure the transported cost. The source-distortion identity below gives the exact source-mismatch decomposition and distinguishes Z-loss source consistency from standard row-maximum subtraction.

\subsection{Source Distortion and Finite-Precision Deployment}
\label{app:source_deployment}

\subsubsection{Source-Distortion Identity and Diagnostics}

If a fused or low-precision implementation produces $\widehat{\log Z}$ and $\widehat p$ during the backward pass, define $\epsilon_Z\coloneqq\widehat{\log Z}-\log Z$, $\epsilon_p\coloneqq\widehat p-p\in\R^V$, and $\widehat{\delta}_z^Z\coloneqq2\lambda(\widehat{\log Z}-c)\widehat p$. Then
\begin{equation}
	\widehat{\delta}_z^Z-\delta_z^Z
	=
	2\lambda\left(\epsilon_Zp+(\log Z-c)\epsilon_p+\epsilon_Z\epsilon_p\right).
	\label{eq:source_mismatch}
\end{equation}
Agreement between forward scalar loss values therefore does not imply agreement between the corresponding transported backward vectors. This distinction is especially important in low precision. Mathematically equivalent log-sum-exp and softmax formulas need not have identical floating-point behavior \citep{goldberg1991floating,blanchard2021accurate}. This source-distortion observation motivates separate source-space and transported-gradient consistency diagnostics,
\begin{equation}
	\begin{aligned}
		\Delta_{\mathrm{src}}
		 & \coloneqq
		\frac{\norm{\widehat{\delta}_z^Z-\delta_z^Z}_2}
		{\norm{\delta_z^Z}_2+\varepsilon},
		\\
		\Delta_{\theta}
		 & \coloneqq
		\frac{\norm{g_{Z,\theta}^{\mathrm{approx}}-g_{Z,\theta}^{\mathrm{ref}}}_F}
		{\norm{g_{Z,\theta}^{\mathrm{ref}}}_F+\varepsilon}.
	\end{aligned}
	\label{eq:backward_deltas}
\end{equation}
Row-maximum subtraction and Z-loss source consistency address complementary parts of a reliable CE+Z-loss implementation. Subtracting a row maximum is an algebraically invariant way to avoid overflow and reduce harmful underflow, and mature CE paths such as the built-in PyTorch kernels already use numerically stable softmax or log-softmax formulations \citep{paszke2019pytorch}. When a training stack adds Z-loss, especially in fused or low-precision CE+Z-loss code, the forward scalar and backward source for Z-loss must be derived from the same sufficient statistics rather than separately rounded or reconstructed quantities. A source-consistency audit therefore verifies that the auxiliary signal injected into backpropagation matches the implemented forward objective, complementing standard softmax stabilization in the kernel.

\subsubsection{Deployment of Centered Heads in Finite Precision}

Throughout, we use fp32, fp16, and bf16 as lowercase data type labels. In the experiments, centered logits are deployed by subtracting the fp32 output-head common shift from the already-computed raw logits, preserving the realized softmax distribution without a second low-precision output projection. \Eqref{eq:centered_raw} gives the bias-aware form and distinguishes this deployment from internal softmax row-maximum subtraction.

With low-precision weights, token-dependent residuals can arise when $\widetilde W_Uu$ is formed as a fresh fp16 or bf16 matrix product. Those residuals do not consist solely of a common shift. Our implementation therefore deploys centered logits by subtracting the output-head common shift from the already-computed raw logits:
\begin{equation}
	\begin{aligned}
		z_{\mathrm{raw}}
		 & \coloneqq
		W_Uu+b,
		\qquad
		\widetilde z\coloneqq z_{\mathrm{raw}}-\mu_{\mathrm{head}}\mathbf{1}, \\
		\mu_{\mathrm{head}}
		 & \coloneqq
		\frac{1}{V}\mathbf{1}^\top W_Uu+\bar b,
		\qquad
		\bar b\coloneqq\frac{1}{V}\sum_v b_v .
	\end{aligned}
	\label{eq:centered_raw}
\end{equation}
The formal propositions in the main text use the bias-free notation $\mu=\bar w u$. \Eqref{eq:centered_raw} is the corresponding deployment formula with an optional output bias. For a bias-free head, $\bar b=0$ and $\mu_{\mathrm{head}}=\mu$. In exact arithmetic, the raw-logit definition in \Eqref{eq:centered_raw} gives
\begin{align*}
	\frac{1}{V}\mathbf{1}^\top z_{\mathrm{raw}}
	=
	\mu_{\mathrm{head}},
\end{align*}
so \Eqref{eq:centered_raw} is equivalent to subtracting the realized raw-logit mean in exact arithmetic. In lower precision, the implementation preserves the realized softmax distribution by subtracting a token-wise common shift. The centered-logit deployment centers with respect to the fp32 output-head common shift, not necessarily the rounded raw-logit mean. This subtraction is also distinct from subtracting the per-row maximum inside a softmax kernel. Row-maximum subtraction is an internal numerical device. The per-row maximum is added back in a stable log-sum-exp computation, so row-maximum subtraction does not change the deployed objective coordinate. Row-centered deployment subtracts the output-head common-shift coordinate before the auxiliary loss is evaluated. Row-centered deployment preserves CE because CE is shift-invariant but intentionally changes the Z-loss source coordinate.

\subsection{Source-Consistency and Implementation Audits}
\label{app:implementation_audits}

For implementation sensitivity, we use a source-space audit with GPT-2 logits from WikiText-103 validation blocks. We quantize the logits to bf16 or fp16. We optionally accumulate the softmax statistics in fp32. We then compare the reconstructed Z-loss source to the fp32 reference. Appendix Table~\ref{tab:source_audit} shows that forward scalar agreement can substantially understate Z-loss source error. With bf16 logits at the natural scale, the mean relative forward Z-loss error is only $1.9\times 10^{-4}$. The mean Z-loss source relative error is 0.065. At a logit scale of 4, the source error rises to 0.183 while the forward relative error remains below $5\times 10^{-4}$. On GPT-2 logits from FineWeb-Edu, the same bf16 reconstruction gives forward relative errors below $5\times 10^{-4}$. The Z-loss source errors are 0.082, 0.143, and 0.207 at logit scales 1, 2, and 4. The fp16 source errors are smaller on both corpora, ranging from 0.008 to 0.027. This source-audit result illustrates that the relevant issue is not the data type alone but whether the backward path reconstructs the same sufficient statistics.

We observe similar behavior when applying the same source-space audit to GPT-2 Medium logits from WikiText and FineWeb-Edu. On WikiText, bf16 source reconstruction has a relative forward Z-loss error of $2.0\times 10^{-4}$ at the natural scale and $2.6\times 10^{-4}$ at a scale of 4. The corresponding Z-loss source relative errors are 0.057 and 0.171. On FineWeb-Edu, the corresponding forward errors are $1.7\times 10^{-4}$ and $2.3\times 10^{-4}$. The Z-loss source errors are 0.0687 and 0.193. The fp16 source errors are smaller, ranging from 0.007 to 0.026 across these GPT-2 Medium checks. The direct PyTorch CE backward audit also remains close to the reference for GPT-2 Medium. On FineWeb-Edu, the bf16 total-source relative differences are 0.00150, 0.00138, and 0.00108 at scales 1, 2, and 4.

We also run a direct PyTorch backward audit on the same corpus-derived logits. We compare built-in \texttt{cross\_entropy} plus manual Z-loss with a manual \texttt{log\_softmax} CE plus the same manual Z-loss in this audit. The discrepancies are much smaller than in the Z-loss source reconstruction stress test. In bf16, the relative total-loss backward difference is $1.40\times 10^{-3}$ at the natural scale and $1.04\times 10^{-3}$ at a scale of 4 on WikiText. On FineWeb-Edu, the corresponding differences are $1.44\times 10^{-3}$ and $1.05\times 10^{-3}$. In fp16, the difference is below $2.8\times 10^{-4}$. In fp32, the difference is zero within reporting precision. These results show that the built-in PyTorch CE path remains close to the fp32 reference in this setting. The same results also show the value of source-consistency checks for the actual fused CE+Z-loss implementation used by a training stack.

As a check of the fused implementation, we also implement a row-wise fused CE+Z-loss GPU kernel in \mbox{Triton}. The kernel computes the forward scalar, the total CE+Z logit gradient, and the Z-loss source from shared softmax statistics. Appendix Table~\ref{tab:triton_fused} reports this audit on eight GPT-2 corpus blocks from WikiText-103 and FineWeb-Edu. The fp32 kernel matches the PyTorch fp32 reference with a relative gradient error ranging from $1\times 10^{-7}$ to $1\times 10^{-6}$. This fp32 agreement confirms that the fused formula is source-consistent. When the same fused kernel reads bf16 logits, however, the forward Z-loss relative error stays below $3\times 10^{-4}$. The Z-loss source error is 0.056 on WikiText at the natural scale and 0.204 on FineWeb-Edu at a scale of 4. Thus, even a source-consistent fused kernel can transport a meaningfully different Z-loss signal if the sufficient statistics are reconstructed from quantized logits.

The fused audit with GPT-2 Medium extends the implementation observation beyond GPT-2 logits. The fp32 path again matches the PyTorch reference with a relative gradient error below $1\times 10^{-6}$. With bf16 logit storage, GPT-2 Medium has a Z-loss source relative error of 0.050 at the natural WikiText scale and 0.164 at a scale of 4. On FineWeb-Edu, the corresponding values are 0.0696 and 0.186. The total CE+Z logit-gradient error is smaller but still material, ranging from 0.033 to 0.134. The forward Z-loss relative error ranges only from $1.3\times 10^{-4}$ to $2.4\times 10^{-4}$. This GPT-2 Medium audit strengthens the conclusion that forward scalar agreement is an inadequate audit criterion even for a source-consistent fused formula.

\begin{table}[t!]
	\centering
	\scriptsize
	\begin{tabular}{@{}ccrrr@{}}
		\toprule
		Storage               & Logit $\times$ & \shortstack{Rel. forward                     \\error} & $\Delta_{\mathrm{src}}$ & \shortstack{Backward\\cosine} \\
		\midrule
		\multirow{3}{*}{bf16} & 1              & $1.91\times 10^{-4}$     & 0.0650  & 0.9978  \\
		                      & 2              & $2.43\times 10^{-4}$     & 0.1221  & 0.9925  \\
		                      & 4              & $2.67\times 10^{-4}$     & 0.1826  & 0.9832  \\
		\midrule
		\multirow{3}{*}{fp16} & 1              & $8.67\times 10^{-6}$     & 0.00816 & 0.99996 \\
		                      & 2              & $1.07\times 10^{-5}$     & 0.01548 & 0.99988 \\
		                      & 4              & $1.22\times 10^{-5}$     & 0.02408 & 0.99971 \\
		\bottomrule
	\end{tabular}
	\caption{Source precision audit on GPT-2 WikiText-103 logits. Forward scalar error can be tiny while the reconstructed backward source differs materially from the fp32 reference. Logit $\times$ is the multiplier applied to logits before the audit.}
	\label{tab:source_audit}
\end{table}

\begin{table}[t!]
	\centering
	\scriptsize
	\resizebox{\columnwidth}{!}{%
		\begin{tabular}{@{}llrrrr@{}}
			\toprule
			Corpus                       & \shortstack{Storage,                                                                                 \\logit $\times$} & \shortstack{Rel. Z forward\\error} & $\Delta_{\mathrm{tot}}$ & $\Delta_Z$ & \shortstack{Z\\cosine} \\
			\midrule
			\multirow{4}{*}{WikiText}    & fp32, 1              & $1.27\times 10^{-8}$ & $2.83\times 10^{-7}$ & $3.60\times 10^{-7}$ & 0.999998 \\
			                             & bf16, 1              & $2.48\times 10^{-4}$ & $3.43\times 10^{-2}$ & $5.62\times 10^{-2}$ & 0.9984   \\
			                             & bf16, 4              & $2.80\times 10^{-4}$ & $1.31\times 10^{-1}$ & $1.71\times 10^{-1}$ & 0.9851   \\
			                             & fp16, 4              & $2.30\times 10^{-5}$ & $1.64\times 10^{-2}$ & $2.10\times 10^{-2}$ & 0.9998   \\
			\midrule
			\multirow{3}{*}{FineWeb-Edu} & bf16, 1              & $1.95\times 10^{-4}$ & $3.82\times 10^{-2}$ & $8.62\times 10^{-2}$ & 0.9962   \\
			                             & bf16, 4              & $2.59\times 10^{-4}$ & $1.36\times 10^{-1}$ & $2.04\times 10^{-1}$ & 0.9791   \\
			                             & fp16, 4              & $2.68\times 10^{-5}$ & $1.78\times 10^{-2}$ & $2.69\times 10^{-2}$ & 0.9996   \\
			\bottomrule
		\end{tabular}
	}%
	\caption{Audit of a \mbox{Triton}-implemented fused CE+Z-loss GPU kernel on GPT-2 corpus logits. The kernel uses shared softmax statistics for forward and backward. $\Delta_{\mathrm{tot}}$ is the relative error in the total CE+Z logit gradient; $\Delta_Z$ is the relative error in the Z-loss source. Logit $\times$ is the pre-audit logit multiplier.}
	\label{tab:triton_fused}
\end{table}

\begin{table*}[t!]
	\centering
	\small
	\begin{tabular}{lclrrrr}
		\toprule
		Model                         & Scope                 & Corpus                       & Logit $\times$ & Grad. rel. error & Grad. cosine & Adam dir. rel. error \\
		\midrule
		\multirow{4}{*}{GPT-2}        & \multirow{4}{*}{head} & \multirow{2}{*}{WikiText}    & 1              & 0.0307           & 0.9995       & 0.139                \\
		                              &                       &                              & 4              & 0.1194           & 0.9929       & 0.247                \\
		                              &                       & \multirow{2}{*}{FineWeb-Edu} & 1              & 0.0368           & 0.9993       & 0.161                \\
		                              &                       &                              & 4              & 0.1236           & 0.9922       & 0.259                \\
		\addlinespace
		\multirow{4}{*}{GPT-2 Medium} & \multirow{4}{*}{head} & \multirow{2}{*}{WikiText}    & 1              & 0.0408           & 0.9992       & 0.128                \\
		                              &                       &                              & 4              & 0.1637           & 0.9866       & 0.228                \\
		                              &                       & \multirow{2}{*}{FineWeb-Edu} & 1              & 0.0397           & 0.9992       & 0.163                \\
		                              &                       &                              & 4              & 0.1453           & 0.9897       & 0.248                \\
		\midrule
		\multirow{2}{*}{GPT-2}        & \multirow{2}{*}{all}  & \multirow{2}{*}{WikiText}    & 1              & 0.0308           & 0.9995       & 0.144                \\
		                              &                       &                              & 4              & 0.0787           & 0.9968       & 0.192                \\
		\addlinespace
		\multirow{2}{*}{GPT-2 Medium} & \multirow{2}{*}{all}  & \multirow{2}{*}{WikiText}    & 1              & 0.0362           & 0.9993       & 0.146                \\
		                              &                       &                              & 4              & 0.1252           & 0.9921       & 0.181                \\
		\bottomrule
	\end{tabular}
	\caption{Optimizer-facing update audit. We backpropagate total CE+Z logit sources from the fp32 reference and the bf16 reconstruction through model graphs on corpus batches and compare output-head or whole-model gradients and first-step Adam directions. Whole-model rows use smaller WikiText batches because all transported gradients are materialized. Logit $\times$ is the pre-audit logit multiplier.}
	\label{tab:update_audit}
\end{table*}

Finally, we test whether the same source reconstruction error reaches optimizer-facing updates in addition to the logit-space source. We backpropagate either the fp32 reference total CE+Z logit source or a storage-quantized total CE+Z logit source through the same model graph on corpus batches. We then compare transported parameter gradients. The output-head block includes the direct output path. For tied heads in the GPT-2 family, the output-head block also includes the input-embedding path that returns through the Transformer body. Appendix Table~\ref{tab:update_audit} shows that the mismatch remains visible after transport. For bf16 storage, the relative difference in the total CE+Z output-head gradient ranges from 3.1\% to 4.1\% at the natural scale. At a logit scale of 4, the relative difference ranges from 11.9\% to 16.4\% across GPT-2 and GPT-2 Medium on WikiText and FineWeb-Edu. The corresponding first-step Adam direction mismatch is larger. The Adam direction mismatch ranges from 12.8\% to 16.3\% at the natural scale and from 22.8\% to 25.9\% at a scale of 4. Coordinate-wise adaptive normalization magnifies small sign and relative-magnitude changes. A smaller whole-model audit on WikiText gives comparable errors. At scales 1 and 4, the total-gradient relative errors are 3.1\% and 7.9\%, respectively, for GPT-2 and 3.6\% and 12.5\%, respectively, for GPT-2 Medium. The errors in the fp16 rows are much smaller. The fp16 total-gradient errors range from 0.4\% to 0.5\% at the natural scale and from 1.3\% to 2.0\% at a scale of 4 in the total-gradient audits. This update-level mismatch supports the practical audit recommendation. The backward source and update direction used by the actual training stack should be checked alongside the forward fused loss.

\FloatBarrier
\subsection{Additional Dense Continued-Pretraining Results}

\subsubsection{FineWeb-Edu GPT-2 Continued Pretraining}

Appendix Table~\ref{tab:fineweb_ft} reports a replication of the core GPT-2 comparison on FineWeb-Edu with disjoint training and validation blocks. This FineWeb-Edu comparison evaluates continued pretraining on web-corpus text rather than adaptation to WikiText validation. The CE baseline is again strong and low-variance. Centered head deployment preserves the mean PPL to within $1.6\times 10^{-4}$. Centered head deployment reduces the log-normalizer tail and the p99 output-to-hidden gain by factors of $9.6$ and $1.7$, respectively. Centered Z-loss has a mean PPL difference of $+0.0027$ relative to CE in this short comparison. Centered Z-loss gives the strongest joint tail reduction. The value of $P_Z^{99.9}$ falls from 246.5 to 19.5, and $A_p^{99}$ falls from 26.6 to 15.8. Standard Z-loss reduces the raw log-normalizer tail but does not reduce $A_p^{99}$. Standard Z-loss is also worse in validation PPL at this coefficient. The FineWeb-Edu result supports the main claim on a second corpus. Architecture-aware methods can reduce backward-geometry tails while maintaining comparable validation PPL.

\begin{table}[t!]
	\centering
	\scriptsize
	\resizebox{\columnwidth}{!}{%
		\begin{tabular}{@{}lrrrrrr@{}}
			\toprule
			Method          & PPL              & Diag. Z & $P_Z^{99.9}$ & $A_p^{99}$ & \shortstack{p99 pre-clip            \\$\|g\|_2$} & $C_{\rm aux}$ \\
			\midrule
			CE baseline     & $27.676\pm0.002$ & 0.1455  & 246.5        & 26.62      & 3.44                     & 0        \\
			Centered head   & $27.675\pm0.001$ & 0.00674 & 25.6         & 15.88      & 3.42                     & 0        \\
			\midrule
			Standard Z-loss & $27.699\pm0.002$ & 0.00418 & 33.4         & 26.62      & 3.62                     & $-0.088$ \\
			Centered Z-loss & $27.678\pm0.002$ & 0.00550 & 19.5         & 15.81      & 3.41                     & $-0.082$ \\
			\bottomrule
		\end{tabular}
	}%
	\caption{GPT-2 continued pretraining on FineWeb-Edu \texttt{sample-10BT} with disjoint streamed validation blocks.}
	\label{tab:fineweb_ft}
\end{table}
\FloatBarrier

\subsubsection{FineWeb-Edu GPT-2 Medium Fresh-Token Continued Pretraining}

Appendix Table~\ref{tab:fineweb_medium_ft} reports the GPT-2 Medium web-corpus evaluation with 39.3M consumed tokens and disjoint validation blocks. This experiment extends the FineWeb-Edu evaluation to GPT-2 Medium while avoiding repeated-subset ambiguity. As with the GPT-2 FineWeb-Edu experiment, validation-quality differences are small compared with the geometric effect. The mean paired PPL difference for centered head relative to CE is $-6.0\times 10^{-5}$, indicating essentially unchanged validation quality. Centered head deployment reduces $A_p^{99}$ by a factor of $2.19$. Standard Z-loss gives the largest reductions in scalar Z-loss and the p99 pre-clip full-model gradient norm. Standard Z-loss leaves $A_p^{99}$ at the CE level. Centered Z-loss gives the strongest joint scalar-tail and output-to-hidden gain reduction. Centered Z-loss reduces $P_Z^{99.9}$ and $A_p^{99}$ by factors of $7.5$ and $2.21$, respectively, with a measured PPL difference of $+0.025$ relative to the CE mean.

\begin{table}[t!]
	\centering
	\scriptsize
	\resizebox{\columnwidth}{!}{%
		\begin{tabular}{@{}lrrrrrr@{}}
			\toprule
			Method          & PPL              & Diag. Z & $P_Z^{99.9}$ & $A_p^{99}$ & \shortstack{p99 pre-clip            \\$\|g\|_2$} & $C_{\rm aux}$ \\
			\midrule
			CE baseline     & $21.597\pm0.008$ & 0.0347  & 127.4        & 23.59      & 3.88                     & 0        \\
			Centered head   & $21.597\pm0.008$ & 0.00487 & 20.3         & 10.79      & 3.88                     & 0        \\
			\midrule
			Standard Z-loss & $21.605\pm0.009$ & 0.00150 & 15.9         & 23.54      & 1.72                     & $-0.060$ \\
			Centered Z-loss & $21.622\pm0.009$ & 0.00407 & 16.9         & 10.65      & 3.84                     & 0.067    \\
			\bottomrule
		\end{tabular}
	}%
	\caption{GPT-2 Medium fresh-token continued pretraining on FineWeb-Edu \texttt{sample-10BT}.}
	\label{tab:fineweb_medium_ft}
\end{table}
\FloatBarrier

A paired analysis of Appendix Tables~\ref{tab:fineweb_ft} and~\ref{tab:fineweb_medium_ft} calibrates the same point. For GPT-2 on FineWeb-Edu, the mean paired PPL differences relative to CE are $-1.6\times 10^{-4}$ for centered head deployment and $+2.7\times 10^{-3}$ for centered Z-loss. Both reduce $A_p^{99}$ by about 10.8. In the fresh-token GPT-2 Medium grid on FineWeb-Edu, the corresponding mean paired PPL differences are $-6.0\times 10^{-5}$ for centered head deployment, $+2.52\times 10^{-2}$ for centered Z-loss, and $+8.64\times 10^{-3}$ for standard Z-loss. The mean paired differences in $A_p^{99}$ are $-12.80$ for centered head, $-12.94$ for centered Z-loss, and $-0.046$ for standard Z-loss. These differences confirm that raw-logit Z-loss and centered transport target different failure modes.

\subsubsection{1200-Step GPT-2 Diagnostic}

Appendix Table~\ref{tab:gpt2_diag} reports a 1200-step GPT-2 diagnostic run with post-training gradient instrumentation. This diagnostic run emphasizes gradient pathways and alignment rather than validation-quality rankings. The norm of the auxiliary gradient with respect to the tied embedding table ranges from $0.4\%$ to $0.5\%$ of the corresponding CE gradient norm. The auxiliary gradient is not zero and splits into output and input paths. Standard Z-loss leaves $A_p^{99}$ close to the CE baseline while reducing the common-shift tail. Centered Z-loss reduces both the log-normalizer tail and $A_p^{99}$. The auxiliary-to-CE cosine is slightly negative in this diagnostic batch. This negative cosine supports reporting alignment rather than assuming that the log-normalizer-regularization update is always CE-aligned.

\begin{table}[t!]
	\centering
	\scriptsize
	\resizebox{\columnwidth}{!}{%
		\begin{tabular}{@{}lrrrrrr@{}}
			\toprule
			Method          & PPL   & $P_Z^{99.9}$ & $A_p^{99}$ & \shortstack{p99 pre-clip                     \\$\|g\|_2$} & $R_{\rm aux}$ & $C_{\rm aux}$ \\
			\midrule
			CE baseline     & 20.97 & 256.9        & 27.33      & 6.94                     & 0      & 0        \\
			\midrule
			Standard Z-loss & 21.01 & 34.4         & 27.26      & 8.11                     & 0.0041 & $-0.056$ \\
			Centered Z-loss & 21.01 & 22.7         & 17.01      & 7.53                     & 0.0047 & $-0.057$ \\
			\bottomrule
		\end{tabular}
	}%
	\caption{GPT-2 1200-step diagnostic run on WikiText-103 with post-training gradient instrumentation.}
	\label{tab:gpt2_diag}
\end{table}
\FloatBarrier

\subsubsection{Model-Scale and Model-Family Transfer}

Appendix Table~\ref{tab:medium_ft} reports an extension of the comparison to GPT-2 Medium across three seeds. The CE baseline is strong and low-variance in this run. Standard Z-loss improves PPL and reduces $P_Z^{99.9}$. Standard Z-loss leaves the output-to-hidden gain tail near the CE baseline. Centered head deployment removes the common-shift coordinate and halves $A_p^{99}$. Centered head deployment targets the log-normalizer tail less directly. Centered Z-loss gives the best mean PPL in this grid. Centered Z-loss reduces $P_Z^{99.9}$ and $A_p^{99}$ by factors of $10.5$ and $1.9$, respectively, relative to the CE baseline. These results reinforce the interpretation that common-shift removal and output-to-hidden gain reduction are distinct interventions that combine constructively in this setting.

\begin{table}[t!]
	\centering
	\scriptsize
	\begin{tabular}{@{}lrrrr@{}}
		\toprule
		Method          & PPL              & Diag. Z & $P_Z^{99.9}$ & $A_p^{99}$ \\
		\midrule
		CE baseline     & $17.391\pm0.028$ & 0.2620  & 250.8        & 21.69      \\
		Centered head   & $17.393\pm0.027$ & 0.01183 & 32.9         & 11.46      \\
		\midrule
		Standard Z-loss & $17.358\pm0.029$ & 0.00283 & 27.4         & 21.61      \\
		Centered Z-loss & $17.326\pm0.027$ & 0.00700 & 23.9         & 11.26      \\
		\bottomrule
	\end{tabular}
	\caption{GPT-2 Medium scaling experiment on WikiText-103.}
	\label{tab:medium_ft}
\end{table}

Appendix Table~\ref{tab:medium_long} reports a 4800-step GPT-2 Medium evaluation on the same fixed training subset. The fixed-subset setup yields higher absolute PPL than the 2400-step scaling experiment. The higher absolute PPL is consistent with repeated exposure to a small subset rather than fresh large-corpus pretraining. Within this fixed-subset comparison, centered Z-loss is better than the CE baseline for every matched run. The mean paired PPL difference for centered Z-loss relative to CE is $-0.268$. Centered Z-loss reduces $P_Z^{99.9}$ and $A_p^{99}$ by factors of $9.7$ and $1.9$, respectively. Standard Z-loss also improves PPL and strongly reduces $P_Z^{99.9}$. Standard Z-loss leaves $A_p^{99}$ near the CE baseline. Centered head deployment reduces $A_p^{99}$ without the same targeted log-normalizer-tail reduction.

\begin{table}[t!]
	\centering
	\scriptsize
	\resizebox{\columnwidth}{!}{%
		\begin{tabular}{@{}lrrrrr@{}}
			\toprule
			Method          & \multicolumn{1}{c}{PPL} & Diag. Z & $P_Z^{99.9}$ & $A_p^{99}$ & \shortstack{p99 pre-clip \\$\|g\|_2$} \\
			\midrule
			CE baseline     & $20.328\pm0.012$        & 0.2402  & 247.4        & 19.20      & 4.42                     \\
			Centered head   & $20.329\pm0.005$        & 0.01582 & 40.7         & 10.33      & 4.43                     \\
			\midrule
			Standard Z-loss & $20.264\pm0.003$        & 0.00326 & 26.3         & 19.08      & 2.14                     \\
			Centered Z-loss & $20.060\pm0.015$        & 0.00798 & 25.6         & 9.98       & 4.40                     \\
			\bottomrule
		\end{tabular}
	}%
	\caption{GPT-2 Medium fixed-subset evaluation on WikiText-103. The fixed training subset separates this comparison from large-corpus pretraining results.}
	\label{tab:medium_long}
\end{table}

The experiment in Appendix Table~\ref{tab:pythia_ft} tests the same interventions on Pythia-160M. This model has a much larger raw common-shift channel than models in the GPT-2 family. The experiment is therefore a stress test for architecture-dependent transport. Centered head deployment reduces the CE-baseline $P_Z^{99.9}$ tail and $A_p^{99}$ by factors of $15.1$ and $8.7$, respectively. Centered head deployment also gives a mean PPL below that of CE, with larger run-to-run variation. Centered Z-loss gives the strongest joint reduction. Centered Z-loss reduces $P_Z^{99.9}$, $A_p^{99}$, and the p99 pre-clip full-model gradient norm by factors of $20.8$, $8.9$, and approximately $1.5$, respectively. Standard raw-logit Z-loss makes the scalar Z-loss almost vanish and reduces $P_Z^{99.9}$. Standard raw-logit Z-loss also worsens PPL by 9.2 points, leaves $A_p^{99}$ above the CE baseline, and raises the p99 pre-clip full-model gradient norm from 522 to 3612. Thus, nominally successful scalar Z-loss reduction can correspond to a worse transported update.

\begin{table}[t!]
	\centering
	\scriptsize
	\resizebox{\columnwidth}{!}{%
		\begin{tabular}{@{}lrrrrrr@{}}
			\toprule
			Method          & PPL            & Diag. Z & $P_Z^{99.9}$ & $A_p^{99}$ & \shortstack{p99 pre-clip          \\$\|g\|_2$} & $R_{\rm aux}$ \\
			\midrule
			CE baseline     & $35.97\pm3.79$ & 45.58   & 719.4        & 51.16      & 522                      & 0      \\
			Centered head   & $34.67\pm2.56$ & 0.0787  & 47.7         & 5.89       & 449                      & 0      \\
			\midrule
			Standard Z-loss & $45.17\pm1.43$ & 0.00584 & 38.3         & 55.46      & 3612                     & 0.0019 \\
			Centered Z-loss & $34.74\pm0.58$ & 0.0417  & 34.7         & 5.73       & 355                      & 0.0069 \\
			\bottomrule
		\end{tabular}
	}%
	\caption{Pythia-160M WikiText-103 continued pretraining. The Z-loss target in this grid is based on the actual output-head vocabulary size.}
	\label{tab:pythia_ft}
\end{table}

The continued pretraining result separates four effects. Standard Z-loss is a strong practical baseline for reducing log-normalizer tails. The PPL effects of standard Z-loss depend on the raw common-shift channel and training regime. Centered head deployment most directly removes the common-shift coordinate and lowers output-to-hidden gain tails. In the 4800-step GPT-2 run, centered head deployment matches CE within run-to-run variation. Centered Z-loss is the most balanced intervention in the high-coefficient GPT-2 setting and in the high-common-shift Pythia-160M setting. Centered Z-loss reduces both $P_Z^{99.9}$ and $A_p^{99}$ without the rare raw-logit gradient spikes seen in standard and gain-aware Z-loss. Factorized and gain-aware alternatives provide additional tuning knobs whose benefits depend on the target diagnostic. Together, these interventions form architecture-aware variants of Z-loss transport.

\subsection{High-Coefficient Stress and Precision Endpoints}
\label{app:stress_endpoints}

\subsubsection{Additional WikiText-103 Stress-Test Details}

The main-text stress test evaluates optimizer-facing consequences rather than introducing a new transport mechanism. The matched GPT-2 continued pretraining sweep in Table~\ref{tab:stress_diag} increases the applied auxiliary coefficient tenfold to $\lambda_{\rm aux}=1\times 10^{-3}$. Centered Z-loss is the only auxiliary method that improves the 1200-step mean PPL relative to CE while also substantially reducing both $P_Z^{99.9}$ and $A_p^{99}$ and keeping the p99 pre-clip full-model gradient norm near the CE level. Gain-aware Z-loss also improves mean PPL and reduces the log-normalizer tail. Because gain-aware Z-loss operates on raw logits, the gain-aware objective leaves $A_p^{99}$ near the CE baseline and shows rare large gradient-norm spikes. Standard Z-loss has the same raw-logit geometry issue and a small PPL increase relative to CE in this high-coefficient setting. The Centered head row is the gauge-only control. The lower $A_p^{99}$ of the Centered head row exposes the centered transport geometry, while the row has no auxiliary branch, so the reported $P_Z^{99.9}$ is a deployed-coordinate diagnostic rather than the outcome of Z-loss training.

Clip rate and p99 gradient norm summarize different parts of the training-step distribution. CE, Centered head, and Centered Z-loss exceed the 1.0 clipping threshold on nearly every step but remain near the CE-scale p99 norm. Standard and Gain-aware Z-loss exceed the threshold on only about $7\%$ of steps, yet the p99 norms of Standard and Gain-aware Z-loss are much larger. The lower clip rate therefore does not indicate a safer tail. The lower rate reflects mostly sub-threshold steps punctuated by rare, very large events. This distributional distinction is the reason we report both threshold frequencies and tail magnitudes.

Appendix Tables~\ref{tab:fineweb_noclip_stress}, \ref{tab:fineweb_update_outlier}, \ref{tab:fineweb_update_outlier_sweep}, \ref{tab:adam_state_stress}, and~\ref{tab:overflow_audit} report further stress and precision endpoints, including effectively unclipped stress tests, update-outlier and coefficient sweeps, Adam-state audits, and static fp16 overflow audits. These endpoints extend Table~\ref{tab:stress_diag}. Raw-logit scalar-tail reduction can amplify transported update tails. Centered transport reduces output-to-hidden gain and update-tail metrics with regime-dependent quality trade-offs.

These endpoints measure update pressure and numerical headroom rather than observed training divergence. No nonfinite gradients occurred in the reported FineWeb-Edu stress runs. The static fp16 endpoint is a held-out proxy. The endpoint applies candidate static loss scales to gradients computed in the stated audit under bf16 autocast and asks whether any scaled coordinate would exceed the fp16 range. The static fp16 endpoint therefore quantifies margin under a specified deployment scenario rather than claiming that the corresponding training run actually overflowed.

Appendix Tables~\ref{tab:medium_ft}, \ref{tab:medium_long}, and~\ref{tab:pythia_ft} report model-scale and model-family transfer experiments with GPT-2 Medium and Pythia-160M. These results do not introduce a separate mechanism. The transfer experiments test the same dense-head interpretation across scale and model family. Common-shift removal and output-to-hidden gain reduction are distinct interventions. Scalar Z-loss tail reduction can differ from transported-update tail reduction.

\FloatBarrier
\subsubsection{Effectively Unclipped and Update-Outlier Stress}

Appendix Table~\ref{tab:fineweb_noclip_stress} reports results from a 900-step effectively unclipped update-tail evaluation using GPT-2 Medium on FineWeb-Edu. The 900-step setting uses the same disjoint validation offset as Appendix Table~\ref{tab:fineweb_medium_ft}. The setting is more aggressive than the low-coefficient FineWeb-Edu comparison. The 900-step setting uses a learning rate of $2\times 10^{-4}$, $\lambda_{\rm aux}=1\times 10^{-3}$, a clipping threshold of $1\times 10^{8}$, and 900 optimizer steps. The result again separates scalar-tail reduction from transported-update-tail reduction. Standard raw-logit Z-loss gives the smallest scalar tail, reducing $P_Z^{99.9}$ by a factor of $16.5$. Under standard Z-loss, the p99 pre-clip full-model gradient norm rises by a factor of $4.1$. Standard Z-loss also raises the maximum observed pre-clip full-model gradient norm by a factor of $7.8$. Factorized and gain-aware variants confirm that the issue is not unique to the standard scalar objective. Both reduce the scalar tail. Because the factorized and gain-aware variants still leave the raw-logit output-to-hidden gain near CE, the p99 pre-clip full-model gradient norms of these two variants rise to 23.0 and 13.0. Centered Z-loss prioritizes tail reduction at a measured PPL cost in this 900-step setting. Centered Z-loss lowers both $P_Z^{99.9}$ and $A_p^{99}$ while leaving p99 and maximum pre-clip full-model gradient norms at the CE scale. Centered head deployment is quality-neutral in this setting and halves $A_p^{99}$ without introducing update outliers. No run produced gradients containing not-a-number (NaN) or infinity (Inf) values.

\begin{table}[t!]
	\centering
	\scriptsize
	\resizebox{\columnwidth}{!}{%
		\begin{tabular}{@{}lrrrrr@{}}
			\toprule
			Method            & PPL              & $P_Z^{99.9}$ & $A_p^{99}$ & \shortstack{p99 pre-clip         \\$\|g\|_2$} & \shortstack{max pre-clip\\$\|g\|_2$} \\
			\midrule
			CE baseline       & $22.197\pm0.009$ & 161.2        & 24.68      & 4.61                     & 5.61  \\
			Centered head     & $22.196\pm0.005$ & 19.8         & 11.59      & 4.59                     & 5.62  \\
			\midrule
			Standard Z-loss   & $22.214\pm0.005$ & 9.75         & 24.95      & 18.94                    & 43.93 \\
			Factorized Z-loss & $22.409\pm0.003$ & 16.1         & 24.81      & 23.03                    & 53.80 \\
			Gain-aware Z-loss & $22.207\pm0.006$ & 10.9         & 24.86      & 13.01                    & 47.24 \\
			\midrule
			Centered Z-loss   & $22.348\pm0.006$ & 13.4         & 11.13      & 4.58                     & 5.70  \\
			\bottomrule
		\end{tabular}
	}%
	\caption{Effectively unclipped high-coefficient GPT-2 Medium stress on FineWeb-Edu.}
	\label{tab:fineweb_noclip_stress}
\end{table}

The experiment in Appendix Table~\ref{tab:fineweb_update_outlier} increases both the learning rate and the auxiliary coefficient to measure optimizer-facing outliers. PPL and transport diagnostics rank the methods differently in this setting. At $\lambda_{\rm aux}=3\times 10^{-3}$, standard, factorized, and gain-aware raw-logit variants all improve 600-step PPL relative to CE while lowering the scalar log-normalizer tail. The backward transport diagnostics reveal the cost. Standard Z-loss raises the p99 and maximum pre-clip full-model gradient norms by factors of $5.3$ and $22.3$, respectively. Factorized Z-loss raises the p99 and maximum norms by factors of $5.5$ and $29.8$, respectively. Gain-aware Z-loss is milder but still raises the p99 and maximum pre-clip full-model gradient norms by factors of $3.3$ and $17.3$, respectively. These high-coefficient raw-logit variants have $\|g\|>10$ events in 1.56\% to 1.72\% of optimizer steps. The high-coefficient raw-logit variants have $\|g\|>100$ events in 0.11\% to 0.61\% of optimizer steps.

The coefficient sweeps in Appendix Table~\ref{tab:fineweb_update_outlier_sweep} map this trade-off. Raw-logit variants reduce thresholded-update rates as the coefficient decreases. At $\lambda_{\rm aux}=1\times 10^{-4}$, standard and gain-aware Z-loss are near CE in PPL and have zero $\|g\|>10$, $\|g\|>50$, and $\|g\|>100$ events. Standard and gain-aware Z-loss still leave $A_p^{99}$ at the CE scale, about 25.0. The scalar-tail reduction from these raw-logit variants is therefore not output-to-hidden gain reduction. Centered Z-loss shows the complementary trade-off. At $\lambda_{\rm aux}=3\times 10^{-3}$, centered Z-loss prioritizes update-event suppression. Centered Z-loss keeps all update-event rates at zero, with a PPL 0.60 higher than the CE baseline. Lowering the coefficient to $1\times 10^{-4}$ recovers near-CE PPL, with 27.774 for centered Z-loss and 27.766 for CE. Centered Z-loss still reduces $P_Z^{99.9}$ from 47.3 to 15.8 and $A_p^{99}$ from 25.0 to 11.7. Centered Z-loss also keeps all $\|g\|>10$, $\|g\|>50$, and $\|g\|>100$ event rates at zero.

\begin{table*}[t!]
	\centering
	\small
	\begin{tabular}{lrrrrrrrr}
		\toprule
		Method            & PPL    & $P_Z^{99.9}$ & $A_p^{99}$ & p99 pre-clip $\|g\|_2$ & $\Phi_{10}$ & $\Phi_{50}$ & $\Phi_{100}$ & max pre-clip $\|g\|_2$ \\
		\midrule
		CE baseline       & 27.766 & 47.3         & 25.01      & 4.46                   & 0.0000      & 0.0000      & 0.0000       & 5.86                   \\
		Centered head     & 27.777 & 16.9         & 11.88      & 4.48                   & 0.0000      & 0.0000      & 0.0000       & 5.86                   \\
		\midrule
		Standard Z-loss   & 26.945 & 7.56         & 25.17      & 23.48                  & 0.0172      & 0.0067      & 0.0061       & 129.8                  \\
		Factorized Z-loss & 27.479 & 11.71        & 24.93      & 24.38                  & 0.0172      & 0.0072      & 0.0050       & 174.9                  \\
		Gain-aware Z-loss & 27.081 & 7.75         & 25.09      & 14.89                  & 0.0156      & 0.0067      & 0.0011       & 101.2                  \\
		\bottomrule
	\end{tabular}
	\caption{Stronger GPT-2 Medium update-outlier stress on FineWeb-Edu at $\lambda_{\rm aux}=3\times 10^{-3}$ for auxiliary-method rows. Columns $\Phi_{10}$, $\Phi_{50}$, and $\Phi_{100}$ are pre-clip full-model gradient-norm event fractions, with $\Phi_\tau\coloneqq\Pr_{\rm step}[\|g\|_2>\tau]$. CE and centered head rows have no applied auxiliary loss. No microbatches or gradients contained NaN or Inf values in any run.}
	\label{tab:fineweb_update_outlier}
\end{table*}

\begin{table*}[t!]
	\centering
	\scriptsize
	\begin{tabular}{lrrrrrrrrr}
		\toprule
		Method            & $\lambda_{\rm aux}$ & PPL    & $P_Z^{99.9}$ & $A_p^{99}$ & p99 pre-clip $\|g\|_2$ & $\Phi_{10}$ & $\Phi_{50}$ & $\Phi_{100}$ & max pre-clip $\|g\|_2$ \\
		\midrule
		\multicolumn{10}{l}{Raw-logit variants}                                                                                                                                   \\
		Standard Z-loss   & $3\times 10^{-4}$   & 27.659 & 10.97        & 25.02      & 5.23                   & 0.0050      & 0.0000      & 0.0000       & 13.7                   \\
		Factorized Z-loss & $3\times 10^{-4}$   & 27.773 & 16.04        & 24.62      & 6.17                   & 0.0050      & 0.0000      & 0.0000       & 15.0                   \\
		Gain-aware Z-loss & $3\times 10^{-4}$   & 27.666 & 11.85        & 25.02      & 4.23                   & 0.0022      & 0.0000      & 0.0000       & 10.7                   \\
		Standard Z-loss   & $1\times 10^{-4}$   & 27.752 & 14.30        & 25.13      & 2.64                   & 0.0000      & 0.0000      & 0.0000       & 6.83                   \\
		Factorized Z-loss & $1\times 10^{-4}$   & 27.819 & 20.87        & 24.73      & 3.22                   & 0.0000      & 0.0000      & 0.0000       & 7.18                   \\
		Gain-aware Z-loss & $1\times 10^{-4}$   & 27.753 & 14.95        & 25.00      & 2.31                   & 0.0000      & 0.0000      & 0.0000       & 6.37                   \\
		\midrule
		\multicolumn{10}{l}{Centered Z-loss}                                                                                                                                      \\
		Centered Z-loss   & $3\times 10^{-3}$   & 28.369 & 11.15        & 10.81      & 4.56                   & 0.0000      & 0.0000      & 0.0000       & 6.09                   \\
		Centered Z-loss   & $1\times 10^{-3}$   & 27.987 & 13.22        & 11.13      & 4.38                   & 0.0000      & 0.0000      & 0.0000       & 5.96                   \\
		Centered Z-loss   & $3\times 10^{-4}$   & 27.840 & 15.00        & 11.53      & 4.50                   & 0.0000      & 0.0000      & 0.0000       & 5.86                   \\
		Centered Z-loss   & $1\times 10^{-4}$   & 27.774 & 15.77        & 11.68      & 4.54                   & 0.0000      & 0.0000      & 0.0000       & 5.83                   \\
		\bottomrule
	\end{tabular}
	\caption{FineWeb-Edu update-outlier coefficient sweeps. Columns $\Phi_{10}$, $\Phi_{50}$, and $\Phi_{100}$ are pre-clip full-model gradient-norm event fractions, with $\Phi_\tau\coloneqq\Pr_{\rm step}[\|g\|_2>\tau]$.}
	\label{tab:fineweb_update_outlier_sweep}
\end{table*}

\subsubsection{Optimizer-State and Overflow Endpoints}

Appendix Table~\ref{tab:adam_state_stress} reports the optimizer-memory endpoint under the same effectively unclipped FineWeb-Edu stress setting. The Adam-state endpoint samples AdamW state every 50 steps over 300 optimizer steps. Under this high-threshold clipping convention, these measurements show how raw-logit gradient spikes affect an adaptive optimizer. Standard, factorized, and gain-aware Z-loss improve 300-step PPL and reduce $P_Z^{99.9}$. The associated raw-logit gradient spikes are nevertheless written into the second-moment accumulator. Relative to CE, standard, factorized, and gain-aware Z-loss increase the maximum sampled Adam second-moment accumulator $v$ by factors of $36.7$, $42.1$, and $20.3$, respectively. The full preconditioned Adam direction norm is partly damped by this larger $v$, as expected from coordinate-wise normalization in Adam by $\sqrt{v}$. The maximum-coordinate Adam direction still increases slightly. Centered head deployment and centered Z-loss keep the Adam $v$ tail close to CE. Centered head deployment and centered Z-loss also preserve a zero pre-clip gradient-event rate. Centered Z-loss shows the corresponding PPL trade-off at this aggressive coefficient. This Adam-state endpoint links the tail diagnostic to optimizer memory and update coordinates, while clip-before-Adam training stacks can reduce this particular memory effect by clipping spikes before the Adam update.

\begin{table*}[t!]
	\centering
	\small
	\begin{tabular}{lrrrrrrr}
		\toprule
		Method            & PPL    & $P_Z^{99.9}$ & $A_p^{99}$ & p99 pre-clip $\|g\|_2$ & $\Phi_{100}$ & $v_{\max}$ ratio to CE & p99 $\|d\|_\infty$ \\
		\midrule
		CE baseline       & 26.899 & 69.43        & 25.58      & 4.98                   & 0.0000       & 1.00                   & 1.106              \\
		Centered head     & 26.897 & 18.13        & 12.07      & 4.94                   & 0.0000       & 0.98                   & 1.108              \\
		\midrule
		Standard Z-loss   & 25.406 & 9.01         & 25.54      & 81.23                  & 0.0100       & 36.7                   & 1.126              \\
		Factorized Z-loss & 25.786 & 13.81        & 25.68      & 95.72                  & 0.0111       & 42.1                   & 1.122              \\
		Gain-aware Z-loss & 25.599 & 9.33         & 25.36      & 59.47                  & 0.0033       & 20.3                   & 1.123              \\
		\midrule
		Centered Z-loss   & 27.361 & 11.45        & 11.24      & 5.12                   & 0.0000       & 1.12                   & 1.102              \\
		\bottomrule
	\end{tabular}
	\caption{Adam-state endpoint for effectively unclipped GPT-2 Medium stress on FineWeb-Edu. $v_{\max}$ is the maximum sampled Adam second moment; the $v_{\max}$ ratio to CE is computed relative to the matched CE baseline. All recorded gradients and sampled optimizer-state values remained finite.}
	\label{tab:adam_state_stress}
\end{table*}

Appendix Table~\ref{tab:overflow_audit} gives a second numerical endpoint that is independent of the Adam state. We backpropagate the same high-coefficient objectives on held-out corpus text. We inspect full-model gradients before any optimizer step and measure static fp16 loss-scale headroom. The endpoint complements the Adam-state analysis by measuring maximum-coordinate headroom. The ordering is consistent with the effectively unclipped and Adam-state results. For GPT-2 on WikiText-103, standard raw-logit Z-loss increases the mean full-model gradient norm from 17.0 to 191.2. Standard raw-logit Z-loss increases the mean maximum coordinate from 5.77 to 70.0 and reduces the worst-case maximum safe static scale from 6718 to 731. At a static scale of 4096, standard raw-logit Z-loss gradients overflow in every audited batch, while CE and centered Z-loss do not. For GPT-2 Medium on FineWeb-Edu, standard, factorized, and gain-aware raw-logit variants reduce the worst-case maximum safe scale to 1483, 1437, and 2248, respectively. The raw-logit variants produce overflow at a static scale of 4096 in 87.5\% to 100\% of audited batches. Gradients from the centered head and centered Z-loss stay at the CE scale. Centered-head and centered-Z-loss gradients have worst-case maximum safe scales around 8900, and neither centered method produces overflow at a static scale of 4096. This overflow endpoint links the transported maximum-coordinate tail to a concrete mixed-precision numerical margin.

\begin{table*}[t!]
	\centering
	\scriptsize
	\begin{tabular}{llrrrrr}
		\toprule
		Setting                                   & Method            & mean $\norm{g}_2$ & mean $\norm{g}_\infty$ & $\kappa_{\min}$ & $\Psi_{4096}$ & $\Psi_{16384}$ \\
		\midrule
		\multirow{3}{*}{WikiText GPT-2}           & CE baseline       & 17.0              & 5.77                   & 6718            & 0.000         & 0.938          \\
		                                          & Standard Z-loss   & 191.2             & 70.0                   & 731             & 1.000         & 1.000          \\
		                                          & Centered Z-loss   & 17.0              & 5.66                   & 6805            & 0.000         & 0.875          \\
		\midrule
		\multirow{6}{*}{FineWeb-Edu GPT-2 Medium} & CE baseline       & 18.0              & 4.63                   & 8853            & 0.000         & 0.625          \\
		                                          & Centered head     & 18.0              & 4.63                   & 8853            & 0.000         & 0.625          \\
		                                          & Standard Z-loss   & 188.3             & 30.8                   & 1483            & 1.000         & 1.000          \\
		                                          & Factorized Z-loss & 199.0             & 32.4                   & 1437            & 1.000         & 1.000          \\
		                                          & Gain-aware Z-loss & 143.4             & 22.4                   & 2248            & 0.875         & 1.000          \\
		                                          & Centered Z-loss   & 17.9              & 4.53                   & 8928            & 0.000         & 0.625          \\
		\bottomrule
	\end{tabular}
	\caption{Static fp16 loss-scale overflow proxy on full-model gradients from held-out corpus blocks. All rows use bf16 autocast and 16 held-out sequence blocks; auxiliary-method rows use $\lambda_{\rm aux}=3\times 10^{-3}$. Standard, Factorized, and Gain-aware Z-loss use raw logits. Centered head has no auxiliary loss, while Centered Z-loss uses row-centered deployed logits. For Factorized Z-loss, $\lambda_\mu=\lambda_{\mathrm{rel}}=\lambda_{\rm aux}$, $c_\mu=0$, and $c_{\mathrm{rel}}=\log V$. Gain-aware Z-loss uses $\beta=1\times 10^{-2}$. $\kappa_{\min}$ is the minimum over audited batches of the per-batch maximum safe scale $65504/\norm{g}_\infty$. Each of $\Psi_{4096}$ and $\Psi_{16384}$ denotes the fraction of audited batches whose scaled gradient would exceed the fp16 range at the corresponding static scale, so each lies in $[0,1]$.}
	\label{tab:overflow_audit}
\end{table*}

\FloatBarrier
\onecolumn
\twocolumn[
	\subsection{Additional Router and MoE Training Results}
	\label{app:moe_training_results}
]
\raggedbottom

\subsubsection{Router Projection Audit}

We compare token-layer mean, active-route mean, and active-route sum reductions for a 16-expert linear router applied to GPT-2 hidden states from WikiText-103; $\delta_h^Z$ denotes the router Z-loss correction transported to the router input.

\subsubsection{End-to-End MoE Training}

We then train controlled MoE language models end-to-end on WikiText-103. Appendix Table~\ref{tab:moe} reports the top-$4$ setting, where the scale gap between active-route mean and active-route sum is largest. The matched variants use adjusted coefficients so that the scale of each variant relative to the token-layer mean equals 1. Consistent with the router effective-scale accounting above, the matched variants reproduce the token-layer-mean result to numerical precision within each matched run. This matched recovery is an end-to-end equivalence check. Under the reused initialization, coefficient matching makes the effective router objectives equivalent, and their matched recovery confirms the predicted reduction-scale accounting at the end-to-end training level. Unmatched active-route mean uses one quarter of the coefficient required to match the token-layer mean and has lower router entropy. Active-route sum uses four times the coefficient required to match the token-layer mean and has higher router entropy. The 1200-step setting yields similar validation CE across rows. The router metrics follow the predicted scale accounting. This controlled MoE training experiment shows that absolute per-decision coefficients and scales relative to the token-layer mean make reduction conventions comparable. The top-$2$ setting shows the same matched-variant recovery with relative scales 0.5, 1.0, and 2.0.

\noindent\begin{minipage}{\columnwidth}
	\centering
	\scriptsize
	\begin{tabular}{@{}clrr@{}}
		\toprule
		Top-$k$            & Convention        & \shortstack{Scale relative to                        \\token-layer mean} & \shortstack{Mean router\\$\norm{\delta_h^Z}_2$} \\
		\midrule
		\multirow{3}{*}{1} & token-layer mean  & 1.00                          & $9.41\times 10^{-9}$ \\
		                   & active-route mean & 1.00                          & $9.41\times 10^{-9}$ \\
		                   & active-route sum  & 1.00                          & $9.41\times 10^{-9}$ \\
		\midrule
		\multirow{3}{*}{2} & token-layer mean  & 1.00                          & $1.59\times 10^{-8}$ \\
		                   & active-route mean & 0.50                          & $7.96\times 10^{-9}$ \\
		                   & active-route sum  & 2.00                          & $3.18\times 10^{-8}$ \\
		\midrule
		\multirow{3}{*}{4} & token-layer mean  & 1.00                          & $4.59\times 10^{-9}$ \\
		                   & active-route mean & 0.25                          & $1.15\times 10^{-9}$ \\
		                   & active-route sum  & 4.00                          & $1.83\times 10^{-8}$ \\
		\bottomrule
	\end{tabular}
	\captionof{table}{Router Z-loss reduction conventions for a 16-expert linear router applied to GPT-2 hidden states obtained from WikiText-103 validation blocks. Effective scale and hidden correction norm change with $k$ despite the same nominal $\lambda_R=1\times 10^{-3}$.}
	\label{tab:router}
\end{minipage}

\noindent\begin{minipage}{\columnwidth}
	\centering
	\scriptsize
	\resizebox{\columnwidth}{!}{%
		\begin{tabular}{@{}lrrrrr@{}}
			\toprule
			Router loss               & $s_R^{\mathrm{rel}}$ & CE                & PPL           & Entropy & Load CV \\
			\midrule
			none                      & 0.0                  & $5.8361\pm0.0095$ & $342.5\pm3.3$ & 1.714   & 0.192   \\
			\midrule
			token-layer mean          & 1.0                  & $5.8352\pm0.0090$ & $342.2\pm3.1$ & 1.725   & 0.187   \\
			\addlinespace
			active-route mean         & 0.25                 & $5.8355\pm0.0093$ & $342.2\pm3.2$ & 1.713   & 0.191   \\
			active-route mean matched & 1.0                  & $5.8352\pm0.0090$ & $342.2\pm3.1$ & 1.725   & 0.187   \\
			\addlinespace
			active-route sum          & 4.0                  & $5.8358\pm0.0095$ & $342.4\pm3.2$ & 1.756   & 0.190   \\
			active-route sum matched  & 1.0                  & $5.8352\pm0.0090$ & $342.2\pm3.1$ & 1.725   & 0.187   \\
			\bottomrule
		\end{tabular}
	}%
	\captionof{table}{End-to-end top-$4$ MoE language-model training on WikiText-103. Scale-matched active-route variants recover the token-layer-mean baseline; unmatched variants move router entropy according to each variant's effective scale. Load coefficient of variation (CV) reports expert-load dispersion.}
	\label{tab:moe}
\end{minipage}

\subsection{Reporting Recommendations and Limitations}

\subsubsection{Z-Loss Reporting Recommendations}

The experiments yield a practical reporting standard for Z-loss. For dense output heads, comparable reports should include the applied auxiliary coefficient $\lambda_{\rm aux}$ and the diagnostic coefficient $\lambda_{\rm diag}$ when different. Comparable reports should also include the target $c$, the logit convention, whether embeddings are tied, and whether the auxiliary is applied to raw or deployed logits. Reports should specify whether the logits are raw, row-centered deployed logits, or otherwise normalized. Scalar validation quality is most interpretable alongside transport diagnostics. These transport diagnostics include $P_Z^{99.9}$, $A_p^{99}$, p99 or p99.9 gradient norm, maximum-coordinate gradient tail, gradient-to-parameter norm ratio, gradient clipping frequency, auxiliary-to-CE gradient ratio, and auxiliary-to-CE cosine. Gradient-norm reports should also specify the logit convention and the pre-clip or post-clip convention. For tied embeddings, output-path and input-path auxiliary gradient norms make the transported update directly visible in diagnostic batches.

For fused or low-precision losses, forward loss equality is most informative when paired with backward-source audits. We recommend auditing the backward source against a high-precision reference. Reports should include relative gradient errors and cosines for both the total CE+Z logit gradient and the Z-loss source. The audit is strongest when the evaluation uses target-model logits from the relevant corpus distribution as well as random tensors, because the error depends on logit scale and distribution. When adaptive optimizers such as Adam or AdamW are used \citep{kingma2015adam,loshchilov2019decoupled}, at least one optimizer-facing check makes the transported effect visible. Examples include selected-parameter or whole-model gradient error and Adam-direction error after backpropagating the approximate source through the model graph on corpus batches. Informative transport and numerical reports should first identify nonfinite events through nonfinite-microbatch rate, nonfinite-gradient rate, and separate counts for NaN and Inf when such events occur. The reports should then describe finite-gradient behavior through finite-only gradient tails, thresholded gradient-event rates, maximum-coordinate gradient tails, and static loss-scale overflow margins when gradients may be stored in fp16. Finally, the reports should include optimizer-memory and loss-tail endpoints, such as Adam second-moment tails, Adam update-state tails, and loss-spike or training-loss-tail endpoints. For MoE routers, nominal $\lambda_R$ is most interpretable with the reduction convention, top-$k$, token-layer denominator, absolute per-decision coefficient, and scale relative to the token-layer mean. When capacity is enforced, reports should include the capacity policy, dropped-route fraction, expert load CV, router entropy, and router correction norm. These details make capacity-limited routing and dropless sparse kernels easier to compare \citep{gale2023megablocks}.

\newpage
\subsubsection{Limitations}

The experiments validate the proposed backward-transport framework across two text corpora, several scales in the GPT-2 and Pythia families, matched continued pretraining, implementation and precision audits, optimizer endpoints, and controlled top-$k$ MoE training. This controlled design isolates source and transport effects rather than reproducing every production fused kernel, expert-parallel configuration, or frontier-scale pretraining setup. These choices define the scope of the empirical validation; the underlying softmax-source identities apply more broadly. As the results show, intervention choice remains model- and regime-dependent and should be guided jointly by validation quality and transport diagnostics. Extending the evaluation to additional production stacks, model families, and training scales is a natural next step.

\end{document}